\theoremstyle{plain}
\newtheorem{theorem}{Theorem}[section]
\newtheorem{proposition}[theorem]{Proposition}
\newtheorem{lemma}[theorem]{Lemma}
\theoremstyle{definition}
\theoremstyle{remark}
\newtheorem*{theorem*}{Theorem}
\newtheorem*{proposition*}{Proposition}
\newtheorem*{lemma*}{Lemma}
\DeclareMathOperator{\vect}{vec}
\DeclareMathOperator{\spn}{span}
\DeclareMathOperator*{\plim}{plim}
\title[Rank Collapse Causes Over-Smoothing and Over-Correlation in Graph Neural Networks]{Rank Collapse Causes Over-Smoothing and Over-Correlation in Graph Neural Networks}
\author[Roth et al.]{%
Andreas Roth~(\Letter)\\
TU Dortmund University, Germany\\
\email{andreas.roth@tu-dortmund.de}\And
Thomas Liebig\\
TU Dortmund University, Germany,\\
Lamarr Institute for Machine Learning\\ and Artificial Intelligence\\
\email{thomas.liebig@tu-dortmund.de}
}
\begin{document}

\maketitle

\begin{abstract}
Our study reveals new theoretical insights into over-smoothing and feature over-correlation in graph neural networks. Specifically, we demonstrate that with increased depth, node representations become dominated by a low-dimensional subspace that depends on the aggregation function but not on the feature transformations. For all aggregation functions, the rank of the node representations collapses, resulting in over-smoothing for particular aggregation functions. Our study emphasizes the importance for future research to focus on rank collapse rather than over-smoothing. Guided by our theory, we propose a sum of Kronecker products as a beneficial property that provably prevents over-smoothing, over-correlation, and rank collapse. We empirically demonstrate the shortcomings of existing models in fitting target functions of node classification tasks. 
\end{abstract}

\section{Introduction}
Despite the great success of Graph Neural Networks (GNNs) for homophilic tasks~\citep{gasteiger2018predict,equibind,10.1145/3534678.3542598,roth2022forecasting}, their performance for more complex and heterophilic tasks is largely unsatisfying~\citep{yan2021two}. 
Two leading causes are over-smoothing~\citep{oono2019graph,nt2019revisiting} and feature over-correlation~\citep{jin2022feature}.
The conditions under which over-smoothing and over-correlation occur and in which cases they can be prevented are not well-understood in the general case.
Some works argue that all node representations converge to a constant state and have discarded the interaction with feature transformations in their analysis~\citep{li2018deeper,keriven2022not,rusch2022graph}.
Other works have proven that over-smoothing only occurs when the feature transformations satisfy some constraints~\citep{oono2019graph,cai2020note,zhou2021dirichlet}, and choosing suitable parameters can even cause an over-separation of node representations~\citep{oono2019graph,cai2020note,zhou2021dirichlet}. Over-correlation refers to all feature columns becoming overly correlated, which was only recently empirically observed~\citep{jin2022feature}. While \citet{jin2022feature} highlighted the occurrence even when over-smoothing is prevented, its theoretical understanding is limited.

This work clears up different views on over-smoothing and provides a theoretical investigation of the underlying reason behind both over-smoothing and over-correlation. We show that common graph convolutions induce invariant subspaces, each demonstrating distinct predefined behaviors. Critically, this behavior only depends on the spectrum of the aggregation function and not on the learnable feature transformations or the initial node features. 
When considering the limit in the linear case, a low-dimensional subspace dominates the results, leading to over-smoothing when this subspace aligns with smooth signals.
More importantly, node representations suffer from rank collapse for all aggregation functions, explaining the effect of over-correlation. 
This severely limits the expressivity of deep GNNs, as these models cannot fit target functions that require multiple signals to be amplified, which we empirically confirm also for the non-linear case.
We propose a simple but provably more expressive family of models based on the sum of Kronecker products (SKP) that can maintain the rank of its features.
We summarize our key contributions as follows:
\begin{itemize}
\item We provide a shared theoretical background of over-smoothing and over-correlation by showing the dominance of a fixed subspace induced by the aggregation function, allowing us to provide general proofs for both phenomena.
\item While our insights confirm that over-smoothing can be solved by choosing an appropriate aggregation function, we identify the phenomenon of a rank collapse of node representations as the crucial challenge for GNNs going forward.
\item To counteract this limited expressivity, we propose the utilization of a sum of Kronecker products (SKP) as a general property that provably solves rank collapse, which we empirically confirm.
\end{itemize} 

\section{Preliminaries}


\paragraph{Notation} We consider a graph $G=(\mathcal{V},\mathcal{E})$ consisting of a set of $n$ nodes $\mathcal{V}=\{v_1,\dots,v_n\}$ and a set of edges $\mathcal{E}$. The adjacency matrix $\mathbf{A}\in\{0,1\}^{n\times n}$ has binary entries indicating whether an edge between two nodes exists or not. We assume graphs to be irreducible and aperiodic.
We denote the set of nodes neighboring node $v_i$ as $\mathcal{N}_i = \{v_j | a_{ij} = 1\}$. The degree matrix $\mathbf{D}\in\mathbb{N}^{n\times n}$ is a diagonal matrix with each entry $d_{ii}=|\mathcal{N}_i|$ representing the number of neighboring nodes. 
For a given matrix $\mathbf{M}$, we denote its eigenvalues with $\lambda_1^\mathbf{M},\dots,\lambda_n^{\mathbf{M}}$ that are sorted with decreasing absolute value $|\lambda_1^\mathbf{M}|\geq\dots\geq|\lambda_n^{\mathbf{M}}|$.
A matrix is vectorized $vec(\mathbf{M})$ by stacking its columns into a single vector. The identity with $d$ dimensions is denoted by $\mathbf{I}_d\in\mathbb{R}^{d\times d}$.

\paragraph{Graph neural networks} Given a graph $G$ and a graph signal $\mathbf{X}\in\mathbb{R}^{n\times d}$ representing $d$ features at each node, the goal of a graph neural network (GNN) is to find node representations that can be used effectively within node classification, edge prediction or various other challenges. Within these, graph convolutional operations repeatedly update the state of each node by combining the state of all nodes with the states from their neighbors~\citep{gilmer2017neural}.

We consider graph convolutions that iteratively transform the previous node states of the form
\begin{equation}
\label{eq:update}
    \mathbf{X}^{(k+1)} = \Tilde{\mathbf{A}}\mathbf{X}^{(k)}\mathbf{W}^{(k)}\, .
\end{equation}
At each iteration $k$, we consider distinct feature transformations $\mathbf{W}^{(k)}\in\mathbb{R}^{d\times d}$ of the node representations and a homogeneous neighbor aggregation represented by $\Tilde{\mathbf{A}}\in\mathbb{R}^{n\times n}$. Popular instantiations covered by this notation include the graph convolutional network (GCN)~\citep{kipf2017semi}, the graph isomorphism network (GIN)~\citep{xu2018powerful}, and the graph attention network (GAT)~\citep{velivckovic2017graph} with a single head.

\section{Related Work}
\label{sec:related}

\paragraph{Over-smoothing in GNNs}
Over-smoothing arises when node representations $\mathbf{X}^{(k)}$ exhibit excessive similarity to one another as the number of layers $k$ increases. 
As our study extends prior analyses, we first outline the presently available theoretical insights concerning over-smoothing when considering the linear case.
\citet{li2018deeper} connected over-smoothing in GCNs to a special form of Laplacian smoothing
when ignoring the feature transformation. 
While some methods similarly ignore the feature transformation~\citep{rusch2022graph,keriven2022not}, its role is more complicated. 
The pioneering work of \citet{oono2019graph} showed that the feature transformation must not be ignored by bounding the distance
\begin{equation}
    d_\mathcal{M}(\mathbf{AX}\mathbf{W}) \leq \lambda_2^\mathbf{A}\sigma_1^\mathbf{W}d_\mathcal{M}(\mathbf{X})
\end{equation}
of the representations to a smooth subspace $\mathcal{M}$ that is induced by the dominant eigenvector $\mathbf{v}_1$. The bound uses the second largest eigenvalue $\lambda_2^\mathbf{A}$ and the largest singular value $\sigma_1^\mathbf{W}$ of $\mathbf{W}$. Intuitively, each aggregation step $\mathbf{A}$ reduces this distance, while $\mathbf{W}$ can increase the distance arbitrarily. Thus, they consequently claim potential over-separation when $\sigma_1^\mathbf{W}>\frac{1}{\lambda_2^\mathbf{A}}$, which refers to node representations differing strongly.
As an interpretable metric to determine the smoothness of a graph signal, \citet{cai2020note} introduced the Dirichlet energy
\begin{equation}
    E(\mathbf{X}) = tr(\mathbf{X}^T\mathbf{\Delta X}) = \frac{1}{2}\sum_{(i,j)\in \mathcal{E}}\Big\lVert\frac{\mathbf{x}_i}{\sqrt{d_i}}-\frac{\mathbf{x}_j}{\sqrt{d_j}}\Big\rVert^2_2
\end{equation}
using the symmetrically normalized graph Laplacian $\mathbf{\Delta} = \mathbf{I}_n-\mathbf{D}^{-\frac{1}{2}}\mathbf{A}\mathbf{D}^{-\frac{1}{2}}$.
A low energy value corresponds to similar node states.
Similarly to \citet{oono2019graph}, they provided the bound
\begin{equation}
\label{eq:dir_bound}
    E(\mathbf{AXW}) \leq \left(\lambda_2^\mathbf{A}\right)^2\left(\sigma_1^\mathbf{W}\right)^2E(\mathbf{X})
\end{equation}
for each convolution and prove an exponential convergence in the limit. 
As their proof again only holds in case $\sigma_1^\mathbf{W}\leq 1/\lambda_2^\mathbf{A}$, they similarly claim potential over-separation.
\citet{zhou2021dirichlet} provide a lower bound on the energy to show that the Dirichlet energy can go to infinite. 
All of these works conclude that over-smoothing only occurs with high probability and propose to find suitable feature transformations $\mathbf{W}$ to trade-off between over-smoothing and over-separation~\citep{oono2019graph,cai2020note,chen2020measuring,zhou2021dirichlet}.
\citet{di2022graph} and \citet{maskey2023fractional} studied the case when all feature transformations are restricted to be the same symmetric matrix.
It remains unclear which feature transformations reduce or increase the Dirichlet energy in the unconstrained case.
Another line of work found node representations to converge to a constant state~\cite{rusch2022graph,rusch2022gradient,rusch2023survey,wu2023demystifying}, though it remains unclear for which aggregation functions and feature transformations this holds.
While various methods to mitigate over-smoothing have been proposed, the underlying issue persists~\cite{Rong2020DropEdge,Zhao2020PairNorm,chen2020simple,roth2023transforming}.



\paragraph{Over-correlation in GNNs}
\citet{jin2022feature} empirically revealed an excessive correlation among node features with increased model depth, as measured by the Pearson correlation coefficient. This over-correlation results in redundant information and limits the performance of deep GNNs. 
The study emphasizes that over-smoothing always leads to over-correlation, but that over-correlation also arises independently.  Similarly, \citet{guo2022contranorm} empirically observed that feature columns in GNNs and Transformers have the tendency to converge to non-smooth states characterized by low effective rank, where most singular values are close to zero. This phenomenon, which they refer to as \textit{dimensional collapse}, has also been observed in the context of contrastive learning~\cite{DBLP:conf/iclr/GaoHTQWL19,jing2022understanding}. To address over-correlation, some works suggest incorporating it into the loss function~\citep{jin2022feature,guo2022orthogonal}, while others propose combining the results of message-passing with decorrelated signals from the previous state~\citep{guo2022contranorm,liu2023enhancing}. 
However, the underlying theory behind over-correlation remains largely unexplored, especially concerning feature transformations. 

\section{Over-Smoothing and Convergence to a Constant State}

\begin{figure}
     \centering
     \begin{subfigure}[b]{0.44\textwidth}
         \centering
        \def\svgwidth{\textwidth}
         \input{images/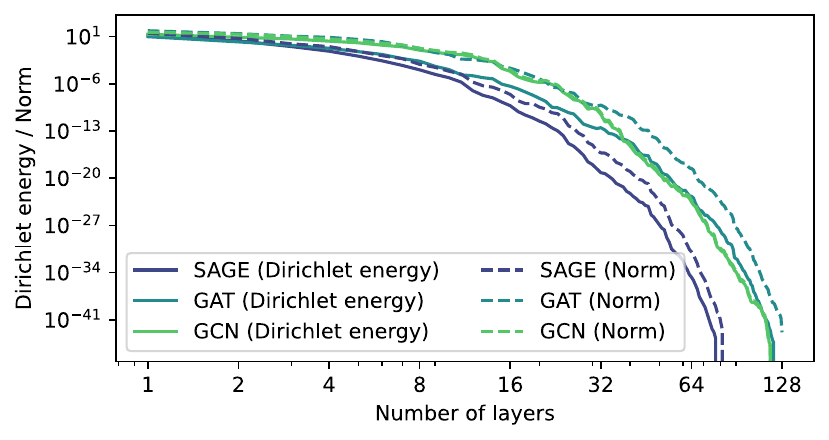_tex}
         \caption{Dirichlet energy converges to zero.}
     \end{subfigure}
     \hfill
     \begin{subfigure}[b]{0.44\textwidth}
         \centering
        \def\svgwidth{\textwidth}
         \input{images/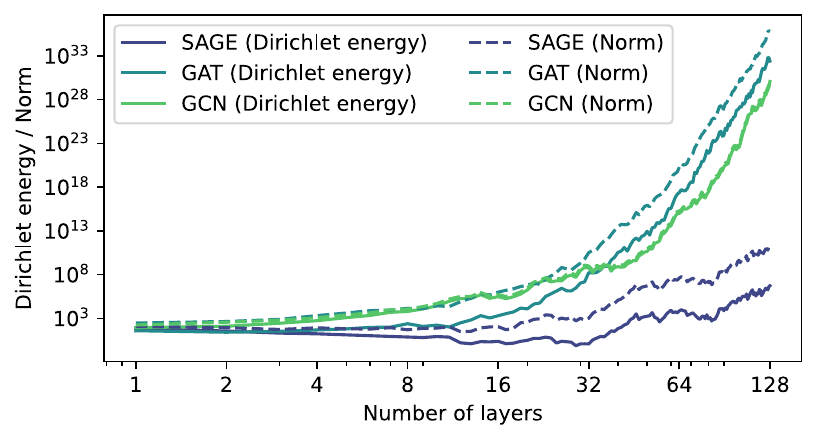_tex}
         \caption{Dirichlet energy does not converge.}
     \end{subfigure}
        \caption{Comparison of the Dirichlet energy $E_{rw}(\mathbf{X}^{(l)})$ and the norm $||\mathbf{X}^{(l)}||_F^2$ of the representations after $l$ layers for three commonly used models (Graph Convolutional Network (GCN)~\cite{kipf2017semi}, Graph Attention Network (GAT)~\cite{velivckovic2017graph}, GraphSAGE~\cite{hamilton2017inductive}) on the Cora dataset~\cite{mccallum2000automating}. For (a), we used random weights, for (b), we scaled these weights by a factor of $2$.}
        \label{fig:constant}
\end{figure}

We start by clarifying distinctions in the current comprehension of over-smoothing. Recent research~\cite{rusch2022gradient,rusch2022graph,rusch2023survey,wu2023demystifying} defines over-smoothing as the exponential convergence to a constant state using the Dirichlet energy $E_{rw}(\mathbf{X}) = tr(\mathbf{X}^T\mathbf{\Delta}_{rw}\mathbf{X})$ based on the random walk Laplacian $\mathbf{\Delta}_{rw}= \mathbf{I}_n - \mathbf{D}^{-1}\mathbf{A}$ as the constant state corresponds to its nullspace. In contrast, other studies provided theoretical insights into convergence toward the dominant eigenvector of the aggregation function~\cite{oono2019graph,cai2020note,zhou2021dirichlet}, which is not always constant, e.g., for the GCN~\cite{kipf2017semi}. 
We attribute these differences to the norm of $\mathbf{X}^{(k)}$ overshadowing insights from the Dirichlet energy.
Analogous to the Dirichlet energy (see Eq.~\ref{eq:dir_bound}), the norm is likewise bounded by the largest singular value of the feature transformation:
\begin{proposition} (Node representations vanish.)
    Let $\mathbf{\Tilde{A}}\in\mathbb{R}^{n\times n}=\mathbf{D}^{-\frac{1}{2}}\mathbf{A}\mathbf{D}^{-\frac{1}{2}}$, $\mathbf{W}\in\mathbb{R}^{d\times d}$ be any matrix with maximum singular value $\sigma_1^{\mathbf{W}}$, and $\phi$ a component-wise non-expansive mapping satisfying $\phi(\mathbf{0}) = \mathbf{0}$. Then,
    \begin{equation}
        ||\phi(\mathbf{\Tilde{A}XW})||_F \leq \sigma_1^{\mathbf{W}}\cdot ||\mathbf{X}||_F\, .
    \end{equation} 
\end{proposition}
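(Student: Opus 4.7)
The plan is to peel off the three operations appearing inside $\phi$ one at a time. First, I would use the hypotheses on $\phi$: component-wise non-expansiveness combined with $\phi(\mathbf{0})=\mathbf{0}$ implies that for every scalar $t$, $|\phi(t)| = |\phi(t)-\phi(0)| \leq |t|$. Squaring and summing this entrywise inequality over all entries of any matrix $\mathbf{M}$ gives $\|\phi(\mathbf{M})\|_F \leq \|\mathbf{M}\|_F$. Applied to $\mathbf{M} = \tilde{\mathbf{A}}\mathbf{X}\mathbf{W}$, this removes $\phi$ from the statement and reduces the claim to the linear bound $\|\tilde{\mathbf{A}}\mathbf{X}\mathbf{W}\|_F \leq \sigma_1^\mathbf{W}\,\|\mathbf{X}\|_F$.

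Second, I would invoke the standard submultiplicativity of the Frobenius norm against spectral norms on either side, namely $\|\mathbf{P}\mathbf{Q}\mathbf{R}\|_F \leq \|\mathbf{P}\|_2\,\|\mathbf{Q}\|_F\,\|\mathbf{R}\|_2$, obtained by iterating $\|\mathbf{A}\mathbf{X}\|_F \leq \|\mathbf{A}\|_2\|\mathbf{X}\|_F$ and $\|\mathbf{X}\mathbf{B}\|_F \leq \|\mathbf{X}\|_F\|\mathbf{B}\|_2$. Choosing $\mathbf{P}=\tilde{\mathbf{A}}$, $\mathbf{Q}=\mathbf{X}$, $\mathbf{R}=\mathbf{W}$, and noting that $\|\mathbf{W}\|_2 = \sigma_1^\mathbf{W}$ by definition, the remaining task is to show $\|\tilde{\mathbf{A}}\|_2 \leq 1$.

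Third, I would establish the spectral-norm bound on $\tilde{\mathbf{A}} = \mathbf{D}^{-1/2}\mathbf{A}\mathbf{D}^{-1/2}$. Since $\tilde{\mathbf{A}}$ is symmetric, its spectral norm equals $\max_i|\lambda_i^{\tilde{\mathbf{A}}}|$. The similarity $\mathbf{D}^{-1}\mathbf{A} = \mathbf{D}^{-1/2}\tilde{\mathbf{A}}\mathbf{D}^{1/2}$ shows that $\tilde{\mathbf{A}}$ shares its spectrum with the row-stochastic random-walk transition matrix $\mathbf{D}^{-1}\mathbf{A}$, whose spectral radius equals $1$ by Perron–Frobenius (using the irreducibility and aperiodicity assumed in the preliminaries). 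Hence $\|\tilde{\mathbf{A}}\|_2 = 1$, and chaining the three bounds yields the proposition.

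I do not foresee a serious obstacle: each step is a one-line manipulation, and the only nontrivial ingredient is the classical fact $\|\tilde{\mathbf{A}}\|_2 = 1$, which could equally well be cited rather than reproved. The conceptual point worth emphasizing in the write-up is that the contraction provided by $\tilde{\mathbf{A}}$ is \emph{not} used to its full strength here, so the bound is exactly parallel to the Dirichlet-energy bound in Eq.~\ref{eq:dir_bound} and makes it evident that whenever $\sigma_1^{\mathbf{W}} < 1$, the representations themselves shrink geometrically to $\mathbf{0}$, overshadowing any information extractable from the Dirichlet energy alone.
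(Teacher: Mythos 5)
Your proposal is correct and follows essentially the same route as the paper's proof: first strip off $\phi$ using non-expansiveness together with $\phi(\mathbf{0})=\mathbf{0}$, then bound $\lVert\tilde{\mathbf{A}}\mathbf{X}\mathbf{W}\rVert_F$ by the spectral norms of $\tilde{\mathbf{A}}$ and $\mathbf{W}$ acting on the Frobenius norm of $\mathbf{X}$. The only difference is cosmetic: the paper simply invokes $|\lambda_1^{\tilde{\mathbf{A}}}|=1$ as a hypothesis, whereas you supply the short justification via similarity to the row-stochastic matrix $\mathbf{D}^{-1}\mathbf{A}$ and Perron--Frobenius, which is a harmless (and slightly more self-contained) addition.
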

We present all proofs in Appendix~\ref{sec:proofs}.
When $\sigma_1^{\mathbf{W}^{(k)}}<1$ for all $\mathbf{W}^{(k)}$ this implies convergence to the zero matrix.
With all node states close to zero, the Dirichlet energy becomes also zero.


We compare the Dirichlet energy and the norm in Figure~\ref{fig:constant}, and observe a close alignment between both metrics. While we confirm that the Dirichlet energy converges to zero, our findings show that this solely results from the norm converging to zero.  
Repeating this experiment for scaled feature transformations to prevent vanishing norms, we observe the Dirichlet energy also avoids convergence to zero, as shown in Figure~\ref{fig:constant}. This elucidates deviations between directions claiming either convergence of the GCN to constant columns~\cite{rusch2022gradient,rusch2022graph,rusch2023survey} or values proportional to each node's degree~\cite{oono2019graph,cai2020note,zhou2021dirichlet}.

Since the norm of the node representations overshadows the Dirichlet energy, it seems insufficient to assess the energy of the unnormalized state. Other metrics, such as MAD~\cite{chen2020measuring} and SMV~\cite{liu2020towards}, already incorporate feature normalization to quantify over-smoothing. Furthermore, recent studies already consider the Dirichlet energy of the normalized state for studying over-smoothing~\cite{di2022graph,maskey2023fractional}.
Our insights prompt a deeper exploration of over-smoothing and its interaction with feature transformations, aiming to comprehend its causes better, mitigate its impacts, and advance the development of more powerful GNNs.

\section{Understanding Graph Convolutions using Invariant Subspaces}
\label{sec:explaining}
We delve into the underlying cause for over-smoothing and over-correlation while analyzing the role of feature transformations in these phenomena.
We consider the linearized update function from Eq.~\ref{eq:update}
employing any symmetric aggregation function $\Tilde{\mathbf{A}}\in\mathbb{R}^{n\times n}$ 
and time-inhomogeneous feature transformations $\mathbf{W}^{(k)}\in\mathbb{R}^{d\times d}$ for each layer $k$.
Leveraging the vectorized form
\begin{equation}
\vect(\Tilde{\mathbf{A}}\mathbf{X}^{(k)}\mathbf{W}^{(k)}) = (\mathbf{W}^{(k)^T}\otimes\Tilde{\mathbf{A}})\vect(\mathbf{X}^{(k)})= \mathbf{T}^{(k)}\vect(\mathbf{X}^{(k)})
\end{equation}
allows us to combine the aggregation and transformation steps through the Kronecker product $\mathbf{T}^{(k)}=(\mathbf{W}^{(k)^T}\otimes\Tilde{\mathbf{A}})\in\mathbb{R}^{nd\times nd}$, which is frequently used in theoretical studies~\cite{di2022graph,roth2022transforming,maskey2023fractional}. Further details can be found in Appendix~\ref{sec:proofs}.
For clarity, we will omit the transpose.
 
Initially, we demonstrate that for a fixed aggregation $\Tilde{\mathbf{A}}$, all transformations $\mathbf{T}^{(k)}$ induce the same invariant subspaces that only depend on the eigenvectors of $\Tilde{\mathbf{A}}$.
Formally, the vectorization operation enables the decomposition of vectorized node representations 
\begin{equation}
    \mathbf{T}^{(k)}\vect(\mathbf{X}^{(k)}) = \mathbf{T}^{(k)}\mathbf{S}\mathbf{c} = \sum_{i=1}^m \mathbf{T}^{(k)}\mathbf{S}_{(i)} \mathbf{c}_{(i)}
\end{equation}
into a linear combination $\mathbf{c}\in\mathbb{R}^{nd}$ of basis vectors $\mathbf{S}\in\mathbb{R}^{nd\times nd}$. We further split these into a sum of $n$ invariant components across disjoint subspaces $\mathcal{Q}_i = \spn(\mathbf{S}_{(i)}) \subset\mathbb{R}^{nd}$ with their direct sum $\bigoplus_{i=1}^n \mathcal{Q}_i = \mathbb{R}^{nd}$ covering the entire space.
The linearity of $\mathbf{T}^{(k)}$ allows us to apply the transformation on each subspace separately. 
We construct our bases as $\mathbf{S}_{(i)}=\mathbf{I}_d\otimes\mathbf{v}_i\in\mathbb{R}^{n \times d}$, utilizing the eigenvectors $\mathbf{v}_i$ of $\Tilde{\mathbf{A}}$, as these are invariant to any $\mathbf{T}^{(k)}$:
\begin{lemma}
\label{pr:symm_inv} (The subspaces are invariant to any $\mathbf{T}^{(k)}$.)
Let $\mathbf{T} = \mathbf{W}\otimes \Tilde{\mathbf{A}}$ with $\tilde{\mathbf{A}}\in\mathbb{R}^{n\times n}$ symmetric with eigenvectors $\mathbf{v}_1,\dots,\mathbf{v}_n$ and $\mathbf{W}\in\mathbb{R}^{d\times d}$ any matrix. 
Consider the subspaces $\mathcal{Q}_i = \spn(\mathbf{I}_d\otimes \mathbf{v}_i)$ for $i\in[n]$. Then, 
\begin{equation*}
    \forall\ i\in [n]\colon \mathbf{z} \in \mathcal{Q}_{i}\Rightarrow \mathbf{T}\mathbf{z} \in \mathcal{Q}_{i}\,.
\end{equation*}
\end{lemma}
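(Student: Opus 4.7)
The plan is to unpack the definition of $\mathcal{Q}_i$, write a generic element in a form amenable to the Kronecker mixed-product rule, and then read off that applying $\mathbf{T}$ keeps us in the same subspace. The whole argument is a two-line computation once the right factorization is in place; there is no deep obstacle, only the bookkeeping of which factor lives in which space.

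First I would recall that $\mathbf{I}_d\otimes\mathbf{v}_i\in\mathbb{R}^{nd\times d}$, and that its column span, which is $\mathcal{Q}_i$ by definition, consists exactly of vectors of the form $(\mathbf{I}_d\otimes\mathbf{v}_i)\mathbf{c}$ as $\mathbf{c}$ ranges over $\mathbb{R}^d$. Using the Kronecker identity $(A\otimes B)(C\otimes D)=(AC)\otimes(BD)$ with $A=\mathbf{I}_d$, $B=\mathbf{v}_i$, $C=\mathbf{c}$ (viewed as a $d\times 1$ matrix) and $D=1$, this equals $\mathbf{c}\otimes\mathbf{v}_i$. So I may take an arbitrary $\mathbf{z}\in\mathcal{Q}_i$ and write $\mathbf{z}=\mathbf{c}\otimes\mathbf{v}_i$ for some $\mathbf{c}\in\mathbb{R}^d$.

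Next I would apply the same mixed-product identity to $\mathbf{T}\mathbf{z}$:
\begin{equation*}
\mathbf{T}\mathbf{z}=(\mathbf{W}\otimes\tilde{\mathbf{A}})(\mathbf{c}\otimes\mathbf{v}_i)=(\mathbf{W}\mathbf{c})\otimes(\tilde{\mathbf{A}}\mathbf{v}_i).
\end{equation*}
Since $\tilde{\mathbf{A}}$ is symmetric and $\mathbf{v}_i$ is one of its eigenvectors, $\tilde{\mathbf{A}}\mathbf{v}_i=\lambda_i^{\tilde{\mathbf{A}}}\mathbf{v}_i$, so the right-hand side becomes $\lambda_i^{\tilde{\mathbf{A}}}(\mathbf{W}\mathbf{c})\otimes\mathbf{v}_i$. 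Reversing the factorization from step one, this equals $(\mathbf{I}_d\otimes\mathbf{v}_i)(\lambda_i^{\tilde{\mathbf{A}}}\mathbf{W}\mathbf{c})$, which is manifestly in the column span of $\mathbf{I}_d\otimes\mathbf{v}_i$, i.e.\ in $\mathcal{Q}_i$. Since $\mathbf{z}\in\mathcal{Q}_i$ was arbitrary, invariance follows.

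The only mildly subtle point is keeping the shapes straight (that $\mathbf{I}_d\otimes\mathbf{v}_i$ has $d$ columns and $nd$ rows, not the other way around, so that the natural coordinate vector to use is $\mathbf{c}\in\mathbb{R}^d$), and noting that symmetry of $\tilde{\mathbf{A}}$ is used only to guarantee a full eigenbasis so that the collection $\{\mathcal{Q}_i\}_{i=1}^n$ is a meaningful invariant decomposition; individually, each $\mathcal{Q}_i$ would already be invariant for any eigenvector of $\tilde{\mathbf{A}}$. I expect no real obstacle, so the write-up should be short and essentially just present the displayed chain of equalities above.
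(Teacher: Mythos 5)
Your proof is correct and follows essentially the same route as the paper's: write $\mathbf{z}=(\mathbf{I}_d\otimes\mathbf{v}_i)\mathbf{c}$, apply the mixed-product property together with $\tilde{\mathbf{A}}\mathbf{v}_i=\lambda_i^{\tilde{\mathbf{A}}}\mathbf{v}_i$, and refactor to exhibit the result as $(\mathbf{I}_d\otimes\mathbf{v}_i)$ times a new coefficient vector. Your intermediate identification $(\mathbf{I}_d\otimes\mathbf{v}_i)\mathbf{c}=\mathbf{c}\otimes\mathbf{v}_i$ is a slightly cleaner (and dimensionally more careful) way of doing the same bookkeeping.
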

This discovery is pivotal in our investigation, enabling us to dissect each subspace individually and relate their differences.




\subsection{Relating Dynamics in Subspaces}
Our investigation now delves into the effect $\mathbf{T}^{(k)}$ has on each these subspaces. 
Previous work considered the impact of graph convolutions on coefficients $\mathbf{c}$, deriving coarse bounds based on the singular values of $\mathbf{W}$ were found~\cite{oono2019graph,cai2020note,zhou2021dirichlet} (see Section~\ref{sec:related}).  
We instead analyze how $\mathbf{T}^{(k)}$ alters the basis vectors $\mathbf{S}_{(i)}$ of each subspace while maintaining the coefficients $\mathbf{c}$ constant, leading to a more streamlined analysis of the underlying process.
When applying only the aggregation function $\Tilde{\mathbf{A}}$, each basis $\mathbf{S}_{(i)}$ gets scaled by the corresponding eigenvalue $\lambda_i^{\Tilde{\mathbf{A}}}$.
By construction, all transformations $\mathbf{W}^{(k)}$ act the same on each subspace, nullifying when assessing the relative norm change amongst pairs of subspaces:
\begin{theorem}
\label{pr:symm_iter} (The relative behavior is fixed.)
Let $\mathbf{T} = \mathbf{W}\otimes \Tilde{\mathbf{A}}$ with $\tilde{\mathbf{A}}\in\mathbb{R}^{n\times n}$ symmetric with eigenvectors $\mathbf{v}_1,\dots,\mathbf{v}_n$ and eigenvalues $\lambda_1^{\Tilde{\mathbf{A}}},\dots,\lambda_n^{\Tilde{\mathbf{A}}}$ and $\mathbf{W}\in\mathbb{R}^{d\times d}$ any matrix. 
Consider the bases $\mathbf{S}_{(i)} = \mathbf{I}_d\otimes \mathbf{v}_i$ and $\mathbf{S}_{(j)} = \mathbf{I}_d\otimes \mathbf{v}_j$ for $i,j\in[n]$. Then,    
\begin{equation}
    \frac{\lVert\mathbf{T}\mathbf{S}_{(i)}\rVert_F}{\lVert\mathbf{T}\mathbf{S}_{(j)}\rVert_F} 
    = \frac{|\lambda_i^{\Tilde{\mathbf{A}}}|}{ |\lambda_j^{\Tilde{\mathbf{A}}}|}\, .
\end{equation}
\end{theorem}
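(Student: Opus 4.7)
The plan is to exploit the mixed-product property of Kronecker products to factor $\mathbf{T}\mathbf{S}_{(i)}$ into a clean tensor form, and then apply the multiplicativity of the Frobenius norm under Kronecker products so that all dependence on $\mathbf{W}$ cancels in the ratio.

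First, I would recall that the eigenvectors $\mathbf{v}_1,\dots,\mathbf{v}_n$ of the symmetric matrix $\tilde{\mathbf{A}}$ can (and will) be taken with unit $\ell_2$-norm, so that $\tilde{\mathbf{A}}\mathbf{v}_i = \lambda_i^{\tilde{\mathbf{A}}}\mathbf{v}_i$ and $\lVert\mathbf{v}_i\rVert_2 = 1$ for every $i$. Then apply the mixed-product identity $(\mathbf{A}\otimes\mathbf{B})(\mathbf{C}\otimes\mathbf{D}) = (\mathbf{AC})\otimes(\mathbf{BD})$ to
\begin{equation*}
\mathbf{T}\,\mathbf{S}_{(i)} = (\mathbf{W}\otimes\tilde{\mathbf{A}})(\mathbf{I}_d\otimes\mathbf{v}_i) = \mathbf{W}\otimes(\tilde{\mathbf{A}}\mathbf{v}_i) = \lambda_i^{\tilde{\mathbf{A}}}\,(\mathbf{W}\otimes\mathbf{v}_i).
\end{equation*}
This single identity is the crux of the argument; everything else is bookkeeping.

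Next, I would invoke the well-known identity $\lVert \mathbf{A}\otimes\mathbf{B}\rVert_F = \lVert\mathbf{A}\rVert_F\,\lVert\mathbf{B}\rVert_F$, which follows immediately from expanding the Frobenius norm via $\lVert\mathbf{M}\rVert_F^2 = \mathrm{tr}(\mathbf{M}^T\mathbf{M})$ and using the trace-product rule $\mathrm{tr}(\mathbf{A}^T\mathbf{A})\,\mathrm{tr}(\mathbf{B}^T\mathbf{B})$ for Kronecker products. Combined with the previous display and $\lVert\mathbf{v}_i\rVert_2=1$, this yields
\begin{equation*}
\lVert \mathbf{T}\mathbf{S}_{(i)}\rVert_F = |\lambda_i^{\tilde{\mathbf{A}}}|\,\lVert\mathbf{W}\rVert_F\,\lVert\mathbf{v}_i\rVert_2 = |\lambda_i^{\tilde{\mathbf{A}}}|\,\lVert\mathbf{W}\rVert_F,
\end{equation*}
and the analogous expression for $j$. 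Finally, forming the ratio cancels the common factor $\lVert\mathbf{W}\rVert_F$, leaving exactly $|\lambda_i^{\tilde{\mathbf{A}}}|/|\lambda_j^{\tilde{\mathbf{A}}}|$, as claimed.

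There is essentially no obstacle here once one spots the factorization; the only mild care needed is the normalization convention on the $\mathbf{v}_i$ (without it, the ratio would carry an extra $\lVert\mathbf{v}_i\rVert_2/\lVert\mathbf{v}_j\rVert_2$ factor), and the implicit assumption that $\lambda_j^{\tilde{\mathbf{A}}}\neq 0$ so that the denominator is nonzero. The conceptual message of the proof is that $\mathbf{W}$ enters $\mathbf{T}\mathbf{S}_{(i)}$ as an $i$-independent tensor factor, so its contribution to the norm is identical across all invariant subspaces and therefore disappears from any cross-subspace comparison — which is precisely the reason rank collapse dynamics are governed purely by the spectrum of $\tilde{\mathbf{A}}$.
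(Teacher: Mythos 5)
Your proof is correct and follows essentially the same route as the paper's: apply the mixed-product property to get $\mathbf{T}\mathbf{S}_{(i)} = \mathbf{W}\otimes(\lambda_i^{\tilde{\mathbf{A}}}\mathbf{v}_i)$, then use $\lVert\mathbf{A}\otimes\mathbf{B}\rVert_F = \lVert\mathbf{A}\rVert_F\lVert\mathbf{B}\rVert_F$ so that $\lVert\mathbf{W}\rVert_F$ cancels in the ratio. Your explicit remarks on unit-normalizing the eigenvectors and on requiring $\lambda_j^{\tilde{\mathbf{A}}}\neq 0$ are minor points the paper leaves implicit, but the argument is the same.
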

We underline the significance of this key property. As the right-hand side is independent of $\mathbf{W}$ with equality, the amplification or reduction of individual subspaces cannot be affected by any feature transformation or the graph signal. The outcome is predefined exclusively by the eigenvalues of the aggregation function. The functions learnable by each operation are inherently restricted. 

\subsection{Implications in the Limit}

Considering the case where graph convolutions are repeatedly applied, we extend our results directly to the iterated case and different transformations $\mathbf{T}^{(k)}$ at each layer $k$, yielding the dominance of fixed subspaces:

\begin{proposition} (Fixed subspaces dominate.)
Let $\mathbf{T}^{(k)} = \mathbf{W}^{(k)}\otimes \Tilde{\mathbf{A}}$ with $\tilde{\mathbf{A}}\in\mathbb{R}^{n\times n}$ symmetric with eigenvectors $\mathbf{v}_1,\dots,\mathbf{v}_n$ and eigenvalues $\lambda_1^{\Tilde{\mathbf{A}}},\dots,\lambda_n^{\Tilde{\mathbf{A}}}$ and $\mathbf{W}^{(k)}\in\mathbb{R}^{d\times d}$ any matrix. 
Consider the bases $\mathbf{S}_{(i)} = \mathbf{I}_d\otimes \mathbf{v}_i$ for $i\in[n]$. Then,   
    \begin{equation*}\lim_{l\to\infty}\frac{\lVert\mathbf{T}^{(l)}\dots\mathbf{T}^{(1)}\mathbf{S}_{(i)}\rVert_F}{\max_{p}\lVert\mathbf{T}^{(l)}\dots\mathbf{T}^{(1)}\mathbf{S}_{(p)}\rVert_F} = \lim_{l\to\infty}\frac{|\lambda^{\Tilde{\mathbf{A}}}_i|^l\cdot\lVert\mathbf{S}_{(i)}\rVert_F}{\max_p|\lambda^{\Tilde{\mathbf{A}}}_p|^l\cdot\lVert\mathbf{S}_{(p)}\rVert_F} = \begin{cases}
    1, & \mathrm{if} |\lambda^{\Tilde{\mathbf{A}}}_i|=|\lambda^{\Tilde{\mathbf{A}}}_{1}|\\ 0, & \mathrm{otherwise}
    \end{cases}\end{equation*}
with convergence rate $\frac{|\lambda^{\Tilde{\mathbf{A}}}_i|}{|\lambda^{\Tilde{\mathbf{A}}}_1|}$.
\end{proposition}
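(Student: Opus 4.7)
The plan is to reduce both limits in the proposition to the same explicit expression in $|\lambda_i^{\Tilde{\mathbf{A}}}|^l$ and $\lVert \mathbf{v}_i\rVert_2$, from which the piecewise limit drops out. The engine is the mixed-product identity $(\mathbf{A}\otimes\mathbf{B})(\mathbf{C}\otimes\mathbf{D})=(\mathbf{AC})\otimes(\mathbf{BD})$, which collapses an entire chain of $\mathbf{T}^{(k)}$'s acting on $\mathbf{S}_{(i)}$ into a single Kronecker product and cleanly separates the eigenvalue scaling from the transformation factor.

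First, I would show by induction on $l$ that
\begin{equation*}
\mathbf{T}^{(l)}\dots\mathbf{T}^{(1)}\mathbf{S}_{(i)} \;=\; (\lambda_i^{\Tilde{\mathbf{A}}})^l\cdot \bigl(\mathbf{W}^{(l)^T}\cdots\mathbf{W}^{(1)^T}\bigr)\otimes \mathbf{v}_i\,.
\end{equation*}
The base case is $(\mathbf{W}^{(1)^T}\otimes \Tilde{\mathbf{A}})(\mathbf{I}_d\otimes \mathbf{v}_i) = \mathbf{W}^{(1)^T}\otimes(\Tilde{\mathbf{A}}\mathbf{v}_i) = \lambda_i^{\Tilde{\mathbf{A}}}\,\mathbf{W}^{(1)^T}\otimes\mathbf{v}_i$; the inductive step is another application of the same mixed-product rule. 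Using $\lVert \mathbf{M}\otimes \mathbf{v}\rVert_F = \lVert \mathbf{M}\rVert_F\lVert \mathbf{v}\rVert_2$, taking Frobenius norms yields
\begin{equation*}
\lVert \mathbf{T}^{(l)}\dots\mathbf{T}^{(1)}\mathbf{S}_{(i)}\rVert_F \;=\; |\lambda_i^{\Tilde{\mathbf{A}}}|^l\cdot F_l\cdot \lVert \mathbf{v}_i\rVert_2, \qquad F_l := \lVert \mathbf{W}^{(l)^T}\cdots \mathbf{W}^{(1)^T}\rVert_F,
\end{equation*}
where the key point is that $F_l$ is independent of $i$. This is really just the iterated version of Theorem~\ref{pr:symm_iter}: across all invariant subspaces, the learnable transformations contribute the same scalar factor.

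Second, I would observe that both ratios in the proposition collapse to the same expression. In the leftmost ratio $F_l$ cancels between numerator and every summand inside $\max_p$; in the middle ratio $\lVert \mathbf{S}_{(i)}\rVert_F = \lVert \mathbf{I}_d\otimes\mathbf{v}_i\rVert_F = \sqrt{d}\lVert\mathbf{v}_i\rVert_2$ and the factor $\sqrt{d}$ cancels the same way. Both equal
\begin{equation*}
\frac{|\lambda_i^{\Tilde{\mathbf{A}}}|^l\lVert\mathbf{v}_i\rVert_2}{\max_p |\lambda_p^{\Tilde{\mathbf{A}}}|^l\lVert\mathbf{v}_p\rVert_2}\,,
\end{equation*}
already before taking $l\to\infty$. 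Factoring out $|\lambda_1^{\Tilde{\mathbf{A}}}|^l$, indices with $|\lambda_i^{\Tilde{\mathbf{A}}}|<|\lambda_1^{\Tilde{\mathbf{A}}}|$ decay geometrically at rate $|\lambda_i^{\Tilde{\mathbf{A}}}|/|\lambda_1^{\Tilde{\mathbf{A}}}|$ while the denominator is bounded below by the contribution of the top eigenvalue, giving limit $0$ with the stated rate; indices with $|\lambda_i^{\Tilde{\mathbf{A}}}|=|\lambda_1^{\Tilde{\mathbf{A}}}|$ contribute the constant $\lVert\mathbf{v}_i\rVert_2 / \max_{p:|\lambda_p^{\Tilde{\mathbf{A}}}|=|\lambda_1^{\Tilde{\mathbf{A}}}|}\lVert\mathbf{v}_p\rVert_2$.

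The only subtlety is reconciling that constant with the claimed value of exactly $1$: this requires that all eigenvectors of $\Tilde{\mathbf{A}}$ have equal $\ell_2$-norm. Because $\Tilde{\mathbf{A}}$ is symmetric, its eigenvectors can be (and implicitly are) chosen orthonormal, which settles this. One should also note the mild non-degeneracy assumption $F_l>0$, without which every subspace collapses jointly and the ratio is $0/0$. Beyond these bookkeeping points the argument is entirely mechanical; the conceptual content is already carried by the single-step Theorem~\ref{pr:symm_iter}, extended across $l$ steps by the commutativity-like behavior of $\otimes$ under composition.
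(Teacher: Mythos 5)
Your proof is correct and follows essentially the same route as the paper's: collapse the product $\mathbf{T}^{(l)}\cdots\mathbf{T}^{(1)}\mathbf{S}_{(i)}$ via the mixed-product property into a single Kronecker product, use $\lVert\mathbf{A}\otimes\mathbf{B}\rVert_F=\lVert\mathbf{A}\rVert_F\lVert\mathbf{B}\rVert_F$ to cancel the shared factor $\lVert\mathbf{W}^{(l)}\cdots\mathbf{W}^{(1)}\rVert_F$, and read off the limit from the eigenvalue ratio. Your two added remarks --- that the limit equals exactly $1$ only under an orthonormal (or at least equal-norm) choice of eigenvectors, and that one must assume $\lVert\mathbf{W}^{(l)}\cdots\mathbf{W}^{(1)}\rVert_F>0$ to avoid a $0/0$ ratio --- are correct points that the paper leaves implicit.
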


As depth increases, only signals corresponding to the largest eigenvalue of the aggregation function significantly influence representations, while those in other subspaces become negligible.
The feature transformations only affect the scale of all subspaces by a shared scalar.
As previous studies did not have insights into this dominance of a subspace, they claimed potential over-separation of node representations when this scalar diverges to infinity~\cite{oono2019graph,cai2020note,zhou2021dirichlet}. 
Contrarily, considering the normalized representations $\frac{\mathbf{X}^{(l)}}{||\mathbf{X}^{(l)}||_F}$ seems to be necessary given our insights. 
We now assume that $|\lambda_2^{\Tilde{\mathbf{A}}}| < |\lambda_1^{\Tilde{\mathbf{A}}}|$ is strict, so that a single subspace dominates. 

While the bases in all subspaces are the same, the distribution of the signals might be heavily skewed. For example, the signal in $\mathcal{Q}_1$ might align with the nullspace of $\mathbf{W}^{(l)}\dots\mathbf{W}^{(1)}$, while the signal in other subspaces might not. 
However, the probability of $\mathbf{W}^{(l)}\dots\mathbf{W}^{(1)}$ satisfying this property converges to zero at an exponential rate with respect to the Lebesgue measure.
This allows us to formally show the convergence of the node representations to $\mathcal{Q}_1$ in probability: 
\begin{theorem} (Representations converge in probability to a fixed set.)
\label{pr:symm_subspace}
Let $\mathbf{X}^{(k+1)} = \Tilde{\mathbf{A}}\mathbf{X}^{(k)}\mathbf{W}^{(k)}$ with $\tilde{\mathbf{A}}\in\mathbb{R}^{n\times n}$ symmetric with eigenvectors $\mathbf{v}_1,\dots,\mathbf{v}_n$ and $\mathbf{W}^{(k)}\in\mathbb{R}^{d\times d}$ any matrix. Consider the subspaces $\mathcal{Q}_i = \spn(\mathbf{I}_d\otimes \mathbf{v}_i)$ for $i\in[n]$. Then, 
    \begin{equation*}
    \forall\ \epsilon > 0\ldotp\exists\ N \in\mathbb{N}\ldotp\forall\ l\in\mathbb{N} \ \textrm{with}\ l > N\ldotp\exists\ \mathbf{m}^{(l)}\in\mathcal{Q}_1 \colon \mathbb{P}\left(\left\lVert\frac{\mathbf{X}^{(l)}}{\lVert\mathbf{X}^{(l)}\rVert_F} - \mathbf{m}^{(l)}\right\rVert_2 > \epsilon\right)=0\,.\end{equation*}
\end{theorem}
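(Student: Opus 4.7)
The plan is to work in vectorized coordinates and exploit the subspace decomposition already established. Writing $\mathbf{z}^{(l)} := \vect(\mathbf{X}^{(l)})$ and $\mathbf{T}^{(k)} := \mathbf{W}^{(k)}\otimes\tilde{\mathbf{A}}$, I first decompose the initial state as $\mathbf{z}^{(0)} = \sum_{i=1}^n \mathbf{q}_i^{(0)}$ with $\mathbf{q}_i^{(0)} \in \mathcal{Q}_i$. Because $\tilde{\mathbf{A}}$ is symmetric, the eigenvectors $\mathbf{v}_i$ can be chosen orthonormal, making the $\mathcal{Q}_i$ pairwise orthogonal in $\mathbb{R}^{nd}$. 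Lemma~\ref{pr:symm_inv} then yields $\mathbf{z}^{(l)} = \sum_i \mathbf{q}_i^{(l)}$ with each $\mathbf{q}_i^{(l)} \in \mathcal{Q}_i$, so the only task is to compare the sizes of the $\mathbf{q}_i^{(l)}$ with $\mathbf{q}_1^{(l)}$.

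Next, I would write down the dynamics inside each subspace. Parametrizing $\mathbf{q}_i^{(l)} = (\mathbf{I}_d \otimes \mathbf{v}_i)\mathbf{c}_i^{(l)}$ and applying the Kronecker mixed-product identity gives the recursion $\mathbf{c}_i^{(l+1)} = \lambda_i^{\tilde{\mathbf{A}}}\,\mathbf{W}^{(l)}\mathbf{c}_i^{(l)}$, so $\mathbf{c}_i^{(l)} = (\lambda_i^{\tilde{\mathbf{A}}})^l \mathbf{M}^{(l)} \mathbf{c}_i^{(0)}$ with $\mathbf{M}^{(l)} := \mathbf{W}^{(l-1)}\cdots\mathbf{W}^{(0)}$. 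The crucial observation is that the \emph{same} product $\mathbf{M}^{(l)}$ acts on every subspace's coefficient vector, exactly as anticipated by the preceding subspace-dominance proposition.

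With this in hand, I set $\mathbf{m}^{(l)} := \mathbf{q}_1^{(l)}/\lVert \mathbf{z}^{(l)}\rVert_F \in \mathcal{Q}_1$. Orthogonality gives $\bigl\lVert \mathbf{z}^{(l)}/\lVert \mathbf{z}^{(l)}\rVert_F - \mathbf{m}^{(l)}\bigr\rVert_2^2 = \sum_{i \geq 2}\lVert \mathbf{q}_i^{(l)}\rVert_2^2 / \lVert \mathbf{z}^{(l)}\rVert_2^2$, and the step-two formula bounds the right-hand side by $\sum_{i \geq 2}\bigl(|\lambda_i^{\tilde{\mathbf{A}}}|/|\lambda_1^{\tilde{\mathbf{A}}}|\bigr)^{2l}\, \lVert \mathbf{M}^{(l)}\mathbf{c}_i^{(0)}\rVert_2^2 / \lVert \mathbf{M}^{(l)}\mathbf{c}_1^{(0)}\rVert_2^2$. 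Under the strict gap $|\lambda_2^{\tilde{\mathbf{A}}}| < |\lambda_1^{\tilde{\mathbf{A}}}|$ assumed just before the theorem, the geometric prefactor drives each summand to zero. The remaining degenerate event $\{\mathbf{M}^{(l)}\mathbf{c}_1^{(0)} = 0\}$ is cut out by a polynomial equation in the entries of the $\mathbf{W}^{(k)}$ and hence has Lebesgue measure zero, producing the $\mathbb{P} = 0$ in the conclusion.

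The main obstacle is controlling the ratio $\lVert \mathbf{M}^{(l)}\mathbf{c}_i^{(0)}\rVert/\lVert \mathbf{M}^{(l)}\mathbf{c}_1^{(0)}\rVert$ uniformly in $l$, so that it cannot grow faster than $(|\lambda_1^{\tilde{\mathbf{A}}}|/|\lambda_i^{\tilde{\mathbf{A}}}|)^{l}$ --- i.e.\ ruling out not merely exact nullspace alignment but also pathological amplification in $\mathbf{M}^{(l)}$. For continuously distributed $\mathbf{W}^{(k)}$, Oseledets-style reasoning gives a common top Lyapunov exponent $\mu$ with $\tfrac{1}{l}\log \lVert \mathbf{M}^{(l)}\mathbf{c}\rVert \to \mu$ almost surely for every fixed nonzero $\mathbf{c}$, forcing the ratio to be subexponential and therefore dominated by the eigenvalue-gap decay. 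Making this concentration explicit --- or, equivalently, restricting to a measure-one set of $\mathbf{W}$-sequences on which the bound is uniform in $l$ --- is the delicate technical step; everything else reduces to Kronecker-product algebra and direct appeals to the lemma and proposition already established.
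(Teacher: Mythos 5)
Your overall architecture matches the paper's: decompose $\vect(\mathbf{X}^{(0)})$ along the invariant subspaces $\mathcal{Q}_i$, take $\mathbf{m}^{(l)}$ to be the normalized $\mathcal{Q}_1$-component, use orthogonality of the $\mathbf{v}_i$ to reduce the distance to a ratio of subspace norms, and pull out the factor $(|\lambda_i^{\tilde{\mathbf{A}}}|/|\lambda_1^{\tilde{\mathbf{A}}}|)^{l}$. However, the step you flag as ``the delicate technical step'' --- controlling $\lVert\mathbf{M}^{(l)}\mathbf{c}_i^{(0)}\rVert/\lVert\mathbf{M}^{(l)}\mathbf{c}_1^{(0)}\rVert$ uniformly in $l$ --- is precisely the content of the proof, and you leave it open. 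The Oseledets route you sketch does not close it as stated: it requires distributional assumptions on the $\mathbf{W}^{(k)}$ (stationarity/ergodicity, integrability of $\log^{+}\lVert\mathbf{W}\rVert$) that are not in the theorem, it yields only asymptotic almost-sure statements for each fixed vector rather than the uniform-in-$l$ bound you need, and the theorem's probability is over the choice of weights with respect to Lebesgue measure, not over a stochastic weight process. Your measure-zero argument for $\{\mathbf{M}^{(l)}\mathbf{c}_1^{(0)}=0\}$ only excises exact nullspace alignment, which, as you yourself note, is not sufficient.

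The paper's device for closing this gap is deterministic linear algebra rather than ergodic theory: it expands the initial state in the basis $\mathbf{N}\otimes\mathbf{V}$, where $\mathbf{N}$ holds the \emph{right singular vectors of the accumulated product} $\mathbf{P}^{(l)}=\mathbf{W}^{(l)}\cdots\mathbf{W}^{(1)}$ (so the coefficients $\mathbf{c}=\mathbf{c}^{(l)}$ depend on $l$ and on the weights). In that basis the action of $\mathbf{P}^{(l)}$ on the coefficients is diagonal with entries $\sigma_r/\sigma_1\le 1$, which appear in the same form in both the numerator and the denominator; the numerator is bounded above by $\sqrt{nd}\,(\lambda_2/\lambda_1)^l\max_{r,i}|c_{r,i}|\sqrt{(n-1)d}$ and the denominator below by $|c_{1,1}|$, so the whole estimate collapses to a condition of the form $(\lambda_2/\lambda_1)^l \cdot \mathrm{const} < |c_{1,1}|/\max_{r,i}|c_{r,i}|$. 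The ``pathological amplification'' you worry about is thereby converted into the event that the initial state's component along the top right-singular direction of $\mathbf{P}^{(l)}$ (tensored with $\mathbf{v}_1$) is exponentially small, and the probability statement is exactly the claim that the Lebesgue volume of weight products realizing that event shrinks to zero. To repair your proof you would need to either import this change of basis or supply a genuinely different uniform control on the condition number's interaction with $\mathbf{c}_1^{(0)}$; as written, the argument is incomplete.
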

Thus, the probability of avoiding convergence to $\mathcal{Q}_1$ exponentially converges to zero with increased model depth. While the chances always remain non-zero, finding these solutions is particularly challenging for an optimization method based on gradient descent. The heavily constrained solution space directly limits the model's expressivity. 
In addition, any slight change in the input can cause the signal to no longer align with the null space of the feature transformations, making a method with this property meaningless for practical applications.

We emphasize that smoothing has yet to be part of our analysis. Our analysis and proofs are more general, relying solely on the symmetry of the aggregation function $\Tilde{\mathbf{A}}$. 
Smoothing occurs only for specific aggregations for which $\mathcal{Q}_1$ consists of smooth signals.
The symmetrically normalized aggregation function $\mathbf{D}^{-\frac{1}{2}}\mathbf{A}\mathbf{D}^{-\frac{1}{2}}$ has this property, as its dominant eigenvector is $\mathbf{v}_1=\mathbf{D}^{\frac{1}{2}}\mathbf{1}$ and $\lambda_1^{\Tilde{\mathbf{A}}}$ is unique~\citep{von2007tutorial}. 
These novel insights enable us to frame our findings using the well-established Dirichlet energy of the normalized node representations, which corresponds to the Rayleigh quotient $E(\mathbf{X}/||\mathbf{X}||_F)=tr(\mathbf{X}^T\mathbf{\Delta X})/||\mathbf{X}||_F^2$~\cite{cai2020note}. The Dirichlet energy of the normalized state was also proposed by~\citet{di2022graph}. 
The key property needed is the equivalence of the dominating subspace $\mathcal{Q}_1$ and the null space of $\mathbf{I}_d\otimes \mathbf{\Delta}$, allowing us to provide the novel proof for feature transformations in probability:
\begin{proposition} (Over-smoothing happens for all $\mathbf{W}^{(k)}$ in probability.)
\label{pr:symm_dir}
Let $\mathbf{X}^{(k+1)} = \Tilde{\mathbf{A}}\mathbf{X}^{(k)}\mathbf{W}^{(k)}$ with $\tilde{\mathbf{A}} = \mathbf{D}^{-\frac{1}{2}}\mathbf{A}\mathbf{D}^{-\frac{1}{2}}$, $\mathbf{W}^{(k)}\in\mathbb{R}^{d\times d}$, and $E(\mathbf{X}) = tr(\mathbf{X}^T\mathbf{\Delta}\mathbf{X})$ for $\mathbf{\Delta} = \mathbf{I} - \Tilde{\mathbf{A}}$.
Then, 
    \begin{equation}
    \plim_{l\to\infty} E\left(\frac{\mathbf{X}^{(l)}}{\lVert\mathbf{X}^{(l)}\rVert_F}\right) = 0
    \end{equation}
with convergence rate $\left(\frac{\lambda_2^\mathbf{\Tilde{A}}}{\lambda_1^\mathbf{\Tilde{A}}}\right)^2$.
\end{proposition}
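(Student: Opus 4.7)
The strategy is to reduce the Dirichlet energy statement to the subspace convergence already established in Theorem~\ref{pr:symm_subspace}, by identifying the dominating subspace $\mathcal{Q}_1$ with the null space of the energy functional. First I would recall that for $\tilde{\mathbf{A}} = \mathbf{D}^{-1/2}\mathbf{A}\mathbf{D}^{-1/2}$ the Perron--Frobenius theorem (together with irreducibility and aperiodicity assumed in the preliminaries) gives a simple dominant eigenvalue $\lambda_1^{\tilde{\mathbf{A}}} = 1$ with eigenvector $\mathbf{v}_1 = \mathbf{D}^{1/2}\mathbf{1}/\|\mathbf{D}^{1/2}\mathbf{1}\|$, and that $\mathbf{\Delta} = \mathbf{I}_n - \tilde{\mathbf{A}}$ shares the same eigenvectors with eigenvalues $1-\lambda_i^{\tilde{\mathbf{A}}}$; in particular $\mathbf{\Delta}\mathbf{v}_1 = 0$.

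Next I would translate the matrix-form energy into the Kronecker picture used throughout Section~\ref{sec:explaining}. For any $\mathbf{M} \in \mathbb{R}^{n\times d}$, the identity $\mathrm{tr}(\mathbf{M}^T \mathbf{\Delta}\mathbf{M}) = \vect(\mathbf{M})^T (\mathbf{I}_d \otimes \mathbf{\Delta})\vect(\mathbf{M})$ shows that $E$ is a quadratic form whose null space on $\mathbb{R}^{nd}$ is exactly $\spn(\mathbf{I}_d \otimes \mathbf{v}_1) = \mathcal{Q}_1$, and whose second-smallest eigenvalue is $1-\lambda_2^{\tilde{\mathbf{A}}}$. Consequently, for any unit vector $\mathbf{y} \in \mathbb{R}^{nd}$, writing $\mathbf{y} = \mathbf{y}_1 + \mathbf{y}_{\perp}$ with $\mathbf{y}_1 \in \mathcal{Q}_1$ and $\mathbf{y}_{\perp} \in \mathcal{Q}_1^{\perp} = \bigoplus_{i\ge 2}\mathcal{Q}_i$, we have $E(\mathbf{y}) \le \|\mathbf{y}_{\perp}\|_2^2$ (the largest eigenvalue of $\mathbf{\Delta}$ is at most $2$, but I only need a bounded multiple).

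The key step is then the rate. From Theorem~\ref{pr:symm_iter} and the ``Fixed subspaces dominate'' proposition, the components of $\vect(\mathbf{X}^{(l)})/\|\mathbf{X}^{(l)}\|_F$ in each $\mathcal{Q}_i$ with $i \ge 2$ decay, relative to the component in $\mathcal{Q}_1$, at rate $|\lambda_i^{\tilde{\mathbf{A}}}|/|\lambda_1^{\tilde{\mathbf{A}}}|$ provided the $\mathcal{Q}_1$-component does not vanish. Squaring this gives $\|\mathbf{y}_{\perp}^{(l)}\|_2^2 = \mathcal{O}((\lambda_2^{\tilde{\mathbf{A}}}/\lambda_1^{\tilde{\mathbf{A}}})^{2l})$ on the event that the $\mathcal{Q}_1$-projection is non-degenerate. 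Combining with the quadratic-form bound yields $E(\mathbf{X}^{(l)}/\|\mathbf{X}^{(l)}\|_F) = \mathcal{O}((\lambda_2^{\tilde{\mathbf{A}}}/\lambda_1^{\tilde{\mathbf{A}}})^{2l})$ on that event.

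Finally, to obtain convergence in probability I would invoke exactly the probabilistic argument already used to prove Theorem~\ref{pr:symm_subspace}: the Lebesgue-measure of parameter choices for which $\mathbf{W}^{(l)}\cdots\mathbf{W}^{(1)}$ annihilates the $\mathcal{Q}_1$-component of the signal decays exponentially, so the bad event has probability tending to zero. For every $\epsilon > 0$, the event $\{E(\mathbf{X}^{(l)}/\|\mathbf{X}^{(l)}\|_F) > \epsilon\}$ is contained (for $l$ large enough) in this vanishing bad event, giving $\plim_{l\to\infty} E(\mathbf{X}^{(l)}/\|\mathbf{X}^{(l)}\|_F) = 0$ with the claimed rate. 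The main obstacle I anticipate is ensuring the rate constant is uniform in $l$: one has to verify that the $\mathcal{Q}_1$-coefficient of the normalized iterate stays bounded away from zero on a sequence of events whose probabilities converge to one, which requires a careful look at how the product $\mathbf{W}^{(l)}\cdots\mathbf{W}^{(1)}$ interacts with the fixed direction $\mathbf{v}_1$ in the Kronecker decomposition.
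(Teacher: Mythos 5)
Your proposal is correct and rests on the same two ideas as the paper's proof: the identification of the dominating subspace $\mathcal{Q}_1=\spn(\mathbf{I}_d\otimes\mathbf{v}_1)$ with the null space of the quadratic form $\mathbf{I}_d\otimes\mathbf{\Delta}$, and the probabilistic exclusion of weight products whose null space absorbs the $\mathcal{Q}_1$-component of the signal. The execution differs in one respect worth noting. You treat the proposition as a corollary of Theorem~\ref{pr:symm_subspace}: since $\vect(\mathbf{X}^{(l)})/\lVert\mathbf{X}^{(l)}\rVert_F$ is within $\epsilon$ of some $\mathbf{m}^{(l)}\in\mathcal{Q}_1$ with probability one (for large $l$), its $\mathcal{Q}_1^\perp$-component has norm at most $\epsilon$, and $E(\mathbf{y})\le \lVert\mathbf{I}_d\otimes\mathbf{\Delta}\rVert_2\,\lVert\mathbf{y}_\perp\rVert_2^2\le 2\epsilon^2$; this is a valid and more modular argument. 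The paper instead redoes the computation from scratch: it expands the Rayleigh quotient in the bases $(\mathbf{U}\otimes\mathbf{V})$ and $(\mathbf{N}\otimes\mathbf{V})$ from the eigendecomposition of $\Tilde{\mathbf{A}}$ and the SVD of $\mathbf{W}^{(l)}\cdots\mathbf{W}^{(1)}$, obtaining the explicit ratio $\sum_{p,r}c_{p,r}^2(\sigma_r^2/\sigma_1^2)(\lambda_p/\lambda_1)^{2l}(1-\lambda_p)\,/\,\sum_{p,r}c_{p,r}^2(\sigma_r^2/\sigma_1^2)(\lambda_p/\lambda_1)^{2l}$, kills the $p=1$ terms in the numerator via $1-\lambda_1=0$, and reads off the explicit coefficient condition $c_{1,1}^2/\max_{p,r}c_{p,r}^2 > 2(\lambda_2/\lambda_1)^{2l}/\epsilon$ whose complement shrinks to a null set. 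What the paper's direct computation buys is precisely the quantitative handle you flag as your anticipated obstacle: it makes explicit how the product $\mathbf{W}^{(l)}\cdots\mathbf{W}^{(1)}$ enters only through the ratios $\sigma_r/\sigma_1$ and how weak the non-degeneracy condition on $c_{1,1}$ needs to be (it relaxes exponentially in $l$), which is what justifies both the convergence in probability and the uniformity of the rate $(\lambda_2^{\Tilde{\mathbf{A}}}/\lambda_1^{\Tilde{\mathbf{A}}})^2$. If you route through Theorem~\ref{pr:symm_subspace} instead, you should state that the rate is inherited from the $\epsilon$--$l$ dependence in that theorem's proof rather than from its bare statement, which is only qualitative.
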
 

\subsection{Extending the Analysis to Arbitrary Aggregations}
We now drop the assumption of a symmetric graph and allow for arbitrary edge weights between any pair of nodes, a concept recently proposed to account for attraction or repulsion between nodes~\cite{chien2020adaptive,bo2021beyond,yan2021two,bodnar2022neural}. While prior investigations (e.g., ~\citep{oono2019graph,cai2020note,zhou2021dirichlet}) focused on specific types of aggregations that were not easily generalizable, our approach provides a more versatile framework.
We find that Theorem~\ref{pr:symm_iter} can be extended to arbitrary aggregation matrices $\Tilde{\mathbf{A}}$, revealing that the dominating signal for these graph convolutions depends solely on the dominating signal of $\Tilde{\mathbf{A}}$: 

\begin{proposition}
\label{pr:jordan_dom}(informal)
For any matrices $\Tilde{\mathbf{A}}\in\mathbb{R}^{n\times n}$ and $\mathbf{W}\in\mathbb{R}^{d\times d}$, the relative amplification of subspaces only depends on $\Tilde{\mathbf{A}}$ and not on $\mathbf{W}$. 
\end{proposition}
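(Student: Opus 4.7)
The plan is to lift the argument of Theorem~\ref{pr:symm_iter} by replacing the orthogonal eigendecomposition of $\Tilde{\mathbf{A}}$ with its (real) Jordan decomposition $\Tilde{\mathbf{A}} = \mathbf{P}\mathbf{J}\mathbf{P}^{-1}$. Let $\mathbf{v}_1,\ldots,\mathbf{v}_n$ be the columns of $\mathbf{P}$, i.e.\ a basis of generalized eigenvectors grouped by Jordan chain; the chains carve $\mathbb{R}^n$ into generalized eigenspaces $E_\lambda$, each of which is $\Tilde{\mathbf{A}}$-invariant. First I would show, exactly as in Lemma~\ref{pr:symm_inv}, that the lifted subspaces $\mathcal{R}_\lambda = \spn(\mathbf{I}_d\otimes E_\lambda) \subseteq \mathbb{R}^{nd}$ are invariant under every $\mathbf{T}^{(k)} = \mathbf{W}^{(k)}\otimes\Tilde{\mathbf{A}}$: any element $\mathbf{c}\otimes\mathbf{u}$ with $\mathbf{u}\in E_\lambda$ maps under $\mathbf{T}^{(k)}$ to $(\mathbf{W}^{(k)}\mathbf{c})\otimes(\Tilde{\mathbf{A}}\mathbf{u})$, whose second factor again lies in $E_\lambda$.

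The core of the argument is then a one-line computation. Setting $\mathbf{S}_{(i)} = \mathbf{I}_d\otimes\mathbf{v}_i$ and applying the mixed-product rule $(\mathbf{A}\otimes\mathbf{B})(\mathbf{C}\otimes\mathbf{D}) = (\mathbf{AC})\otimes(\mathbf{BD})$ telescopically across layers gives
\begin{equation*}
\mathbf{T}^{(l)}\cdots\mathbf{T}^{(1)}\mathbf{S}_{(i)} = \bigl(\mathbf{W}^{(l)}\cdots\mathbf{W}^{(1)}\bigr)\otimes\bigl(\Tilde{\mathbf{A}}^l\mathbf{v}_i\bigr),
\end{equation*}
after which the Frobenius identity $\lVert\mathbf{A}\otimes\mathbf{B}\rVert_F = \lVert\mathbf{A}\rVert_F\lVert\mathbf{B}\rVert_F$ yields
\begin{equation*}
\frac{\lVert\mathbf{T}^{(l)}\cdots\mathbf{T}^{(1)}\mathbf{S}_{(i)}\rVert_F}{\lVert\mathbf{T}^{(l)}\cdots\mathbf{T}^{(1)}\mathbf{S}_{(j)}\rVert_F} = \frac{\lVert\Tilde{\mathbf{A}}^l\mathbf{v}_i\rVert_F}{\lVert\Tilde{\mathbf{A}}^l\mathbf{v}_j\rVert_F}.
\end{equation*}
The entire product of feature transformations cancels between numerator and denominator, so the relative amplification depends only on $\Tilde{\mathbf{A}}$ and on the choice of basis vectors, which is exactly the informal claim.

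To convert this into a dominance statement analogous to the symmetric case, I would invoke the standard asymptotics of Jordan powers: if $\mathbf{v}_i$ lies in a Jordan chain of height $s_i$ attached to eigenvalue $\lambda_i$, then $\lVert\Tilde{\mathbf{A}}^l\mathbf{v}_i\rVert_F = \Theta\bigl(l^{s_i-1}|\lambda_i|^l\bigr)$, so the ratio above tends to a finite constant precisely on the chains realizing the maximum $|\lambda_i|$ (and, among those, the maximum height) and to $0$ elsewhere. The main obstacle is the bookkeeping forced by defectiveness: an individual line $\spn(\mathbf{v}_i)$ is no longer $\Tilde{\mathbf{A}}$-invariant, so the correct ``subspace'' to attach to an eigenvalue is the full generalized eigenspace $E_\lambda$, and ties between Jordan blocks of equal $|\lambda|$ but different heights must be resolved. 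Working with the real Jordan form (pairing complex-conjugate blocks into real $2\times 2$ rotation-scaling blocks) keeps every computation inside $\mathbb{R}^{nd}$ and avoids a detour through complexified coordinates.
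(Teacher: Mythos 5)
Your core calculation is exactly the paper's: the paper's Appendix proof applies the mixed-product rule and the identity $\lVert\mathbf{A}\otimes\mathbf{B}\rVert_F=\lVert\mathbf{A}\rVert_F\lVert\mathbf{B}\rVert_F$ to $(\mathbf{W}\otimes\Tilde{\mathbf{A}})(\mathbf{I}_d\otimes\mathbf{P}_{(i)})$ so that $\lVert\mathbf{W}\rVert_F$ cancels in the ratio, leaving $\lVert\Tilde{\mathbf{A}}\mathbf{P}_{(i)}\rVert_F/\lVert\Tilde{\mathbf{A}}\mathbf{P}_{(j)}\rVert_F$; your telescoped multi-layer version and Jordan-block scaffolding are an elaboration of the same idea rather than a different route. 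One small imprecision worth noting in your asymptotics: the exponent in $\lVert\Tilde{\mathbf{A}}^l\mathbf{v}_i\rVert_F=\Theta(l^{s_i-1}|\lambda_i|^l)$ is governed by the position of $\mathbf{v}_i$ within its Jordan chain (the eigenvector at the bottom gives exponent $0$), not by the total chain height, but this does not affect the stated conclusion that the ratio is independent of $\mathbf{W}$.
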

Importantly, invariant bases always exist, as given by the Jordan normal form~\cite{strang2006linear}.
Over-smoothing is prevented by selecting or learning an aggregation matrix $\Tilde{\mathbf{A}}$ with a well-suited spectrum.

\subsection{Stochastic Aggregation Functions}
Our findings can be readily extended to cases where we know about the dominance of a particular signal for the aggregation function.
This applies to all row-stochastic aggregations, also referred to as the weighted mean, where the values in each row of $\Tilde{\mathbf{A}}\in\mathbb{R}^{n\times n}$ are non-negative and sum up to one.
This analysis covers models such as the Graph Attention Network~(GAT)~\citep{velivckovic2017graph,brody2021attentive} with a single attention head.
Edge weights may be dynamically computed based on their adjacent nodes, so we consider the case of time-inhomogeneous aggregation matrices $\Tilde{\mathbf{A}}{(0)} \neq \Tilde{\mathbf{A}}{(1)} \neq \dots \neq \Tilde{\mathbf{A}}{(l)}$. 
Assuming the underlying graph is ergodic, each $\Tilde{\mathbf{A}}^{(k)}$ possesses the same right-eigenvector $\mathbf{p}_1 = \mathbf{1}$ with corresponding eigenvalue $\lambda_1^{\Tilde{\mathbf{A}}^{(k)}} = 1$. All other eigenvalues are strictly less in absolute value as given by the Perron-Frobenius theorem~\cite{perron1907theorie} and stochastic processes~\citep{gallager1997discrete}. 
Given a minimum edge weight bound $\epsilon > 0$, we build on the insight from \citet{wu2023demystifying} that any product $\Pi_{k=0}^\infty \Tilde{\mathbf{A}}^{{(k)}}$ also converges to a matrix with constant rows.
This subspace dominates the representations as given by Proposition~\ref{pr:jordan_dom}, and we show over-smoothing using the Dirichlet energy in probability to exclude the cases where the nullspace of $\mathbf{W}^{(1)}\dots\mathbf{W}^{(l)}$ aligns with the signal in the dominating subspace:
\begin{proposition}
\label{pr:gat_smooth}(GAT and Graph Transformer over-smooth.)
Let $\mathbf{X}^{(k+1)} = \Tilde{\mathbf{A}}^{(k)}\mathbf{X}^{(k)}\mathbf{W}^{(k)}$ where all $\tilde{\mathbf{A}}^{(k)}\in\mathbb{R}^{n\times n}$ row-stochastic, representing the same ergodic graph with minimum non-zero value $>\epsilon$ for some $\epsilon>0$ and $\mathbf{W}_i^{(k)}\in\mathbb{R}^{d\times d}$.
Then, 
    $$\plim_{l\to\infty} E_{\mathrm{rw}}\left(\frac{\mathbf{X}^{(l)}}{\lVert\mathbf{X}^{(l)}\rVert_F}\right) = 0\,.$$
\end{proposition}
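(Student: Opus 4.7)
The approach is to exploit the commutativity between left-multiplication by each $\tilde{\mathbf{A}}^{(k)}$ and right-multiplication by each $\mathbf{W}^{(k)}$. Unrolling the recurrence yields
\begin{equation}
\mathbf{X}^{(l)} = \tilde{\mathbf{P}}^{(l)}\mathbf{X}^{(0)}\tilde{\mathbf{W}}^{(l)}, \qquad \tilde{\mathbf{P}}^{(l)} = \tilde{\mathbf{A}}^{(l-1)}\cdots\tilde{\mathbf{A}}^{(0)}, \quad \tilde{\mathbf{W}}^{(l)} = \mathbf{W}^{(0)}\cdots\mathbf{W}^{(l-1)},
\end{equation}
so the aggregation dynamics decouples cleanly from the transformations, mirroring Theorem~\ref{pr:symm_iter}. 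Row-stochasticity gives $\tilde{\mathbf{A}}^{(k)}\mathbf{1} = \mathbf{1}$ for every $k$, so the subspace $\mathcal{Q}_1 = \spn(\mathbf{I}_d\otimes\mathbf{1})$ is invariant under every step and plays the role of the dominant subspace from Proposition~\ref{pr:jordan_dom}.

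Next, I would invoke the cited convergence result of Wu et al.\ (itself built on Perron--Frobenius bounds for ergodic chains): under ergodicity together with the uniform lower bound $\epsilon$ on the nonzero entries, $\tilde{\mathbf{P}}^{(l)}$ converges geometrically to a rank-one matrix with constant rows, so
\begin{equation}
\tilde{\mathbf{P}}^{(l)} = \mathbf{1}\mathbf{q}^{(l)\,T} + \mathbf{R}^{(l)}, \qquad \|\mathbf{R}^{(l)}\|_2 \le c\,\rho^l \text{ with } \rho < 1.
\end{equation}
Substituting decomposes the representations into a $\mathcal{Q}_1$-part with identical rows and a geometrically vanishing residual,
\begin{equation}
\mathbf{X}^{(l)} = \mathbf{1}\bigl(\mathbf{q}^{(l)\,T}\mathbf{X}^{(0)}\tilde{\mathbf{W}}^{(l)}\bigr) + \mathbf{R}^{(l)}\mathbf{X}^{(0)}\tilde{\mathbf{W}}^{(l)}.
\end{equation}
Because $\mathbf{\Delta}_{\mathrm{rw}}\mathbf{1}=\mathbf{0}$, the first term lies in the kernel of $E_{\mathrm{rw}}$, while standard operator-norm inequalities bound the remaining contribution (including the cross-term with $\mathbf{\Delta}_{\mathrm{rw}}$) by $O(\rho^l\|\tilde{\mathbf{W}}^{(l)}\|_2^2\|\mathbf{X}^{(0)}\|_F^2)$. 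Symmetrically, the denominator satisfies $\|\mathbf{X}^{(l)}\|_F \ge \sqrt{n}\,\|\mathbf{q}^{(l)\,T}\mathbf{X}^{(0)}\tilde{\mathbf{W}}^{(l)}\|_2 - O(\rho^l\|\tilde{\mathbf{W}}^{(l)}\|_2)$.

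The main obstacle, inherited directly from Theorem~\ref{pr:symm_subspace} and Proposition~\ref{pr:symm_dir}, is to prevent the normalization from collapsing at the same rate as the numerator, which would happen if $\mathbf{q}^{(l)\,T}\mathbf{X}^{(0)}$ aligned with the small-singular-value subspaces of $\tilde{\mathbf{W}}^{(l)}$. I would reuse the measure-theoretic argument from the symmetric case: for every fixed $\delta>0$, the event $\{\|\mathbf{q}^{(l)\,T}\mathbf{X}^{(0)}\tilde{\mathbf{W}}^{(l)}\|_2 < \delta\,\|\tilde{\mathbf{W}}^{(l)}\|_2\}$ is contained in a lower-dimensional algebraic subvariety of the parameter space of the $\mathbf{W}^{(k)}$ and is therefore Lebesgue-null, so its probability tends to zero. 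Combining this genericity with the geometric decay of $\mathbf{R}^{(l)}$ yields $\plim_{l\to\infty} E_{\mathrm{rw}}\bigl(\mathbf{X}^{(l)}/\|\mathbf{X}^{(l)}\|_F\bigr) = 0$ as claimed.
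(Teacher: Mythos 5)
Your proposal is correct and follows essentially the same route as the paper: both unroll the recurrence so the stochastic products and weight products decouple, both invoke the Wu et al.\ result that $\tilde{\mathbf{A}}^{(l-1)}\cdots\tilde{\mathbf{A}}^{(0)}$ converges geometrically to a rank-one matrix with identical rows annihilated by $\mathbf{\Delta}_{\mathrm{rw}}$, and both dispose of the degenerate alignment of the dominant signal with the small-singular-value directions of $\mathbf{W}^{(l)}\cdots\mathbf{W}^{(1)}$ by the same measure-theoretic genericity argument. The only difference is bookkeeping: you use the additive split $\mathbf{1}\mathbf{q}^{(l)T}+\mathbf{R}^{(l)}$ directly, whereas the paper carries out the equivalent computation in an eigenbasis of the Laplacian.
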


\section{Rank Collapse and Over-Correlation}
We have seen that the choice of the aggregation function leads to over-smoothing, which can, therefore, be prevented by selecting an aggregation with a different spectrum. 
However, for any aggregation function $\Tilde{\mathbf{A}}$, the representations are dominated by some low-dimensional subspace $\mathcal{Q}_1 = \spn(\mathbf{I}_d\otimes \mathbf{V}_1)$ based on some matrix $\mathbf{V}_1\in\mathbb{R}^{n\times j}$ and the algebraic multiplicity $j$ of all eigenvalues with the maximal absolute value $|\lambda_1^{\Tilde{\mathbf{A}}}|$:

\begin{theorem}(informal)
    For all $\Tilde{\mathbf{A}}\in\mathbb{R}^{n\times n}$ and for $\mathbf{W}^{(l)}\dots\mathbf{W}^{(1)}$ in probability, in the limit $l\to\infty$, the rank of $\mathbf{X}^{(l)}$ is bounded the joint algebraic multiplicity of all eigenvalues of $\Tilde{\mathbf{A}}$ with magnitude $|\lambda_1|$.    
\end{theorem}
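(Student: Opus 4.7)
The plan is to lift the Jordan normal form of $\Tilde{\mathbf{A}}$ through the Kronecker-product formulation used in Section~\ref{sec:explaining}, identify a dominant invariant subspace of the form $\mathcal{Q}_1=\spn(\mathbf{I}_d\otimes\mathbf{V}_1)$, and then translate inclusion in $\mathcal{Q}_1$ into a rank bound on the un-vectorized iterate. Concretely, write $\Tilde{\mathbf{A}}=\mathbf{P}\mathbf{J}\mathbf{P}^{-1}$ in Jordan form and let $\mathbf{V}_1\in\mathbb{R}^{n\times j}$ collect the $j$ columns of $\mathbf{P}$ corresponding to all Jordan blocks whose eigenvalue has magnitude $|\lambda_1^{\Tilde{\mathbf{A}}}|$; these columns are linearly independent, so $\mathrm{rank}(\mathbf{V}_1)=j$.

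The first step is to argue that the normalized vectorized iterate enters an arbitrarily small neighborhood of $\mathcal{Q}_1$. The product of updates telescopes as $\mathbf{T}^{(l)}\cdots\mathbf{T}^{(1)}=(\mathbf{W}^{(l)}\cdots\mathbf{W}^{(1)})\otimes\Tilde{\mathbf{A}}^l$, so $\vect(\mathbf{X}^{(l)})=[(\mathbf{W}^{(l)}\cdots\mathbf{W}^{(1)})\otimes\Tilde{\mathbf{A}}^l]\vect(\mathbf{X}^{(0)})$. Expanding $\Tilde{\mathbf{A}}^l$ via the Jordan form and splitting $\vect(\mathbf{X}^{(0)})$ according to the direct sum of the dominant and non-dominant invariant subspaces of $\mathbf{T}^{(k)}$, the ratio of the norm of the non-dominant component of $\vect(\mathbf{X}^{(l)})$ to that of its $\mathcal{Q}_1$-component decays like $(|\lambda_{j+1}^{\Tilde{\mathbf{A}}}|/|\lambda_1^{\Tilde{\mathbf{A}}}|)^l$, up to polynomial factors from Jordan blocks of size greater than one. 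This is essentially Proposition~\ref{pr:jordan_dom} applied to $\mathcal{Q}_1$ versus its complement: the feature transformations $\mathbf{W}^{(k)}$ act identically on all subspaces and therefore cannot alter the relative amplification produced by $\Tilde{\mathbf{A}}^l$.

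The second step converts convergence into $\mathcal{Q}_1$ into a rank bound. Any $\mathbf{y}\in\mathcal{Q}_1$ can be written as $\mathbf{y}=(\mathbf{I}_d\otimes\mathbf{V}_1)\vect(\mathbf{C})=\vect(\mathbf{V}_1\mathbf{C})$ for some $\mathbf{C}\in\mathbb{R}^{j\times d}$, via the identity $\vect(\mathbf{A}\mathbf{B}\mathbf{D})=(\mathbf{D}^T\otimes\mathbf{A})\vect(\mathbf{B})$. Hence, whenever $\vect(\mathbf{X}^{(l)})/\lVert\mathbf{X}^{(l)}\rVert_F$ converges to a point of $\mathcal{Q}_1$, the normalized matrix iterate $\mathbf{X}^{(l)}/\lVert\mathbf{X}^{(l)}\rVert_F$ converges to a matrix of the form $\mathbf{V}_1\mathbf{C}^{(l)}$ of rank at most $\mathrm{rank}(\mathbf{V}_1)=j$. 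Since $\{\mathbf{M}\in\mathbb{R}^{n\times d}:\mathrm{rank}(\mathbf{M})\leq j\}$ is closed, the same bound carries to the limit, and rank is invariant under nonzero rescaling, so $\mathrm{rank}(\mathbf{X}^{(l)})\leq j$ in the limit.

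The principal obstacle is twofold. (i) \emph{The probabilistic qualifier.} One must exclude the event that the $\mathcal{Q}_1$-component of $\vect(\mathbf{X}^{(l)})$ already vanishes, i.e.\ that $\mathbf{W}^{(l)}\cdots\mathbf{W}^{(1)}$ annihilates the initial signal in the dominant subspace. For any absolutely continuous distribution on the entries of each $\mathbf{W}^{(k)}$, this is an algebraic condition cut out by polynomial equations in finitely many variables, hence a Lebesgue-null set, mirroring the argument used for Theorem~\ref{pr:symm_subspace}. (ii) \emph{Jordan-block bookkeeping.} Blocks of size greater than one introduce polynomial prefactors of the form $\binom{l}{k-1}\lambda^{l-k+1}$ into $\mathbf{J}^l$, and these must be tracked carefully both within dominant blocks, where they do not destroy inclusion in $\mathcal{Q}_1$ since all generalized eigenvectors are columns of $\mathbf{V}_1$, and when comparing dominant with non-dominant blocks, where polynomial growth is beaten by strict exponential decay. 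Once both are handled, the rank bound follows in probability.
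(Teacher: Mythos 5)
Your proposal follows essentially the same route as the paper: take the Jordan form of $\Tilde{\mathbf{A}}$, collect the generalized eigenvectors of all dominant-magnitude eigenvalues into $\mathbf{V}_1\in\mathbb{R}^{n\times j}$, invoke the convergence in probability of the normalized iterate to $\mathcal{Q}_1=\spn(\mathbf{I}_d\otimes\mathbf{V}_1)$ (the paper cites its Theorem~\ref{pr:symm_subspace}), and convert membership in $\mathcal{Q}_1$ into a rank bound via $\vect^{-1}\bigl((\mathbf{I}_d\otimes\mathbf{V}_1)\mathbf{c}\bigr)=\mathbf{V}_1\,\vect^{-1}(\mathbf{c})$, closedness of the rank-$\leq j$ variety, and scale invariance of rank. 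Your explicit bookkeeping of the polynomial prefactors from non-trivial Jordan blocks and of the Lebesgue-null bad set is a welcome elaboration of steps the paper's short proof leaves implicit (it cites Theorem~\ref{pr:symm_subspace}, which is stated only for symmetric $\Tilde{\mathbf{A}}$, so the non-diagonalizable case does need exactly the care you describe).
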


This implies that, regardless of the aggregation function chosen, the representations collapse in probability to a low-rank state bounded by the algebraic multiplicity $j$ of the dominant eigenvalue.
This poses a significant challenge for the expressiveness of deep graph neural networks, as node representations can only contain, at most, $j$ relevant features.
It also leads to a perfect correlation among features if $\mathrm{rank}(\mathbf{V}_i) = 1$, e.g., when the dominant eigenvalue is unique. \citet{jin2022feature} empirically observed this phenomenon and termed it over-correlation. 
This study reveals that rank collapse is the fundamental issue underlying over-smoothing and over-correlation.
These findings also establish a connection between these phenomena and recent theoretical insights on deep Transformer models, where rank collapse was also independently observed~\cite{dong2021attention,noci2022signal}.




\section{Preventing Rank Collapse with a Sum of Kronecker Products}
The core issue leading to rank collapse is the fact that the transformation matrix $\mathbf{T}=\mathbf{W}\otimes \mathbf{A}$ is composed of a single Kronecker product. To counteract rank collapse and consequently over-smoothing and over-correlation, we need to construct a $\mathbf{T}$ that cannot be decomposed into a single Kronecker product.
We highlight that any matrix
\begin{equation}
    \mathbf{T} = (\mathbf{W}_1\otimes \mathbf{A}_1) + (\mathbf{W}_2\otimes \mathbf{A}_2) + \dots + (\mathbf{W}_p\otimes \mathbf{A}_p)
\end{equation}
can be decomposed into a finite sum of Kronecker products (SKP), as concatenation is a special case~\citep{van1993approximation,cao2021sum}. 
As each term can amplify a different signal per feature column, we show that $d$ terms are sufficient to amplify arbitrary signals across $d$ columns: 
\begin{theorem} (informal) A sum of $d$ Kronecker products $\mathbf{T}=\sum_1^d (\mathbf{W}_i\otimes \Tilde{\mathbf{A}}_i)$ can amplify independent signals for at least $d$ different columns.
\label{pr:skp}
\end{theorem}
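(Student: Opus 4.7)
The plan is to exhibit an explicit construction that witnesses the claim, and then read off the dynamics directly from the block structure this construction produces. This converts the problem from a delicate rank statement about an SKP into $d$ independent copies of the single-aggregation analysis already carried out in Section~\ref{sec:explaining}.

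\textbf{Construction.} I would pick $\mathbf{W}_i = \mathbf{e}_i \mathbf{e}_i^\top$, the rank-one projector onto the $i$-th standard basis vector of $\mathbb{R}^d$, for $i = 1, \dots, d$, and let $\tilde{\mathbf{A}}_1, \dots, \tilde{\mathbf{A}}_d$ be aggregation matrices whose respective dominant eigenvectors $\mathbf{v}_1, \dots, \mathbf{v}_d$ are linearly independent and whose dominant eigenvalues share a common magnitude (e.g.\ obtained by rescaling). Using $\vect(\tilde{\mathbf{A}}_i \mathbf{X} \mathbf{W}_i) = (\mathbf{W}_i \otimes \tilde{\mathbf{A}}_i)\,\vect(\mathbf{X})$, the operator $\mathbf{T} = \sum_{i=1}^d \mathbf{W}_i \otimes \tilde{\mathbf{A}}_i$ becomes block-diagonal in the vectorised representation, with $i$-th diagonal block equal to $\tilde{\mathbf{A}}_i$.

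\textbf{Decoupled column dynamics.} In matrix form the iterate is $\mathbf{X}^{(k+1)} = \sum_{i=1}^d \tilde{\mathbf{A}}_i \mathbf{X}^{(k)} \mathbf{W}_i$, and since $\mathbf{X}\mathbf{e}_i\mathbf{e}_i^\top$ keeps only the $i$-th column of $\mathbf{X}$ and zeros the rest, this reduces column-wise to $\mathbf{x}_i^{(k+1)} = \tilde{\mathbf{A}}_i\mathbf{x}_i^{(k)}$. The $d$ columns therefore evolve completely independently, each driven by its own aggregation, giving $\mathbf{x}_i^{(k)} = \tilde{\mathbf{A}}_i^{\,k}\mathbf{x}_i^{(0)}$. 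Applying Theorem~\ref{pr:symm_iter} (and the limit statement immediately following it) to each column separately, the direction of $\mathbf{x}_i^{(k)}$ converges to $\spn(\mathbf{v}_i)$ whenever $\mathbf{x}_i^{(0)}$ has nonzero component along $\mathbf{v}_i$, which is a full-measure condition.

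\textbf{Conclusion and expected obstacle.} Because $\mathbf{v}_1,\dots,\mathbf{v}_d$ are linearly independent by construction and the growth rates across columns agree, the normalised limit $\mathbf{X}^{(k)}/\lVert\mathbf{X}^{(k)}\rVert_F$ has $d$ linearly independent columns, so $\mathbf{T}$ simultaneously amplifies $d$ independent signals, as required. The main subtlety lies in pinning down a formal reading of ``amplifies'': I would make it precise in the same probabilistic spirit as Theorem~\ref{pr:symm_subspace}, by asserting that for each $i$ the projection of the normalised $i$-th column onto $\mathbf{v}_i$ stays bounded away from zero with probability one over generic initialisations. A secondary obstacle is the spectral-mismatch issue hinted at above: without aligning $|\lambda_1^{\tilde{\mathbf{A}}_i}|$ across $i$, global Frobenius normalisation would suppress the slower-growing columns and drop the effective rank below $d$; the rescaling step in the construction sidesteps this cleanly and is the only nontrivial design choice in the proof.
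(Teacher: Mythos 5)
Your construction is essentially identical to the paper's: the paper also takes $\mathbf{W}_i = \mathrm{diag}(\mathbf{e}_i) = \mathbf{e}_i\mathbf{e}_i^\top$ and symmetric $\tilde{\mathbf{A}}_i$ with prescribed dominant eigenvectors and matched spectra, observes that $\mathbf{T}$ becomes block-diagonal with the columns decoupled, and reads off the amplification from the eigenstructure. The only difference is how "amplify" is formalized — the paper states a one-step Frobenius-norm-ratio inequality $\lVert\mathbf{T}\mathbf{S}\rVert_F/\lVert\mathbf{T}\mathbf{S}'\rVert_F \geq (|\lambda_1|/|\lambda_2|)\,\lVert\mathbf{S}\rVert_F/\lVert\mathbf{S}'\rVert_F$ against orthogonal subspaces, whereas you phrase it as an asymptotic full-rank statement for the normalized iterates — but both rest on the same construction and the same spectral argument.
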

The crucial advantage of an SKP is that each $\mathbf{W}_i$ can control the degree to which that term contributes to each feature separately for each column. 
The SKP does not add computational complexity, as we would not compute Kronecker products, i.e., $(\mathbf{W}_1\otimes \Tilde{\mathbf{A}}_1) + \dots + (\mathbf{W}_p\otimes \Tilde{\mathbf{A}}_p))\vect(\mathbf{X}) = \vect(\Tilde{\mathbf{A}}_1\mathbf{X}\mathbf{W}_1 + \dots + \Tilde{\mathbf{A}}_p\mathbf{X}\mathbf{W}_p)$.
It is important to note that the concept of an SKP is not tied to a specific model but rather a guiding principle that helps design methods with favorable properties.
This understanding also provides theoretical insights into the success of many existing methods that can be understood as SKPs, including discrete convolutions~\cite{6795724}, residual connections~\citep{he2016deep,bresson2017residual,li2021training}, multi-head graph attention networks~\cite{velivckovic2017graph}, mixing aggregation functions~\cite{rosenbluth2023some} or transforming signals from incoming and outgoing edges differently~\cite{rossi2023edge}. However, Theorem~\ref{pr:skp} relies on each aggregation matrix having a different dominating eigenvector to ensure the amplification of different signals across feature columns.

\section{Empirical Validation}
\label{sec:validation}
\definecolor{color1}{HTML}{404788}
\definecolor{color2}{HTML}{238A8D}
\definecolor{color3}{HTML}{55C667}
\definecolor{color4}{HTML}{FDE725}

\begin{figure}[tb]
     \centering
     \begin{subfigure}[b]{0.32\textwidth}
         \centering
        \def\svgwidth{\textwidth}
         \input{images/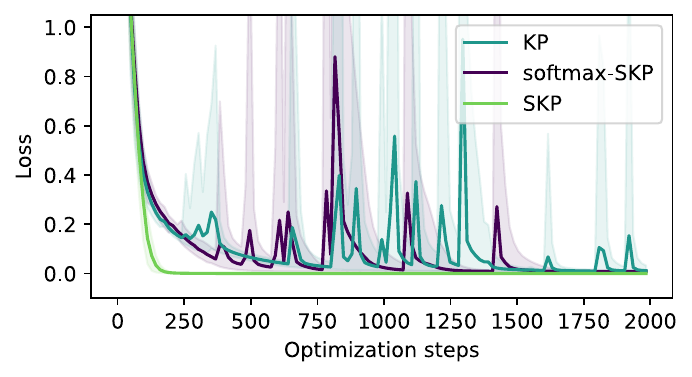_tex}
         \caption{Cora}
         \label{fig:constant_zero}
     \end{subfigure}
     \hfill
     \begin{subfigure}[b]{0.32\textwidth}
         \centering
        \def\svgwidth{\textwidth}
         \input{images/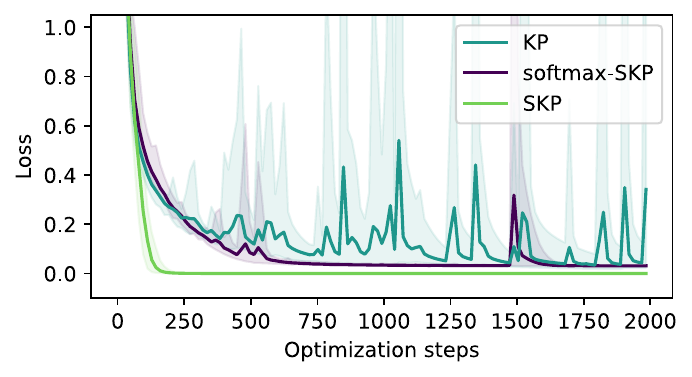_tex}
         \caption{Citeseer}
         \label{fig:constant_div}
     \end{subfigure}
     \hfill
     \begin{subfigure}[b]{0.32\textwidth}
         \centering
        \def\svgwidth{\textwidth}
         \input{images/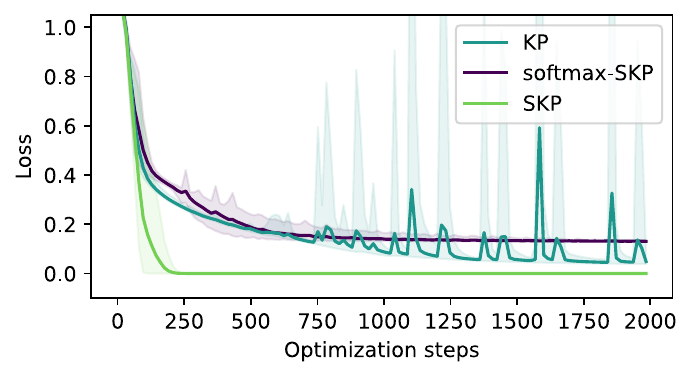_tex}
         \caption{Pubmed}
         \label{fig:constant_div}
     \end{subfigure}
          \begin{subfigure}[b]{0.32\textwidth}
         \centering
        \def\svgwidth{\textwidth}
         \input{images/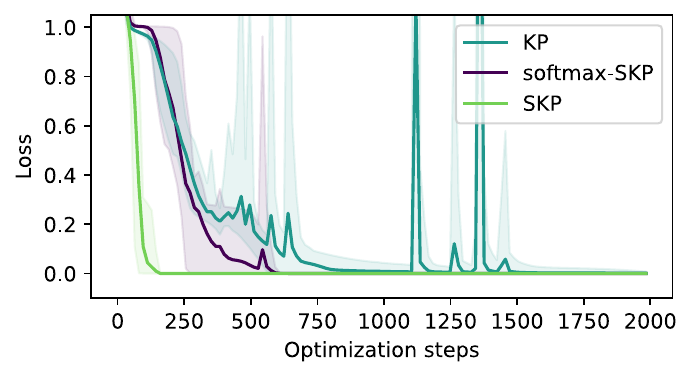_tex}
         \caption{Texas}
         \label{fig:constant_zero}
     \end{subfigure}
     \hfill
     \begin{subfigure}[b]{0.32\textwidth}
         \centering
        \def\svgwidth{\textwidth}
         \input{images/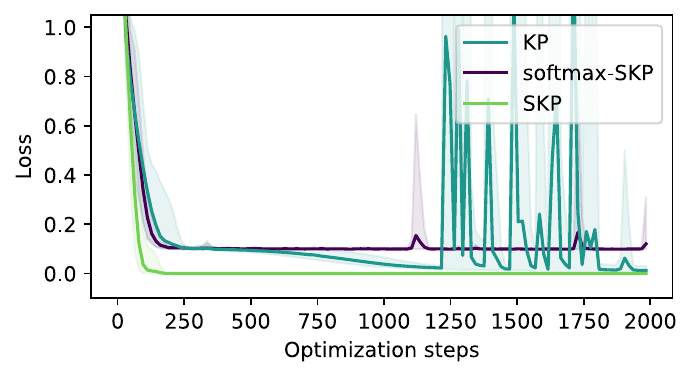_tex}
         \caption{Wisconsin}
         \label{fig:constant_div}
     \end{subfigure}
     \hfill
     \begin{subfigure}[b]{0.32\textwidth}
         \centering
        \def\svgwidth{\textwidth}
         \input{images/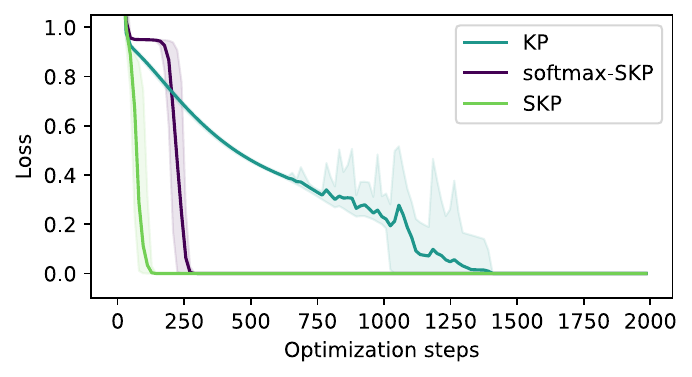_tex}
         \caption{Cornell}
         \label{fig:constant_div}
     \end{subfigure}
          \begin{subfigure}[b]{0.32\textwidth}
         \centering
        \def\svgwidth{\textwidth}
         \input{images/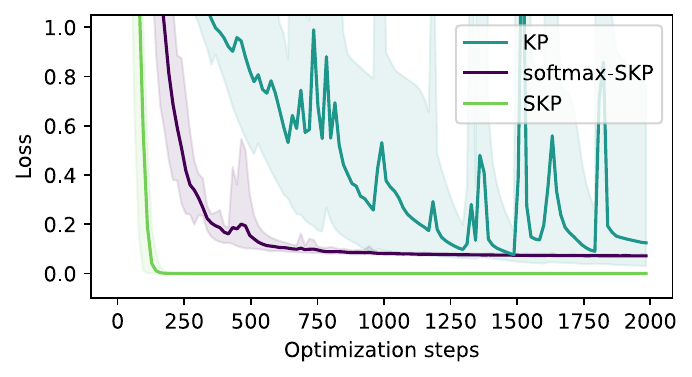_tex}
         \caption{Film}
         \label{fig:constant_zero}
     \end{subfigure}
     \hfill
     \begin{subfigure}[b]{0.32\textwidth}
         \centering
        \def\svgwidth{\textwidth}
         \input{images/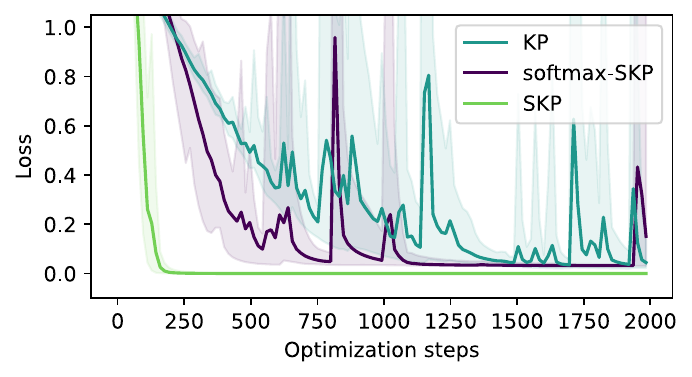_tex}
         \caption{Chameleon}
         \label{fig:constant_div}
     \end{subfigure}
     \hfill
     \begin{subfigure}[b]{0.32\textwidth}
         \centering
        \def\svgwidth{\textwidth}
         \input{images/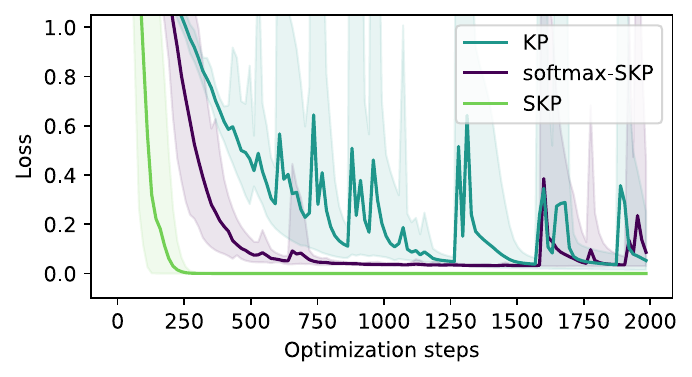_tex}
         \caption{Squirrel}
         \label{fig:constant_div}
     \end{subfigure}
        \caption{Mean optimization loss across $10$ runs for various node classification tasks with $8$ layers of message-passing. The shaded region indicates minimum and maximum loss values.}
        \label{fig:results}
\end{figure}

Our theoretical statements show that the current message-passing frameworks suffer from rank collapse, which makes learning complex functions in deep graph neural networks challenging.
To empirically confirm this phenomenon also for the non-linear case of finite depth, we now evaluate the ability of message-passing frameworks to fit the target function of nine common benchmark datasets for node classification. 
\paragraph{Datasets}
Three homophilic citation networks that generally benefit from smoothing are considered, namely the citation datasets Cora~\cite{yang2016revisiting}, Citeseer~\cite{yang2016revisiting}, and Pubmed~\cite{yang2016revisiting}. The other six datasets are heterophilic, of which three are webpage datasets, namely Texas~\cite{pei2019geom}, Wisconsin~\cite{pei2019geom}, Cornell~\cite{pei2019geom}, two are Wikipedia networks, namely Chameleon~\cite{rozemberczki2021multi}, and Squirrel~\cite{rozemberczki2021multi}, and one is the actor co-occurrence network Film~\cite{tang2009social}. For each dataset, we select the largest strongly connected component and optimize the cross-entropy loss for all nodes in order to compare each model's ability to fit the desired function. We do not utilize self-loops.

\paragraph{Considered Methods}
Our theory motivates us to evaluate three different message-passing frameworks: A single Kronecker product, a softmax-activated sum of Kronecker products, and any sum of Kronecker products. 
A single Kronecker product
\begin{align}
    \mathbf{T}^{(k)}_{\mathrm{KP}} &= \mathbf{W}_1^{(k)}\otimes \Tilde{\mathbf{A}}_1^{(k)}
\end{align}
includes most current message-passing frameworks, such as the GCN~\cite{kipf2017semi}, mean aggregation as in GraphSAGE~\cite{hamilton2017inductive}, sum aggregation as in GIN~\cite{xu2018powerful}, negative edge weights as in FAGCN~\citep{bo2021beyond} and many more~\citep{chien2020adaptive,yan2021two,bodnar2022neural}. To cover all of these methods in our experiments, the weights of the available edges in $\Tilde{\mathbf{A}}_1$ are directly learned from the data.
The second framework
\begin{equation}
\mathbf{T}^{(k)}_{\mathrm{softmax-SKP}} = \frac{1}{2}(\mathbf{W}_1^{(k)}\otimes \tau(\Tilde{\mathbf{A}}_1^{(k)})) + \frac{1}{2}(\mathbf{W}_2^{(k)}\otimes \tau(\Tilde{\mathbf{A}}_2^{(k)}))
\end{equation}
extends our proof on time-inhomogeneous softmax-activated aggregation function (Prop.~\ref{pr:gat_smooth}) to the multi-head case, i.e., a sum of Kronecker products for which each aggregation function is activated by a softmax function $\tau$ over each node's neighbors. The weights of all available edges in $\Tilde{\mathbf{A}}_1$ and $\Tilde{\mathbf{A}}_2$ are learned directly before applying $\tau$, which causes the spectrum of both matrices to align and Theorem~\ref{pr:skp} on improved signal amplification to not hold for this model. This represents the multi-head versions of GAT~\cite{velivckovic2017graph}, GATv2~\cite{brody2021attentive}, TransformerConv~\cite{shi2020masked} and other potential attention mechanisms.
Following our theoretical insights on achieving a more powerful message-passing framework, our third approach is based on a sum of Kronecker products
\begin{equation}
    \mathbf{T}^{(k)}_{\mathrm{SKP}} = \mathbf{W}_1^{(k)}\otimes \Tilde{\mathbf{A}}_1^{(k)} + \mathbf{W}_2^{(k)}\otimes \Tilde{\mathbf{A}}_2^{(k)}\, ,
\end{equation} 
where the edge weights are not activated to avoid matching spectra between terms. Based on Theorem~\ref{pr:skp}, this amplifies separate signals per channel and node. 

\paragraph{Experimental Setup}
Our full model starts by encoding the given node features into a $d$-dimensional vector using a linear layer and a ReLU activation. We then apply the described message-passing schemes $\vect(\mathbf{X}^{(k+1)})=\phi(\mathbf{T}^{(k)}\vect(\mathbf{X}^{(k)})$ for $l$ layers, and apply the ReLU activation $\phi$ after each layer to validate that non-linear models also suffer from rank collapse.
After $l$ such layers, the node representations are mapped to class probabilities by an affine transformation and a softmax activation. All model parameters are optimized to minimize the cross-entropy using Adam~\cite{KingmaB14} for $2000$ steps.
Each experiment is repeated for ten random initializations.
Our reproducible implementation is based on PyTorch~\cite{paszke2019pytorch} and PyTorch Geometric~\cite{Fey/Lenssen/2019}\footnote{Our implementation is available at \url{https://github.com/roth-andreas/rank_collapse}}. We refer to Appendix~\ref{sec:appendix_exp} for all details.

\begin{figure}[tb]
     \centering
     \begin{subfigure}[b]{0.32\textwidth}
         \centering
        \def\svgwidth{\textwidth}
         \input{images/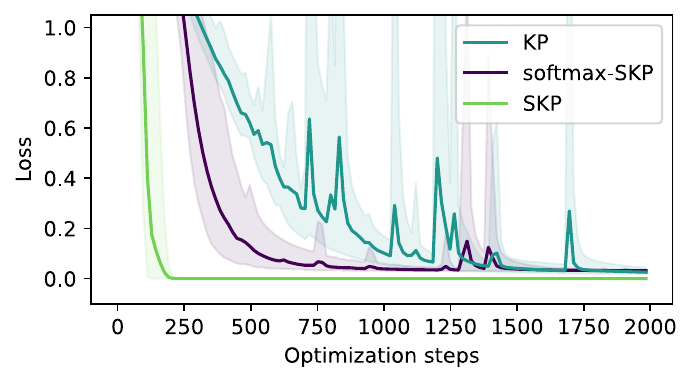_tex}
         \caption{$d=16$}
         \label{fig:constant_zero}
     \end{subfigure}
     \hfill
     \begin{subfigure}[b]{0.32\textwidth}
         \centering
        \def\svgwidth{\textwidth}
         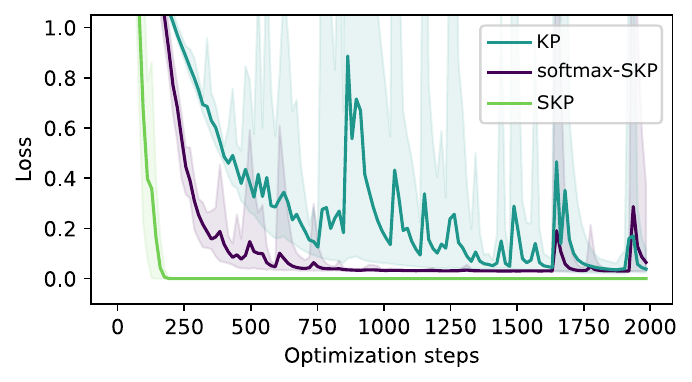
         \caption{$d=64$}
         \label{fig:constant_div}
     \end{subfigure}
     \hfill
     \begin{subfigure}[b]{0.32\textwidth}
         \centering
        \def\svgwidth{\textwidth}
         \input{images/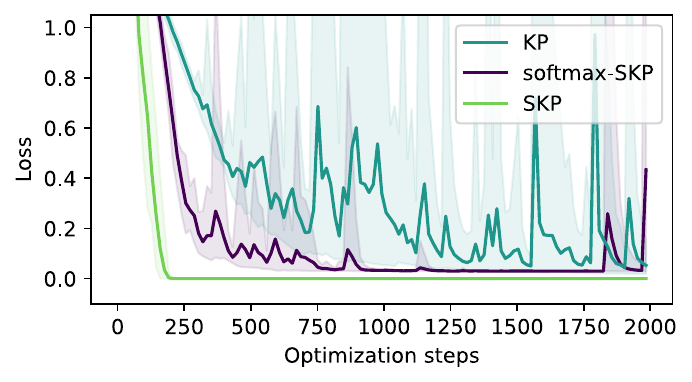_tex}
         \caption{$d=128$}
         \label{fig:constant_div}
     \end{subfigure}
        \caption{Optimization loss for the Squirrel dataset when varying the feature dimension $d$.}
        \label{fig:parameters}
\end{figure}

\begin{figure}[tb]
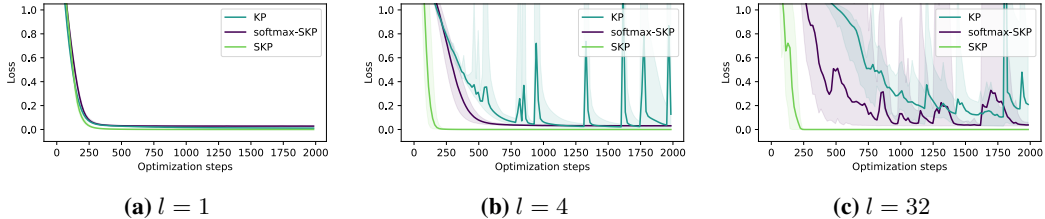

     \centering
     \begin{subfigure}[b]{0.32\textwidth}
         \centering
        \def\svgwidth{\textwidth}
         \input{images/losses_squirrel_1_32.pdf_tex}
         \caption{$l=1$}
         \label{fig:constant_zero}
     \end{subfigure}
     \hfill
     \begin{subfigure}[b]{0.32\textwidth}
         \centering
        \def\svgwidth{\textwidth}
         \input{images/losses_squirrel_4_32.pdf_tex}
         \caption{$l=4$}
         \label{fig:constant_div}
     \end{subfigure}
     \hfill
     \begin{subfigure}[b]{0.32\textwidth}
         \centering
        \def\svgwidth{\textwidth}
         \input{images/losses_squirrel_32_32.pdf_tex}
         \caption{$l=32$}
         \label{fig:constant_div}
     \end{subfigure}
        \caption{Optimization loss for the Squirrel dataset when varying the number of layers $l$.}
        \label{fig:layers}
\end{figure}

\paragraph{Results}
We report the mean, minimal, and maximal achieved loss values over ten runs for $l=8$ layers, a feature dimension of $d=32$, and all considered datasets in Figure~\ref{fig:results}. 
The SKP fits all datasets almost perfectly within less than $250$ steps and decreases almost monotonically. 
Contrarily, KP and softmax-SKP require significantly more steps to converge and remain visibly unstable for most datasets. We ascribe this to both models' inability to amplify different signals per node and channel. 
In most cases, softmax-SKP also converges to a higher loss than the SKP.
To better understand the reasons behind these shortcomings, we provide additional experiments. In Figure~\ref{fig:parameters}, we evaluate the benefit of additional parameters by setting the feature dimension to $d \in \{16,64,128\}$. However, the model size does not help to fit the data better, as the only difference we observe is an increased instability during optimization for larger models.
This strengthens our theory of a more fundamental challenge, which the SKP is able to solve.
Differences in optimization for varying model depth of $l\in\{1,4,32\}$ are displayed in Figure~\ref{fig:layers}. Matching our theory, fitting the data becomes more challenging with increased depth for both the SKP and FAGCN, while the SKP is almost unaffected by the model depth.
We provide an additional experiment in Appendix~\ref{sec:synthetic} that confirms further shortcomings for multi-class classification settings using a synthetic dataset.
These results confirm the severity of rank collapse for common tasks while also extending our theoretical insights to the non-linear case with multiple heads.

\section{Conclusion}
\label{sec:discussion}

Our work shows that rank collapse of node representations is the underlying cause of over-smoothing and over-correlation in GNNs. 
We provided the theoretical foundation that a rank collapse of node representations occurs independently of the chosen aggregation function and the learned feature transformations.
To mitigate these fundamental shortcomings of current message-passing frameworks, we propose the sum of Kronecker products (SKP) as a general property that models should exhibit to provably prevent rank collapse.
We empirically confirm this behavior for non-linear GNNs on nine node classification tasks, which currently employed message-passing frameworks struggle to fit. The SKP easily fits the data even for $32$ layers. 
Our insights show that future methods should aim to avoid rank collapse instead of dealing with over-smoothing or over-correlation. 
Novel metrics to quantify rank collapse in graph neural networks need to be designed, and these need to consider the normalized feature representation, as the feature magnitude may otherwise overshadow the insights.

\section*{Acknowledgements}
This research has been funded by the Federal Ministry of Education and Research of Germany and the state of North-Rhine Westphalia as part of the Lamarr-Institute for Machine Learning and Artificial Intelligence and by the Federal Ministry of Education and Research of Germany under grant no. 01IS22094E WEST-AI.

\bibliographystyle{unsrtnat}
\bibliography{reference}

\begin{thebibliography}{62}
\providecommand{\natexlab}[1]{#1}
\providecommand{\url}[1]{\texttt{#1}}
\expandafter\ifx\csname urlstyle\endcsname\relax
  \providecommand{\doi}[1]{doi: #1}\else
  \providecommand{\doi}{doi: \begingroup \urlstyle{rm}\Url}\fi

\bibitem[Gasteiger et~al.(2018)Gasteiger, Bojchevski, and
  G{\"u}nnemann]{gasteiger2018predict}
Johannes Gasteiger, Aleksandar Bojchevski, and Stephan G{\"u}nnemann.
\newblock Predict then propagate: Graph neural networks meet personalized
  pagerank.
\newblock In \emph{International Conference on Learning Representations}, 2018.

\bibitem[St{\"a}rk et~al.(2022)St{\"a}rk, Ganea, Pattanaik, Barzilay, and
  Jaakkola]{equibind}
Hannes St{\"a}rk, Octavian Ganea, Lagnajit Pattanaik, Regina Barzilay, and
  Tommi Jaakkola.
\newblock Equibind: Geometric deep learning for drug binding structure
  prediction.
\newblock In \emph{International Conference on Machine Learning}, pages
  20503--20521. PMLR, 2022.

\bibitem[El-Kishky et~al.(2022)El-Kishky, Bronstein, Xiao, and
  Haghighi]{10.1145/3534678.3542598}
Ahmed El-Kishky, Michael Bronstein, Ying Xiao, and Aria Haghighi.
\newblock Graph-based representation learning for web-scale recommender
  systems.
\newblock In \emph{Proceedings of the 28th ACM SIGKDD Conference on Knowledge
  Discovery and Data Mining}, KDD '22, page 4784–4785, New York, NY, USA,
  2022. Association for Computing Machinery.
\newblock ISBN 9781450393850.
\newblock \doi{10.1145/3534678.3542598}.

\bibitem[Roth and Liebig(2022{\natexlab{a}})]{roth2022forecasting}
Andreas Roth and Thomas Liebig.
\newblock Forecasting unobserved node states with spatio-temporal graph neural
  networks.
\newblock In \emph{2022 IEEE International Conference on Data Mining Workshops
  (ICDMW)}, pages 740--747. IEEE, 2022{\natexlab{a}}.

\bibitem[Yan et~al.(2021)Yan, Hashemi, Swersky, Yang, and Koutra]{yan2021two}
Yujun Yan, Milad Hashemi, Kevin Swersky, Yaoqing Yang, and Danai Koutra.
\newblock Two sides of the same coin: Heterophily and oversmoothing in graph
  convolutional neural networks.
\newblock \emph{arXiv preprint arXiv:2102.06462}, 2021.

\bibitem[Oono and Suzuki(2020)]{oono2019graph}
Kenta Oono and Taiji Suzuki.
\newblock Graph neural networks exponentially lose expressive power for node
  classification.
\newblock In \emph{International Conference on Learning Representations}, 2020.

\bibitem[Nt and Maehara(2019)]{nt2019revisiting}
Hoang Nt and Takanori Maehara.
\newblock Revisiting graph neural networks: All we have is low-pass filters.
\newblock \emph{arXiv preprint arXiv:1905.09550}, 2019.

\bibitem[Jin et~al.(2022)Jin, Liu, Ma, Aggarwal, and Tang]{jin2022feature}
Wei Jin, Xiaorui Liu, Yao Ma, Charu Aggarwal, and Jiliang Tang.
\newblock Feature overcorrelation in deep graph neural networks: A new
  perspective.
\newblock In \emph{Proceedings of the 28th ACM SIGKDD Conference on Knowledge
  Discovery and Data Mining}, pages 709--719, 2022.

\bibitem[Li et~al.(2018)Li, Han, and Wu]{li2018deeper}
Qimai Li, Zhichao Han, and Xiao-Ming Wu.
\newblock Deeper insights into graph convolutional networks for semi-supervised
  learning.
\newblock In \emph{Thirty-Second AAAI conference on artificial intelligence},
  2018.

\bibitem[Keriven(2022)]{keriven2022not}
Nicolas Keriven.
\newblock Not too little, not too much: a theoretical analysis of graph (over)
  smoothing.
\newblock \emph{arXiv preprint arXiv:2205.12156}, 2022.

\bibitem[Rusch et~al.(2022{\natexlab{a}})Rusch, Chamberlain, Rowbottom, Mishra,
  and Bronstein]{rusch2022graph}
T~Konstantin Rusch, Ben Chamberlain, James Rowbottom, Siddhartha Mishra, and
  Michael Bronstein.
\newblock Graph-coupled oscillator networks.
\newblock In \emph{International Conference on Machine Learning}, pages
  18888--18909. PMLR, 2022{\natexlab{a}}.

\bibitem[Cai and Wang(2020)]{cai2020note}
Chen Cai and Yusu Wang.
\newblock A note on over-smoothing for graph neural networks.
\newblock \emph{arXiv preprint arXiv:2006.13318}, 2020.

\bibitem[Zhou et~al.(2021)Zhou, Huang, Zha, Chen, Li, Choi, and
  Hu]{zhou2021dirichlet}
Kaixiong Zhou, Xiao Huang, Daochen Zha, Rui Chen, Li~Li, Soo-Hyun Choi, and Xia
  Hu.
\newblock Dirichlet energy constrained learning for deep graph neural networks.
\newblock \emph{Advances in Neural Information Processing Systems},
  34:\penalty0 21834--21846, 2021.

\bibitem[Gilmer et~al.(2017)Gilmer, Schoenholz, Riley, Vinyals, and
  Dahl]{gilmer2017neural}
Justin Gilmer, Samuel~S Schoenholz, Patrick~F Riley, Oriol Vinyals, and
  George~E Dahl.
\newblock Neural message passing for quantum chemistry.
\newblock In \emph{International conference on machine learning}, pages
  1263--1272. PMLR, 2017.

\bibitem[Kipf and Welling(2017)]{kipf2017semi}
Thomas~N. Kipf and Max Welling.
\newblock Semi-supervised classification with graph convolutional networks.
\newblock In \emph{International Conference on Learning Representations
  (ICLR)}, 2017.

\bibitem[Xu et~al.(2018)Xu, Hu, Leskovec, and Jegelka]{xu2018powerful}
Keyulu Xu, Weihua Hu, Jure Leskovec, and Stefanie Jegelka.
\newblock How powerful are graph neural networks?
\newblock \emph{arXiv preprint arXiv:1810.00826}, 2018.

\bibitem[Veli{\v{c}}kovi{\'c} et~al.(2018)Veli{\v{c}}kovi{\'c}, Cucurull,
  Casanova, Romero, Li{\`o}, and Bengio]{velivckovic2017graph}
Petar Veli{\v{c}}kovi{\'c}, Guillem Cucurull, Arantxa Casanova, Adriana Romero,
  Pietro Li{\`o}, and Yoshua Bengio.
\newblock Graph attention networks.
\newblock In \emph{International Conference on Learning Representations}, 2018.

\bibitem[Chen et~al.(2020{\natexlab{a}})Chen, Lin, Li, Li, Zhou, and
  Sun]{chen2020measuring}
Deli Chen, Yankai Lin, Wei Li, Peng Li, Jie Zhou, and Xu~Sun.
\newblock Measuring and relieving the over-smoothing problem for graph neural
  networks from the topological view.
\newblock In \emph{Proceedings of the AAAI Conference on Artificial
  Intelligence}, volume~34, pages 3438--3445, 2020{\natexlab{a}}.

\bibitem[Di~Giovanni et~al.(2022)Di~Giovanni, Rowbottom, Chamberlain,
  Markovich, and Bronstein]{di2022graph}
Francesco Di~Giovanni, James Rowbottom, Benjamin~P Chamberlain, Thomas
  Markovich, and Michael~M Bronstein.
\newblock Graph neural networks as gradient flows.
\newblock \emph{arXiv preprint arXiv:2206.10991}, 2022.

\bibitem[Maskey et~al.(2023)Maskey, Paolino, Bacho, and
  Kutyniok]{maskey2023fractional}
Sohir Maskey, Raffaele Paolino, Aras Bacho, and Gitta Kutyniok.
\newblock A fractional graph laplacian approach to oversmoothing.
\newblock \emph{arXiv preprint arXiv:2305.13084}, 2023.

\bibitem[Rusch et~al.(2022{\natexlab{b}})Rusch, Chamberlain, Mahoney,
  Bronstein, and Mishra]{rusch2022gradient}
T~Konstantin Rusch, Benjamin~P Chamberlain, Michael~W Mahoney, Michael~M
  Bronstein, and Siddhartha Mishra.
\newblock Gradient gating for deep multi-rate learning on graphs.
\newblock \emph{arXiv preprint arXiv:2210.00513}, 2022{\natexlab{b}}.

\bibitem[Rusch et~al.(2023)Rusch, Bronstein, and Mishra]{rusch2023survey}
T~Konstantin Rusch, Michael~M Bronstein, and Siddhartha Mishra.
\newblock A survey on oversmoothing in graph neural networks.
\newblock \emph{arXiv preprint arXiv:2303.10993}, 2023.

\bibitem[Wu et~al.(2023)Wu, Ajorlou, Wu, and Jadbabaie]{wu2023demystifying}
Xinyi Wu, Amir Ajorlou, Zihui Wu, and Ali Jadbabaie.
\newblock Demystifying oversmoothing in attention-based graph neural networks.
\newblock \emph{arXiv preprint arXiv:2305.16102}, 2023.

\bibitem[Rong et~al.(2020)Rong, Huang, Xu, and Huang]{Rong2020DropEdge}
Yu~Rong, Wenbing Huang, Tingyang Xu, and Junzhou Huang.
\newblock Dropedge: Towards deep graph convolutional networks on node
  classification.
\newblock In \emph{International Conference on Learning Representations}, 2020.

\bibitem[Zhao and Akoglu(2020)]{Zhao2020PairNorm}
Lingxiao Zhao and Leman Akoglu.
\newblock Pairnorm: Tackling oversmoothing in gnns.
\newblock In \emph{International Conference on Learning Representations}, 2020.

\bibitem[Chen et~al.(2020{\natexlab{b}})Chen, Wei, Huang, Ding, and
  Li]{chen2020simple}
Ming Chen, Zhewei Wei, Zengfeng Huang, Bolin Ding, and Yaliang Li.
\newblock Simple and deep graph convolutional networks.
\newblock In \emph{International Conference on Machine Learning}, pages
  1725--1735. PMLR, 2020{\natexlab{b}}.

\bibitem[Roth and Liebig(2023)]{roth2023transforming}
Andreas Roth and Thomas Liebig.
\newblock Transforming pagerank into an infinite-depth graph neural network.
\newblock In \emph{Machine Learning and Knowledge Discovery in Databases:
  European Conference, ECML PKDD 2022, Grenoble, France, September 19--23,
  2022, Proceedings, Part II}, pages 469--484. Springer, 2023.

\bibitem[Guo et~al.(2023)Guo, Wang, Du, and Wang]{guo2022contranorm}
Xiaojun Guo, Yifei Wang, Tianqi Du, and Yisen Wang.
\newblock Contranorm: A contrastive learning perspective on oversmoothing and
  beyond.
\newblock In \emph{The Eleventh International Conference on Learning
  Representations}, 2023.

\bibitem[Gao et~al.(2019)Gao, He, Tan, Qin, Wang, and
  Liu]{DBLP:conf/iclr/GaoHTQWL19}
Jun Gao, Di~He, Xu~Tan, Tao Qin, Liwei Wang, and Tie{-}Yan Liu.
\newblock Representation degeneration problem in training natural language
  generation models.
\newblock In \emph{7th International Conference on Learning Representations,
  {ICLR} 2019, New Orleans, LA, USA, May 6-9, 2019}, 2019.

\bibitem[Jing et~al.(2022)Jing, Vincent, LeCun, and
  Tian]{jing2022understanding}
Li~Jing, Pascal Vincent, Yann LeCun, and Yuandong Tian.
\newblock Understanding dimensional collapse in contrastive self-supervised
  learning.
\newblock In \emph{10th International Conference on Learning Representations,
  ICLR 2022}, 2022.

\bibitem[Guo et~al.(2022)Guo, Zhou, Hu, Li, Chang, and Wang]{guo2022orthogonal}
Kai Guo, Kaixiong Zhou, Xia Hu, Yu~Li, Yi~Chang, and Xin Wang.
\newblock Orthogonal graph neural networks.
\newblock In \emph{Proceedings of the AAAI Conference on Artificial
  Intelligence}, volume~36, pages 3996--4004, 2022.

\bibitem[Liu et~al.(2023)Liu, Han, Jin, Liu, and Liu]{liu2023enhancing}
Hua Liu, Haoyu Han, Wei Jin, Xiaorui Liu, and Hui Liu.
\newblock Enhancing graph representations learning with decorrelated
  propagation.
\newblock In \emph{Proceedings of the 29th ACM SIGKDD Conference on Knowledge
  Discovery and Data Mining}, pages 1466--1476, 2023.

\bibitem[Hamilton et~al.(2017)Hamilton, Ying, and
  Leskovec]{hamilton2017inductive}
Will Hamilton, Zhitao Ying, and Jure Leskovec.
\newblock Inductive representation learning on large graphs.
\newblock \emph{Advances in neural information processing systems}, 30, 2017.

\bibitem[McCallum et~al.(2000)McCallum, Nigam, Rennie, and
  Seymore]{mccallum2000automating}
Andrew~Kachites McCallum, Kamal Nigam, Jason Rennie, and Kristie Seymore.
\newblock Automating the construction of internet portals with machine
  learning.
\newblock \emph{Information Retrieval}, 3:\penalty0 127--163, 2000.

\bibitem[Liu et~al.(2020)Liu, Gao, and Ji]{liu2020towards}
Meng Liu, Hongyang Gao, and Shuiwang Ji.
\newblock Towards deeper graph neural networks.
\newblock In \emph{Proceedings of the 26th ACM SIGKDD international conference
  on knowledge discovery \& data mining}, pages 338--348, 2020.

\bibitem[Roth and Liebig(2022{\natexlab{b}})]{roth2022transforming}
Andreas Roth and Thomas Liebig.
\newblock Transforming pagerank into an infinite-depth graph neural network.
\newblock \emph{arXiv preprint arXiv:2207.00684}, 2022{\natexlab{b}}.

\bibitem[Von~Luxburg(2007)]{von2007tutorial}
Ulrike Von~Luxburg.
\newblock A tutorial on spectral clustering.
\newblock \emph{Statistics and computing}, 17\penalty0 (4):\penalty0 395--416,
  2007.

\bibitem[Chien et~al.(2020)Chien, Peng, Li, and Milenkovic]{chien2020adaptive}
Eli Chien, Jianhao Peng, Pan Li, and Olgica Milenkovic.
\newblock Adaptive universal generalized pagerank graph neural network.
\newblock In \emph{International Conference on Learning Representations}, 2020.

\bibitem[Bo et~al.(2021)Bo, Wang, Shi, and Shen]{bo2021beyond}
Deyu Bo, Xiao Wang, Chuan Shi, and Huawei Shen.
\newblock Beyond low-frequency information in graph convolutional networks.
\newblock In \emph{Proceedings of the AAAI Conference on Artificial
  Intelligence}, volume~35, pages 3950--3957, 2021.

\bibitem[Bodnar et~al.(2022)Bodnar, Di~Giovanni, Chamberlain, Lio, and
  Bronstein]{bodnar2022neural}
Cristian Bodnar, Francesco Di~Giovanni, Benjamin~Paul Chamberlain, Pietro Lio,
  and Michael~M Bronstein.
\newblock Neural sheaf diffusion: A topological perspective on heterophily and
  oversmoothing in gnns.
\newblock In \emph{ICLR 2022 Workshop on Geometrical and Topological
  Representation Learning}, 2022.

\bibitem[Strang(2006)]{strang2006linear}
Gilbert Strang.
\newblock \emph{Linear algebra and its applications.}
\newblock Belmont, CA: Thomson, Brooks/Cole, 2006.

\bibitem[Brody et~al.(2021)Brody, Alon, and Yahav]{brody2021attentive}
Shaked Brody, Uri Alon, and Eran Yahav.
\newblock How attentive are graph attention networks?
\newblock \emph{arXiv preprint arXiv:2105.14491}, 2021.

\bibitem[Perron(1907)]{perron1907theorie}
Oskar Perron.
\newblock Zur theorie der matrices.
\newblock \emph{Mathematische Annalen}, 64\penalty0 (2):\penalty0 248--263,
  1907.

\bibitem[Gallager(1997)]{gallager1997discrete}
Robert~G Gallager.
\newblock Discrete stochastic processes.
\newblock \emph{Journal of the Operational Research Society}, 48\penalty0
  (1):\penalty0 103--103, 1997.

\bibitem[Dong et~al.(2021)Dong, Cordonnier, and Loukas]{dong2021attention}
Yihe Dong, Jean-Baptiste Cordonnier, and Andreas Loukas.
\newblock Attention is not all you need: Pure attention loses rank doubly
  exponentially with depth.
\newblock In \emph{International Conference on Machine Learning}, pages
  2793--2803. PMLR, 2021.

\bibitem[Noci et~al.(2022)Noci, Anagnostidis, Biggio, Orvieto, Singh, and
  Lucchi]{noci2022signal}
Lorenzo Noci, Sotiris Anagnostidis, Luca Biggio, Antonio Orvieto, Sidak~Pal
  Singh, and Aurelien Lucchi.
\newblock Signal propagation in transformers: Theoretical perspectives and the
  role of rank collapse.
\newblock \emph{Advances in Neural Information Processing Systems},
  35:\penalty0 27198--27211, 2022.

\bibitem[Van~Loan and Pitsianis(1993)]{van1993approximation}
Charles~F Van~Loan and Nikos Pitsianis.
\newblock \emph{Approximation with Kronecker products}.
\newblock Springer, 1993.

\bibitem[Cao et~al.(2021)Cao, Genton, Keyes, and Turkiyyah]{cao2021sum}
Jian Cao, Marc~G Genton, David~E Keyes, and George~M Turkiyyah.
\newblock Sum of kronecker products representation and its cholesky
  factorization for spatial covariance matrices from large grids.
\newblock \emph{Computational Statistics \& Data Analysis}, 157:\penalty0
  107165, 2021.

\bibitem[LeCun et~al.(1989)LeCun, Boser, Denker, Henderson, Howard, Hubbard,
  and Jackel]{6795724}
Y.~LeCun, B.~Boser, J.~S. Denker, D.~Henderson, R.~E. Howard, W.~Hubbard, and
  L.~D. Jackel.
\newblock Backpropagation applied to handwritten zip code recognition.
\newblock \emph{Neural Computation}, 1\penalty0 (4):\penalty0 541--551, 1989.
\newblock \doi{10.1162/neco.1989.1.4.541}.

\bibitem[He et~al.(2016)He, Zhang, Ren, and Sun]{he2016deep}
Kaiming He, Xiangyu Zhang, Shaoqing Ren, and Jian Sun.
\newblock Deep residual learning for image recognition.
\newblock In \emph{Proceedings of the IEEE conference on computer vision and
  pattern recognition}, pages 770--778, 2016.

\bibitem[Bresson and Laurent(2017)]{bresson2017residual}
Xavier Bresson and Thomas Laurent.
\newblock Residual gated graph convnets.
\newblock \emph{arXiv preprint arXiv:1711.07553}, 2017.

\bibitem[Li et~al.(2021)Li, M{\"u}ller, Ghanem, and Koltun]{li2021training}
Guohao Li, Matthias M{\"u}ller, Bernard Ghanem, and Vladlen Koltun.
\newblock Training graph neural networks with 1000 layers.
\newblock In \emph{International conference on machine learning}, pages
  6437--6449. PMLR, 2021.

\bibitem[Rosenbluth et~al.(2023)Rosenbluth, Toenshoff, and
  Grohe]{rosenbluth2023some}
Eran Rosenbluth, Jan Toenshoff, and Martin Grohe.
\newblock Some might say all you need is sum.
\newblock \emph{arXiv preprint arXiv:2302.11603}, 2023.

\bibitem[Rossi et~al.(2023)Rossi, Charpentier, Di~Giovanni, Frasca,
  G{\"u}nnemann, and Bronstein]{rossi2023edge}
Emanuele Rossi, Bertrand Charpentier, Francesco Di~Giovanni, Fabrizio Frasca,
  Stephan G{\"u}nnemann, and Michael Bronstein.
\newblock Edge directionality improves learning on heterophilic graphs.
\newblock \emph{arXiv preprint arXiv:2305.10498}, 2023.

\bibitem[Yang et~al.(2016)Yang, Cohen, and Salakhudinov]{yang2016revisiting}
Zhilin Yang, William Cohen, and Ruslan Salakhudinov.
\newblock Revisiting semi-supervised learning with graph embeddings.
\newblock In \emph{International conference on machine learning}, pages 40--48.
  PMLR, 2016.

\bibitem[Pei et~al.(2020)Pei, Wei, Chang, Lei, and Yang]{pei2019geom}
Hongbin Pei, Bingzhe Wei, Kevin Chen-Chuan Chang, Yu~Lei, and Bo~Yang.
\newblock Geom-gcn: Geometric graph convolutional networks.
\newblock In \emph{International Conference on Learning Representations}, 2020.

\bibitem[Rozemberczki et~al.(2021)Rozemberczki, Allen, and
  Sarkar]{rozemberczki2021multi}
Benedek Rozemberczki, Carl Allen, and Rik Sarkar.
\newblock Multi-scale attributed node embedding.
\newblock \emph{Journal of Complex Networks}, 9\penalty0 (2):\penalty0 cnab014,
  2021.

\bibitem[Tang et~al.(2009)Tang, Sun, Wang, and Yang]{tang2009social}
Jie Tang, Jimeng Sun, Chi Wang, and Zi~Yang.
\newblock Social influence analysis in large-scale networks.
\newblock In \emph{Proceedings of the 15th ACM SIGKDD international conference
  on Knowledge discovery and data mining}, pages 807--816, 2009.

\bibitem[Shi et~al.(2020)Shi, Huang, Feng, Zhong, Wang, and Sun]{shi2020masked}
Yunsheng Shi, Zhengjie Huang, Shikun Feng, Hui Zhong, Wenjin Wang, and Yu~Sun.
\newblock Masked label prediction: Unified message passing model for
  semi-supervised classification.
\newblock \emph{arXiv preprint arXiv:2009.03509}, 2020.

\bibitem[Kingma and Ba(2015)]{KingmaB14}
Diederik~P. Kingma and Jimmy Ba.
\newblock Adam: {A} method for stochastic optimization.
\newblock In \emph{3rd International Conference on Learning Representations,
  {ICLR} 2015, San Diego, CA, USA, May 7-9, 2015, Conference Track
  Proceedings}, 2015.

\bibitem[Paszke et~al.(2019)Paszke, Gross, Massa, Lerer, Bradbury, Chanan,
  Killeen, Lin, Gimelshein, Antiga, et~al.]{paszke2019pytorch}
Adam Paszke, Sam Gross, Francisco Massa, Adam Lerer, James Bradbury, Gregory
  Chanan, Trevor Killeen, Zeming Lin, Natalia Gimelshein, Luca Antiga, et~al.
\newblock Pytorch: An imperative style, high-performance deep learning library.
\newblock \emph{Advances in neural information processing systems}, 32, 2019.

\bibitem[Fey and Lenssen(2019)]{Fey/Lenssen/2019}
Matthias Fey and Jan~E. Lenssen.
\newblock Fast graph representation learning with {PyTorch Geometric}.
\newblock In \emph{ICLR Workshop on Representation Learning on Graphs and
  Manifolds}, 2019.

\end{thebibliography}

\clearpage

\appendix
\section{Mathematical details}
\label{sec:proofs}
In this section, we provide the details of our approach and all of our statements.

\subsection{Basic Operations}
We start by listing the most important properties used throughout our formal proofs.

\paragraph{Kronecker Product.}
The Kronecker product for any two matrices $\mathbf{A}\in\mathbb{R}^{p\times q}$, $\mathbf{B}\in\mathbb{R}^{r\times s}$ is denoted as 
\begin{equation}
\mathbf{A}\otimes\mathbf{B}=\begin{bmatrix}
    a_{11}\mathbf{B} & \dots & a_{1q}\mathbf{B}\\
    \vdots & \ddots & \vdots \\
    a_{p1}\mathbf{B} & \dots & a_{pq}\mathbf{B}
\end{bmatrix}\, .
\end{equation}
The importance of the Kronecker product for our work stems from its powerful properties. We briefly present the most relevant here. First, a vectorized matrix product
\begin{equation}
    \vect(\Tilde{\mathbf{A}}\mathbf{XW}) = (\mathbf{W}^T\otimes \Tilde{\mathbf{A}})\vect(\mathbf{X})
\end{equation}
of any matrices $\Tilde{\mathbf{A}},\mathbf{X},\mathbf{W}$ with matching shapes can be written using the Kronecker product. The Kronecker product of two orthogonal matrices results in an orthogonal matrix, allowing us to rewrite any vector as a linear combination
\begin{equation}
    \vect(\mathbf{X}) = (\mathbf{U}\otimes \mathbf{V})\mathbf{c}
\end{equation}
using the singular value decomposition $\mathbf{W}=\mathbf{U}\mathbf{\Sigma}\mathbf{N}^T$ and eigendecomposition $\Tilde{\mathbf{A}}=\mathbf{V}\mathbf{\Lambda}\mathbf{V}^T$. The Kronecker product also satisfies the mixed-product property
\begin{equation}
(\mathbf{A}\otimes \mathbf{B})(\mathbf{C}\otimes \mathbf{D}) = (\mathbf{AC})\otimes (\mathbf{BD})\, .    
\end{equation}

\paragraph{Dirichlet Energy} The standard interpretation for the Dirichlet energy is the sum of differences in representations for adjacent nodes. Another interpretation we mainly use is based on the decomposition of the signal into eigenvectors of the graph Laplacian $\mathbf{\Delta}=\mathbf{I}-\Tilde{\mathbf{A}}$ using $\Tilde{\mathbf{A}}=\mathbf{D}^{-\frac{1}{2}}\mathbf{A}\mathbf{D}^{-\frac{1}{2}}$. Utilizing the Kronecker product, only the signals not belonging to eigenvalue $\lambda_1^{\Tilde{\mathbf{A}}}=1$ are summed up and weighted by the corresponding eigenvalue of $\mathbf{\Delta}$: 
\begin{align}
    E(\mathbf{X}) &= tr(\mathbf{X}^T\mathbf{\Delta X}) \\
    &= \vect(\mathbf{X})^T\vect(\mathbf{\Delta X})\, .
\end{align}
We then use the decomposition $vec(\mathbf{X}) = (\mathbf{I}_d\otimes\mathbf{V})\mathbf{c}$ based on the eigenvectors $\mathbf{V}$ of $\Tilde{\mathbf{A}}$ and the identity matrix $\mathbf{I}_d$, leading to
\begin{align}
    E(\mathbf{X}) &= \mathbf{c}^T(\mathbf{I}_d\otimes\mathbf{V})^T(\mathbf{I}_d\otimes \mathbf{\Delta})(\mathbf{I}_d\otimes\mathbf{V})\mathbf{c} \\
    &= \mathbf{c}^T (\mathbf{I}_d\otimes \mathbf{V}^T\mathbf{\Delta}\mathbf{V})\mathbf{c}\, .
\end{align}
As $\mathbf{\Delta} = \mathbf{V}(\mathbf{I}_n-\mathbf{\Lambda})\mathbf{V}$ has the same eigenvectors and shifted eigenvalues as $\Tilde{\mathbf{A}}$, we then write the statement as a sum of coefficients that are weighted by their corresponding eigenvalue of the graph Laplacian:
\begin{align}
    E(\mathbf{X}) &= \mathbf{c}^T (\mathbf{I}_d\otimes (\mathbf{I}_n-\mathbf{\Lambda}^\mathbf{\Delta})\mathbf{c}\\
    &= \sum_{l,r=1}^{n,d} (1-\lambda_{r}^{\Tilde{\mathbf{A}}})c_{l,r}^2\, .
\end{align}

\paragraph{Frobenius Norm}
The squared Frobenius norm has a similar interpretation, the coefficients are just not weighted by eigenvalues:
\begin{equation}
\begin{split}
    ||\mathbf{X}||_F^2 
    &= tr(\mathbf{X}^T\mathbf{X}) \\
    &= \mathbf{c}^T(\mathbf{U}\otimes\mathbf{V})^T(\mathbf{U}\otimes\mathbf{V})\mathbf{c} \\
    &= \sum_{l,r=1}^{n,d}c_{l,r}^2 
    \end{split}
\end{equation}
An important property of the Frobenius norm in conjunction with the Kronecker product is the following:
\begin{equation}
    \lVert\mathbf{A}\otimes\mathbf{B}\rVert_F = \lVert\mathbf{A}\rVert_F\cdot\lVert\mathbf{B}\rVert_F
\end{equation}

\subsection{Proof of Proposition 4.1}
\begin{proposition*} (Node representations vanish.)
    Let $\mathbf{\Tilde{A}}\in\mathbb{R}^{n\times n}$ be symmetric with maximum absolute eigenvalue $|\lambda_1^{\Tilde{\mathbf{A}}}|=1$, $\mathbf{W}\in\mathbb{R}^{d\times d}$ be any matrix with maximum singular value $\sigma_1^{\mathbf{W}}$, and $\phi$ a component-wise non-expansive mapping satisfying $\phi(\mathbf{0}) = \mathbf{0}$. Then,
    \begin{equation}
        ||\phi(\mathbf{\Tilde{A}XW})||_F \leq \sigma_1^{\mathbf{W}}\cdot ||\mathbf{X}||_F\, .
    \end{equation}
\end{proposition*}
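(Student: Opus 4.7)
The plan is to peel off the component-wise map first and then control the remaining linear operator through the standard Frobenius--operator inequality. The single-shot calculation will be
\begin{equation*}
\lVert\phi(\tilde{\mathbf{A}}\mathbf{X}\mathbf{W})\rVert_F \;\leq\; \lVert\tilde{\mathbf{A}}\mathbf{X}\mathbf{W}\rVert_F \;\leq\; \lVert\tilde{\mathbf{A}}\rVert_{\mathrm{op}}\cdot\lVert\mathbf{X}\rVert_F\cdot\lVert\mathbf{W}\rVert_{\mathrm{op}} \;=\; 1\cdot\lVert\mathbf{X}\rVert_F\cdot\sigma_1^{\mathbf{W}},
\end{equation*}
so the proof really reduces to justifying the two inequality signs and the two norm identities.

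First I would handle the non-linearity. Since $\phi$ is component-wise non-expansive and $\phi(\mathbf{0})=\mathbf{0}$, each entry satisfies $|\phi(y_{ij})| = |\phi(y_{ij})-\phi(0)| \leq |y_{ij}|$. Squaring and summing over all entries yields $\lVert\phi(\mathbf{Y})\rVert_F^2 \leq \lVert\mathbf{Y}\rVert_F^2$ for any matrix $\mathbf{Y}$, which applied to $\mathbf{Y}=\tilde{\mathbf{A}}\mathbf{X}\mathbf{W}$ gives the first inequality. This is the only place the hypotheses on $\phi$ enter.

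Next I would invoke the submultiplicativity of the Frobenius norm with the spectral norm in the form $\lVert\mathbf{A}\mathbf{B}\rVert_F \leq \lVert\mathbf{A}\rVert_{\mathrm{op}}\lVert\mathbf{B}\rVert_F$ and its right-multiplication analog $\lVert\mathbf{B}\mathbf{C}\rVert_F \leq \lVert\mathbf{B}\rVert_F\lVert\mathbf{C}\rVert_{\mathrm{op}}$ (both obtained by bounding each column, resp.\ row, via the operator-norm definition). Applying these in succession to $\tilde{\mathbf{A}}\mathbf{X}\mathbf{W}$ produces $\lVert\tilde{\mathbf{A}}\rVert_{\mathrm{op}}\lVert\mathbf{X}\rVert_F\lVert\mathbf{W}\rVert_{\mathrm{op}}$. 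Since $\tilde{\mathbf{A}}$ is symmetric, its operator norm equals $|\lambda_1^{\tilde{\mathbf{A}}}|=1$ by assumption, and $\lVert\mathbf{W}\rVert_{\mathrm{op}}=\sigma_1^{\mathbf{W}}$ by definition of the largest singular value.

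There is no genuine obstacle here: the argument is a two-line chain of classical inequalities, and the main thing to be careful about is direction-consistency (using the operator-norm bound on $\tilde{\mathbf{A}}$ from the left and on $\mathbf{W}$ from the right, keeping the Frobenius factor on $\mathbf{X}$). The content of the proposition lies in recognizing that the symmetry of $\tilde{\mathbf{A}}$ together with $|\lambda_1^{\tilde{\mathbf{A}}}|=1$ forces $\lVert\tilde{\mathbf{A}}\rVert_{\mathrm{op}}=1$, so the aggregation step is a Frobenius contraction and only $\sigma_1^{\mathbf{W}}$ can inflate the norm; when $\sigma_1^{\mathbf{W}^{(k)}}<1$ for every layer, iterating this bound drives $\mathbf{X}^{(k)}$ to zero, which is the conclusion the paper uses afterwards.
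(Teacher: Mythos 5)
Your proposal is correct and follows essentially the same route as the paper's proof: peel off $\phi$ via the non-expansive property together with $\phi(\mathbf{0})=\mathbf{0}$, then bound $\lVert\tilde{\mathbf{A}}\mathbf{X}\mathbf{W}\rVert_F$ by $|\lambda_1^{\tilde{\mathbf{A}}}|\,\sigma_1^{\mathbf{W}}\,\lVert\mathbf{X}\rVert_F$ and use $|\lambda_1^{\tilde{\mathbf{A}}}|=1$. You in fact spell out the column/row justification of the mixed Frobenius--operator inequality, which the paper only cites as a ``common bound.''
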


\begin{proof} The key property we use is that the non-expansive property of $\phi(\cdot)$ implies the Lipschitz continuity $||\phi(\mathbf{X}) - \phi(\mathbf{Y})||\leq ||\mathbf{X} - \mathbf{Y}||$:
\begin{align}
    ||\phi(\mathbf{\Tilde{A}XW})||_F &= ||\phi(\mathbf{\Tilde{A}XW}) - \phi(\mathbf{0})||_F \\
    &\leq ||\mathbf{\Tilde{A}XW} - \mathbf{0}||_F \\
    &= ||\mathbf{\Tilde{A}XW}||_F\, .
\end{align}
We then use common bounds on the norm of the matrix product for symmetric matrices using the maximum eigenvalue $\lambda_1^{\Tilde{\mathbf{A}}}$ and for an arbitrary matrix based on the maximum singular value $\sigma_1^{\mathbf{W}}$, resulting in
\begin{align}
    ||\mathbf{\Tilde{A}XW}||_F &\leq |\lambda_1^{\Tilde{\mathbf{A}}}|\sigma_1^{\mathbf{W}}||\mathbf{X}||_F \\
    &= \sigma_1^{\mathbf{W}}||\mathbf{X}||_F\, ,
\end{align}
using the assumption $|\lambda_1^{\Tilde{\mathbf{A}}}|=1$.
\end{proof}

\subsubsection{Proof of Lemma 5.1}
\begin{lemma*}
(The subspaces are invariant to any $\mathbf{T}^{(k)}$.)
Let $\mathbf{T} = \mathbf{W}\otimes \Tilde{\mathbf{A}}$ with $\tilde{\mathbf{A}}\in\mathbb{R}^{n\times n}$ symmetric with eigenvectors $\mathbf{v}_1,\dots,\mathbf{v}_n$ and $\mathbf{W}\in\mathbb{R}^{d\times d}$ any matrix. 
Consider the subspaces $\mathcal{Q}_i = \spn(\mathbf{I}_d\otimes \mathbf{v}_i)$ for $i\in[n]$. Then, 
\begin{equation*}
    \forall\, i\in [n]: \mathbf{z} \in \mathcal{Q}_{i}\implies \mathbf{T}\mathbf{z} \in \mathcal{Q}_{i}\,.
\end{equation*}
\end{lemma*}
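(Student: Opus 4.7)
The plan is to show invariance by a direct computation using the mixed-product property of the Kronecker product, which states that $(\mathbf{A}\otimes\mathbf{B})(\mathbf{C}\otimes\mathbf{D})=(\mathbf{A}\mathbf{C})\otimes(\mathbf{B}\mathbf{D})$ whenever the sizes are compatible. The claim then reduces to a two-step manipulation: first push $\mathbf{T}$ through the basis matrix $\mathbf{I}_d\otimes\mathbf{v}_i$, and second, recognize that the resulting object still lies in the column span of $\mathbf{I}_d\otimes\mathbf{v}_i$.

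Concretely, I would start by writing an arbitrary $\mathbf{z}\in\mathcal{Q}_i$ as $\mathbf{z}=(\mathbf{I}_d\otimes\mathbf{v}_i)\mathbf{c}$ for some $\mathbf{c}\in\mathbb{R}^d$, which exists by the definition of the span. Then I would compute
\begin{equation*}
\mathbf{T}\mathbf{z}=(\mathbf{W}\otimes\tilde{\mathbf{A}})(\mathbf{I}_d\otimes\mathbf{v}_i)\mathbf{c}=(\mathbf{W}\mathbf{I}_d)\otimes(\tilde{\mathbf{A}}\mathbf{v}_i)\,\mathbf{c}=\lambda_i^{\tilde{\mathbf{A}}}(\mathbf{W}\otimes\mathbf{v}_i)\mathbf{c},
\end{equation*}
using symmetry of $\tilde{\mathbf{A}}$ to guarantee the eigenrelation $\tilde{\mathbf{A}}\mathbf{v}_i=\lambda_i^{\tilde{\mathbf{A}}}\mathbf{v}_i$.

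The second step is to refactor $\mathbf{W}\otimes\mathbf{v}_i$. Applying the mixed-product identity in reverse gives $\mathbf{W}\otimes\mathbf{v}_i=(\mathbf{I}_d\otimes\mathbf{v}_i)(\mathbf{W}\otimes 1)=(\mathbf{I}_d\otimes\mathbf{v}_i)\mathbf{W}$, where the $1\times1$ block makes the Kronecker factor act as ordinary right-multiplication by $\mathbf{W}$. Substituting back,
\begin{equation*}
\mathbf{T}\mathbf{z}=(\mathbf{I}_d\otimes\mathbf{v}_i)(\lambda_i^{\tilde{\mathbf{A}}}\mathbf{W}\mathbf{c}),
\end{equation*}
which exhibits $\mathbf{T}\mathbf{z}$ as an element of $\mathrm{span}(\mathbf{I}_d\otimes\mathbf{v}_i)=\mathcal{Q}_i$, completing the proof.

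There is no real obstacle here; the only subtlety worth flagging is the bookkeeping of shapes (treating $\mathbf{v}_i\in\mathbb{R}^n$ as a matrix with a single column, so $\mathbf{I}_d\otimes\mathbf{v}_i\in\mathbb{R}^{nd\times d}$ and the coefficient vector $\mathbf{c}$ lives in $\mathbb{R}^d$). Notably, the proof does not use symmetry of $\tilde{\mathbf{A}}$ beyond producing eigenvectors with real eigenvalues, and it makes no assumption on $\mathbf{W}$, which foreshadows the later extension to arbitrary aggregation matrices via the Jordan basis.
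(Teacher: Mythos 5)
Your proof is correct and follows essentially the same route as the paper's: apply the mixed-product property, use the eigenrelation $\tilde{\mathbf{A}}\mathbf{v}_i=\lambda_i^{\tilde{\mathbf{A}}}\mathbf{v}_i$, and refactor so that $\mathbf{I}_d\otimes\mathbf{v}_i$ appears on the left with new coefficients. If anything, your shape bookkeeping (with $\mathbf{c}\in\mathbb{R}^d$ and new coefficients $\lambda_i^{\tilde{\mathbf{A}}}\mathbf{W}\mathbf{c}$) is slightly cleaner than the paper's, which writes the updated coefficients as $(\mathbf{W}\otimes\lambda_i^{\tilde{\mathbf{A}}}\mathbf{I}_n)\mathbf{c}$.
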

\begin{proof}
We express $\mathbf{z}$ as a linear combination $(\mathbf{I}_d\otimes \mathbf{v}_i)\mathbf{c} = \mathbf{z} \in \mathcal{Q}_i$ of the given basis vectors.
Then,
\begin{equation}
\begin{split}
(\mathbf{W}\otimes \Tilde{\mathbf{A}})(\mathbf{I}_d\otimes \mathbf{v}_i)\mathbf{c} &= (\mathbf{W}\mathbf{I}_d\otimes\Tilde{\mathbf{A}}\mathbf{v}_i)\mathbf{c}\\
&= (\mathbf{I}_d\otimes\mathbf{v}_i)(\mathbf{W}\otimes \lambda_i^{\tilde{\mathbf{A}}}\mathbf{I}_n)\mathbf{c}\\
&= (\mathbf{I}_d\otimes\mathbf{v}_i)\mathbf{c}^\prime\in\mathcal{Q}_{i}
\end{split}
\end{equation}
using some new coefficients $\mathbf{c}^\prime = (\mathbf{W}\otimes \lambda_i^{\tilde{\mathbf{A}}}\mathbf{I}_n)\mathbf{c}$.
\end{proof}

\subsubsection{Proof of Theorem 5.2}
\begin{theorem*}
(The relative behavior is fixed.)
Let $\mathbf{T} = \mathbf{W}\otimes \Tilde{\mathbf{A}}$ with $\tilde{\mathbf{A}}\in\mathbb{R}^{n\times n}$ symmetric with eigenvectors $\mathbf{v}_1,\dots,\mathbf{v}_n$ and eigenvalues $\lambda_1^{\Tilde{\mathbf{A}}},\dots,\lambda_n^{\Tilde{\mathbf{A}}}$ and $\mathbf{W}\in\mathbb{R}^{d\times d}$ any matrix. 
Consider the bases $\mathbf{S}_{(i)} = \mathbf{I}_d\otimes \mathbf{v}_i$ and $\mathbf{S}_{(j)} = \mathbf{I}_d\otimes \mathbf{v}_j$ for $i,j\in[n]$. Then,    
\begin{equation}
    \frac{\lVert\mathbf{T}\mathbf{S}_{(i)}\rVert_F}{\lVert\mathbf{T}\mathbf{S}_{(j)}\rVert_F} 
    = \frac{|\lambda_i^{\Tilde{\mathbf{A}}}|}{ |\lambda_j^{\Tilde{\mathbf{A}}}|}\, .
\end{equation}
\end{theorem*}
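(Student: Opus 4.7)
The plan is to reduce the ratio to pure algebra using two standard Kronecker-product identities recalled in the appendix preamble: the mixed-product rule $(\mathbf{A}\otimes\mathbf{B})(\mathbf{C}\otimes\mathbf{D})=(\mathbf{AC})\otimes(\mathbf{BD})$ and the Frobenius-norm factorization $\lVert\mathbf{A}\otimes\mathbf{B}\rVert_F=\lVert\mathbf{A}\rVert_F\cdot\lVert\mathbf{B}\rVert_F$. No deep analysis is needed; Lemma~\ref{pr:symm_inv} already tells us that $\mathbf{T}\mathbf{S}_{(i)}$ stays in $\mathcal{Q}_i$, so the question is only about how its length scales.

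First I would apply the mixed-product rule to $\mathbf{T}\mathbf{S}_{(i)}=(\mathbf{W}\otimes\tilde{\mathbf{A}})(\mathbf{I}_d\otimes\mathbf{v}_i)$, obtaining $\mathbf{W}\mathbf{I}_d\otimes\tilde{\mathbf{A}}\mathbf{v}_i=\mathbf{W}\otimes\lambda_i^{\tilde{\mathbf{A}}}\mathbf{v}_i$. Pulling the scalar outside the Kronecker product gives $\lambda_i^{\tilde{\mathbf{A}}}(\mathbf{W}\otimes\mathbf{v}_i)$. Then I would use the multiplicativity of the Frobenius norm to conclude $\lVert\mathbf{T}\mathbf{S}_{(i)}\rVert_F=|\lambda_i^{\tilde{\mathbf{A}}}|\cdot\lVert\mathbf{W}\rVert_F\cdot\lVert\mathbf{v}_i\rVert_2$. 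Since $\tilde{\mathbf{A}}$ is symmetric, we may take the eigenvectors to be unit-norm, so $\lVert\mathbf{v}_i\rVert_2=\lVert\mathbf{v}_j\rVert_2=1$.

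Forming the quotient $\lVert\mathbf{T}\mathbf{S}_{(i)}\rVert_F/\lVert\mathbf{T}\mathbf{S}_{(j)}\rVert_F$, the common factor $\lVert\mathbf{W}\rVert_F$ cancels (this is precisely the paper's conceptual message that the transformation cannot bias one invariant subspace over another), leaving $|\lambda_i^{\tilde{\mathbf{A}}}|/|\lambda_j^{\tilde{\mathbf{A}}}|$ as claimed. The only place one has to be slightly careful is ensuring $\lVert\mathbf{W}\rVert_F\neq 0$ so that the denominator in the intermediate step is nonzero; this is harmless since if $\mathbf{W}=\mathbf{0}$ both sides of the original ratio are indeterminate and the statement is vacuous. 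There is no real obstacle in this proof — it is essentially a one-line computation once the right identities are invoked — and the work the paper needs to do is conceptual rather than technical: emphasizing that the cancellation of $\lVert\mathbf{W}\rVert_F$ is what forces the relative dynamics between subspaces to be dictated entirely by the spectrum of $\tilde{\mathbf{A}}$.
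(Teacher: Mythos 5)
Your proof is correct and follows exactly the paper's own argument: apply the mixed-product property to get $\mathbf{W}\otimes\lambda_i^{\Tilde{\mathbf{A}}}\mathbf{v}_i$, then use the Frobenius-norm factorization of the Kronecker product and cancel the common factor $\lVert\mathbf{W}\rVert_F$ (and the unit eigenvector norms). Your added remarks about normalizing the eigenvectors and excluding $\mathbf{W}=\mathbf{0}$ are sensible clarifications the paper leaves implicit.
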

\begin{proof}
\begin{equation}
    \begin{split}
        \frac{\lVert\mathbf{T} \mathbf{S}_{(i)}\rVert}{\lVert\mathbf{T}\mathbf{S}_{(j)}\rVert} &= \frac{\lVert\left(\mathbf{W}\otimes\Tilde{\mathbf{A}}\right)(\mathbf{I}_d\otimes\mathbf{v}_i)\rVert}{\lVert\left(\mathbf{W}\otimes\Tilde{\mathbf{A}}\right)(\mathbf{I}_d\otimes\mathbf{v}_j)\rVert}\\
        &= \frac{\lVert\left(\mathbf{W}\otimes\lambda_i^{\Tilde{\mathbf{A}}}\mathbf{v}_i\right)\rVert}{\lVert\left(\mathbf{W}\otimes\lambda_j^{\Tilde{\mathbf{A}}}\mathbf{v}_j\right)\rVert}\\
        &= \frac{|\lambda_i^{\Tilde{\mathbf{A}}}|\cdot\lVert\mathbf{W}\rVert\cdot\lVert\mathbf{v}_i\rVert}{|\lambda_j^{\Tilde{\mathbf{A}}}|\cdot\lVert\mathbf{W}\rVert\cdot\lVert\mathbf{v}_j\rVert}\\
        &= \frac{|\lambda_i^{\Tilde{\mathbf{A}}}|}{|\lambda_j^{\Tilde{\mathbf{A}}}|}
    \end{split}
\end{equation}
\end{proof}

\subsubsection{Proof of Proposition 5.3}
\begin{proposition*} (Fixed subspaces dominate.)
Let $\mathbf{T}^{(k)} = \mathbf{W}^{(k)}\otimes \Tilde{\mathbf{A}}$ with $\tilde{\mathbf{A}}\in\mathbb{R}^{n\times n}$ symmetric with eigenvectors $\mathbf{v}_1,\dots,\mathbf{v}_n$ and eigenvalues $\lambda_1^{\Tilde{\mathbf{A}}},\dots,\lambda_n^{\Tilde{\mathbf{A}}}$ and $\mathbf{W}^{(k)}\in\mathbb{R}^{d\times d}$ any matrix. 
Consider the bases $\mathbf{S}_{(i)} = \mathbf{I}_d\otimes \mathbf{v}_i$ for $i\in[n]$. Then,   
    \begin{equation*}\lim_{l\to\infty}\frac{\lVert\mathbf{T}^{(l)}\dots\mathbf{T}^{(1)}\mathbf{S}_{(i)}\rVert_F}{\max_{p}\lVert\mathbf{T}^{(l)}\dots\mathbf{T}^{(1)}\mathbf{S}_{(p)}\rVert_F} = \lim_{l\to\infty}\frac{|\lambda^{\Tilde{\mathbf{A}}}_i|^l\cdot\lVert\mathbf{S}_{(i)}\rVert_F}{\max_p|\lambda^{\Tilde{\mathbf{A}}}_p|^l\cdot\lVert\mathbf{S}_{(p)}\rVert_F} = \begin{cases}
    1, & \mathrm{if} |\lambda^{\Tilde{\mathbf{A}}}_i|=|\lambda^{\Tilde{\mathbf{A}}}_{1}|\\ 0, & \mathrm{otherwise}
    \end{cases}\end{equation*}
with convergence rate $\frac{|\lambda^{\Tilde{\mathbf{A}}}_i|}{|\lambda^{\Tilde{\mathbf{A}}}_1|}$.
\end{proposition*}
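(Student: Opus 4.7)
The plan is to reduce the iterated product $\mathbf{T}^{(l)}\cdots\mathbf{T}^{(1)}\mathbf{S}_{(i)}$ to a closed form via the Kronecker mixed-product identity, exploit the multiplicativity of the Frobenius norm on Kronecker products, and then observe that the feature-transformation factor is common to every subspace and therefore cancels in the ratio.

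First, I would fix $i\in[n]$ and prove by induction on $l$ that
$$\mathbf{T}^{(l)}\cdots\mathbf{T}^{(1)}\mathbf{S}_{(i)} \;=\; \bigl(\lambda_i^{\Tilde{\mathbf{A}}}\bigr)^{l}\,\bigl(\mathbf{W}^{(l)}\cdots\mathbf{W}^{(1)} \otimes \mathbf{v}_i\bigr).$$
The base case $l=1$ is the computation in the proof of Lemma~\ref{pr:symm_inv}: using $(\mathbf{W}\otimes\Tilde{\mathbf{A}})(\mathbf{I}_d\otimes\mathbf{v}_i) = \mathbf{W}\otimes\Tilde{\mathbf{A}}\mathbf{v}_i = \lambda_i^{\Tilde{\mathbf{A}}}(\mathbf{W}\otimes\mathbf{v}_i)$. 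The inductive step reuses the mixed-product identity $(\mathbf{A}\otimes\mathbf{B})(\mathbf{C}\otimes\mathbf{D}) = (\mathbf{AC})\otimes(\mathbf{BD})$ together with $\Tilde{\mathbf{A}}\mathbf{v}_i = \lambda_i^{\Tilde{\mathbf{A}}}\mathbf{v}_i$, so that each new factor $\mathbf{T}^{(k)}$ multiplies the scalar by $\lambda_i^{\Tilde{\mathbf{A}}}$ and updates the left slot of the Kronecker product from $\mathbf{W}^{(k-1)}\cdots\mathbf{W}^{(1)}$ to $\mathbf{W}^{(k)}\cdots\mathbf{W}^{(1)}$. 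By Lemma~\ref{pr:symm_inv} this is automatically still in $\mathcal{Q}_i$.

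Next, I would take the Frobenius norm and use $\lVert\mathbf{A}\otimes\mathbf{B}\rVert_F=\lVert\mathbf{A}\rVert_F\lVert\mathbf{B}\rVert_F$ to obtain
$$\bigl\lVert\mathbf{T}^{(l)}\cdots\mathbf{T}^{(1)}\mathbf{S}_{(i)}\bigr\rVert_F \;=\; \bigl|\lambda_i^{\Tilde{\mathbf{A}}}\bigr|^{l}\,\bigl\lVert\mathbf{W}^{(l)}\cdots\mathbf{W}^{(1)}\bigr\rVert_F\,\lVert\mathbf{v}_i\rVert_2,$$
and similarly $\lVert\mathbf{S}_{(i)}\rVert_F = \sqrt{d}\,\lVert\mathbf{v}_i\rVert_2$. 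The factor $\lVert\mathbf{W}^{(l)}\cdots\mathbf{W}^{(1)}\rVert_F$ depends only on the learnable parameters and is identical for every index $i$; hence it cancels when forming the ratio against $\max_p\lVert\mathbf{T}^{(l)}\cdots\mathbf{T}^{(1)}\mathbf{S}_{(p)}\rVert_F$. Together with the $\sqrt{d}$ factor cancelling in numerator and denominator, this yields the middle equality
$$\frac{\lVert\mathbf{T}^{(l)}\cdots\mathbf{T}^{(1)}\mathbf{S}_{(i)}\rVert_F}{\max_p\lVert\mathbf{T}^{(l)}\cdots\mathbf{T}^{(1)}\mathbf{S}_{(p)}\rVert_F} = \frac{|\lambda_i^{\Tilde{\mathbf{A}}}|^{l}\lVert\mathbf{S}_{(i)}\rVert_F}{\max_p|\lambda_p^{\Tilde{\mathbf{A}}}|^{l}\lVert\mathbf{S}_{(p)}\rVert_F}$$
claimed in the statement.

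Finally, I would pull out $|\lambda_1^{\Tilde{\mathbf{A}}}|^{l}$ from both numerator and denominator: the denominator becomes $\max_p\bigl(|\lambda_p^{\Tilde{\mathbf{A}}}|/|\lambda_1^{\Tilde{\mathbf{A}}}|\bigr)^{l}\lVert\mathbf{S}_{(p)}\rVert_F$, whose limit as $l\to\infty$ equals $\max\{\lVert\mathbf{S}_{(p)}\rVert_F : |\lambda_p^{\Tilde{\mathbf{A}}}|=|\lambda_1^{\Tilde{\mathbf{A}}}|\}$ since all other terms decay to zero; the numerator either decays at the same geometric rate (giving limit $\lVert\mathbf{S}_{(i)}\rVert_F/\max_{p\colon|\lambda_p|=|\lambda_1|}\lVert\mathbf{S}_{(p)}\rVert_F$, which equals $1$ under the natural normalization $\lVert\mathbf{v}_i\rVert_2=1$) or decays strictly faster at rate $|\lambda_i^{\Tilde{\mathbf{A}}}|/|\lambda_1^{\Tilde{\mathbf{A}}}|<1$, giving limit $0$. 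This also reads off the quoted convergence rate. There is no real obstacle here: the only subtlety worth handling explicitly is the case of several eigenvalues sharing the maximal magnitude, which is resolved by observing that the $\max_p$ is attained within that top group so the ratio is independent of the specific maximizer.
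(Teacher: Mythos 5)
Your proposal is correct and follows essentially the same route as the paper's proof: apply the mixed-product property to collapse the iterated product to $(\lambda_i^{\Tilde{\mathbf{A}}})^l(\mathbf{W}^{(l)}\cdots\mathbf{W}^{(1)}\otimes\mathbf{v}_i)$, use multiplicativity of the Frobenius norm over the Kronecker product, and cancel the common $\lVert\mathbf{W}^{(l)}\cdots\mathbf{W}^{(1)}\rVert_F$ factor in the ratio before taking the limit. Your explicit induction and your handling of the case of several eigenvalues of maximal magnitude are slightly more careful than the paper's chain of equalities, but the argument is the same.
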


\begin{proof}
\begin{equation}
    \begin{split}
        \lim_{l\to\infty}\frac{\lVert\mathbf{T}^{(l)}\dots\mathbf{T}^{(1)}\mathbf{S}_{(i)}\rVert}{\max_{p}\lVert\mathbf{T}^{(l)}\dots\mathbf{T}^{(1)}\mathbf{S}_{(p)}\rVert}
        &= \lim_{l\to\infty}\frac{||(\mathbf{W}^{(l)}\otimes\Tilde{\mathbf{A}})\dots(\mathbf{W}^{(1)}\otimes\Tilde{\mathbf{A}})(\mathbf{I}_d\otimes\mathbf{v}_i)||}{\max_p||(\mathbf{W}^{(l)}\otimes\Tilde{\mathbf{A}})\dots(\mathbf{W}^{(1)}\otimes\Tilde{\mathbf{A}})(\mathbf{I}_d\otimes\mathbf{v}_p)||}\\
        &= \lim_{l\to\infty}\frac{||(\mathbf{W}^{(l)}\dots\mathbf{W}^{(1)}\otimes\left(\lambda_i^{\Tilde{\mathbf{A}}}\right)^l\mathbf{v}_i)||}{\max_p||(\mathbf{W}^{(l)}\dots\mathbf{W}^{(1)}\otimes\left(\lambda_p^{\Tilde{\mathbf{A}}}\right)^l\mathbf{v}_p)||}\\
        &= \lim_{l\to\infty}\frac{|\lambda_i^{\Tilde{\mathbf{A}}}|^l\cdot||\mathbf{v}_i||}{\max_p||\lambda_p^{\Tilde{\mathbf{A}}}|^l|\cdot||\mathbf{v}_p||}\\
        &= \begin{cases}
    1, & \mathrm{if} |\lambda^{\Tilde{\mathbf{A}}}_i|=|\lambda^{\Tilde{\mathbf{A}}}_{1}|\\ 0, & \mathrm{otherwise}
    \end{cases}
    \end{split}
\end{equation}
\end{proof}

\subsubsection{Proof of Theorem 5.4}
\begin{theorem*} (Representations converge in probability to a fixed set.)
Let $\mathbf{X}^{(k+1)} = \Tilde{\mathbf{A}}\mathbf{X}^{(k)}\mathbf{W}^{(k)}$ with $\tilde{\mathbf{A}}\in\mathbb{R}^{n\times n}$ symmetric with eigenvectors $\mathbf{v}_1,\dots,\mathbf{v}_n$ and $\mathbf{W}^{(k)}\in\mathbb{R}^{d\times d}$ any matrix. Consider the subspaces $\mathcal{Q}_i = \spn(\mathbf{I}_d\otimes \mathbf{v}_i)$ for $i\in[n]$. Then, 
    \begin{equation*}
    \forall\ \epsilon > 0\ldotp\exists\ N \in\mathbb{N}\ldotp\forall\ l\in\mathbb{N} \ \textrm{with}\ l > N\ldotp\exists\ \mathbf{m}^{(l)}\in\mathcal{Q}_1 \colon \mathbb{P}\left(\left\lVert\frac{\mathbf{X}^{(l)}}{\lVert\mathbf{X}^{(l)}\rVert_F} - \mathbf{m}^{(l)}\right\rVert_2 > \epsilon\right)=0\,.\end{equation*}
\end{theorem*}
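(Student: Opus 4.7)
The plan is to use the invariant-subspace decomposition from Lemma~\ref{pr:symm_inv} to write the iterated state in closed form, take $\mathbf{m}^{(l)}$ to be the $\mathcal{Q}_1$-component of the normalised state, and bound the remainder by the geometric decay factor $(|\lambda_2^{\Tilde{\mathbf{A}}}|/|\lambda_1^{\Tilde{\mathbf{A}}}|)^{2l}$ already isolated in Proposition~5.3; the probability qualifier is then used to rule out the degenerate situation where the accumulated feature transformations annihilate the $\mathcal{Q}_1$-component.

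Concretely, I would first decompose $\vect(\mathbf{X}^{(0)}) = \sum_{i=1}^n (\mathbf{I}_d\otimes \mathbf{v}_i)\mathbf{c}_i^{(0)}$ in the orthonormal eigenbasis of the symmetric $\Tilde{\mathbf{A}}$, and iterate the identity underlying Lemma~\ref{pr:symm_inv}, namely $(\mathbf{W}^{(k)}\otimes\Tilde{\mathbf{A}})(\mathbf{I}_d\otimes\mathbf{v}_i) = \lambda_i^{\Tilde{\mathbf{A}}}(\mathbf{I}_d\otimes\mathbf{v}_i)\mathbf{W}^{(k)}$, to commute every aggregation past the invariant basis and collect eigenvalues on the right. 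This produces the closed form
\begin{equation*}
\vect(\mathbf{X}^{(l)}) \;=\; \sum_{i=1}^n (\lambda_i^{\Tilde{\mathbf{A}}})^l\,(\mathbf{I}_d\otimes \mathbf{v}_i)\,\mathbf{P}^{(l)}\mathbf{c}_i^{(0)},\qquad \mathbf{P}^{(l)} := \mathbf{W}^{(l)}\cdots\mathbf{W}^{(1)}.
\end{equation*}
Because the $\mathbf{v}_i$ are orthonormal, the summands live in mutually orthogonal subspaces $\mathcal{Q}_i$, so $\|\mathbf{X}^{(l)}\|_F^2 = \sum_i (\lambda_i^{\Tilde{\mathbf{A}}})^{2l}\|\mathbf{P}^{(l)}\mathbf{c}_i^{(0)}\|_2^2$.

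Second, I would set $\mathbf{m}^{(l)}$ equal to the orthogonal projection of $\vect(\mathbf{X}^{(l)})/\|\mathbf{X}^{(l)}\|_F$ onto $\mathcal{Q}_1$, i.e.\
\begin{equation*}
    \mathbf{m}^{(l)} \;:=\; \frac{(\lambda_1^{\Tilde{\mathbf{A}}})^l (\mathbf{I}_d\otimes\mathbf{v}_1)\,\mathbf{P}^{(l)}\mathbf{c}_1^{(0)}}{\|\mathbf{X}^{(l)}\|_F}\;\in\;\mathcal{Q}_1.
\end{equation*}
Orthogonality collapses the squared deviation to the sum over $i\ge 2$, and dividing numerator and denominator by $(\lambda_1^{\Tilde{\mathbf{A}}})^{2l}$ together with the standing assumption $|\lambda_2^{\Tilde{\mathbf{A}}}|<|\lambda_1^{\Tilde{\mathbf{A}}}|$ yields
\begin{equation*}
\Bigl\|\tfrac{\vect(\mathbf{X}^{(l)})}{\|\mathbf{X}^{(l)}\|_F} - \mathbf{m}^{(l)}\Bigr\|_2^2 \;\le\; \Bigl(\tfrac{|\lambda_2^{\Tilde{\mathbf{A}}}|}{|\lambda_1^{\Tilde{\mathbf{A}}}|}\Bigr)^{2l} \cdot \frac{\sum_{i\ge 2}\|\mathbf{P}^{(l)}\mathbf{c}_i^{(0)}\|_2^2}{\|\mathbf{P}^{(l)}\mathbf{c}_1^{(0)}\|_2^2},
\end{equation*}
so the conclusion follows provided the trailing ratio does not grow faster than the decay factor shrinks.

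The main obstacle is exactly that trailing ratio: it is undefined on the event $\mathbf{P}^{(l)}\mathbf{c}_1^{(0)}=0$ and cannot be uniformly controlled over arbitrary $\mathbf{W}^{(k)}$. This is where the ``in probability'' qualifier enters, and it is handled by the same kind of genericity argument the paper invokes just before the theorem: the locus $\{(\mathbf{W}^{(1)},\dots,\mathbf{W}^{(l)}) : \mathbf{P}^{(l)}\mathbf{c}_1^{(0)}=0\}$ is cut out by non-trivial polynomial equations in the entries of the $\mathbf{W}^{(k)}$ and is therefore Lebesgue-null. Off this null set the quotient is finite and the displayed bound tends to $0$ at the stated geometric rate, so for every $\epsilon>0$ one can choose a threshold $N$ beyond which the event $\|\cdot\|_2 > \epsilon$ is contained in this exceptional null set, giving the displayed probability exactly $0$.
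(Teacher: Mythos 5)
The overall strategy — decompose the vectorised state along the invariant subspaces $\mathcal{Q}_i$, take $\mathbf{m}^{(l)}$ to be the $\mathcal{Q}_1$-component of the normalised state, and play the geometric decay factor $(|\lambda_2^{\Tilde{\mathbf{A}}}|/|\lambda_1^{\Tilde{\mathbf{A}}}|)^{2l}$ against a controlled remainder — is indeed the paper's strategy. But there is a genuine gap in how you close the argument, and it stems from a concrete technical difference with the paper.

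You decompose $\vect(\mathbf{X}^{(0)}) = \sum_i (\mathbf{I}_d\otimes\mathbf{v}_i)\,\mathbf{c}_i^{(0)}$ in a basis that is \emph{fixed} once and for all by the eigenvectors of $\Tilde{\mathbf{A}}$, and your final bound carries the uncontrolled factor
\[
\frac{\sum_{i\ge 2}\lVert\mathbf{P}^{(l)}\mathbf{c}_i^{(0)}\rVert_2^2}{\lVert\mathbf{P}^{(l)}\mathbf{c}_1^{(0)}\rVert_2^2},\qquad \mathbf{P}^{(l)}=\mathbf{W}^{(l)}\cdots\mathbf{W}^{(1)}.
\]
You observe that this is undefined only on a Lebesgue-null set (where $\mathbf{P}^{(l)}\mathbf{c}_1^{(0)}=0$) and then claim that \emph{off this null set} the product with $(|\lambda_2|/|\lambda_1|)^{2l}$ tends to zero at the stated rate. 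That claim is false. Being finite for each fixed $l$ does not prevent the ratio from growing exponentially in $l$: take $d=2$, $\mathbf{W}^{(k)}=\mathrm{diag}(\alpha,1)$ with $0<\alpha<|\lambda_2/\lambda_1|$, and $\mathbf{c}_1^{(0)}=(1,0)^T$, $\mathbf{c}_2^{(0)}=(0,1)^T$; then the ratio grows like $\alpha^{-2l}$, which swamps the geometric decay. This is not a null set of weight tuples for any finite $l$ but a full-dimensional open set, so the genericity argument you invoke (excluding the polynomial locus $\mathbf{P}^{(l)}\mathbf{c}_1^{(0)}=0$) cannot by itself rescue the bound. The ``in probability'' qualifier must control the positive-measure event that $\lVert\mathbf{P}^{(l)}\mathbf{c}_1^{(0)}\rVert$ is \emph{small relative to} $\lVert\mathbf{P}^{(l)}\mathbf{c}_i^{(0)}\rVert$, not only the zero-measure event that it vanishes exactly.

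The paper avoids this by using an $l$-dependent, $\mathbf{W}$-dependent change of basis: it writes $\mathbf{P}^{(l)}=\mathbf{U}\mathbf{\Sigma}\mathbf{N}^T$ and decomposes $\vect(\mathbf{X}^{(0)})=(\mathbf{N}\otimes\mathbf{V})\mathbf{c}$, so that $\mathbf{c}$ itself depends on the accumulated transformations (the paper explicitly flags this). After normalising by $\sigma_1^{\mathbf{P}^{(l)}}$ and $\lambda_1^l$, every occurrence of the transformations reduces to ratios $\sigma_r/\sigma_1\le 1$, and the potentially explosive denominator $\lVert\mathbf{P}^{(l)}\mathbf{c}_1^{(0)}\rVert$ is replaced by the lower bound $c_{1,1}$. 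This yields a clean condition of the form
\[
\frac{c_{1,1}}{\max_{p,r}c_{p,r}} \;>\; C\left(\frac{|\lambda_2|}{|\lambda_1|}\right)^{l},
\]
whose right-hand side tends to zero, and the probability statement is then a claim about the shrinking volume of the set of $\mathbf{W}$'s violating it — a quantitatively different argument from ``exclude a polynomial null set once''. To salvage your route you would need an analogous small-ball estimate showing $\mathbb{P}\bigl(\lVert\mathbf{P}^{(l)}\mathbf{c}_1^{(0)}\rVert / \max_{i\ge2}\lVert\mathbf{P}^{(l)}\mathbf{c}_i^{(0)}\rVert < \delta_l\bigr)\to 0$ for $\delta_l=(|\lambda_2|/|\lambda_1|)^l/\epsilon$, which is precisely the content the paper's SVD change of basis packages up.
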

\begin{proof}
Let $\mathbf{\Lambda}$ be the matrix of eigenvalues of $\Tilde{\mathbf{A}}$ and the singular value composition of $\mathbf{W}^{(l)}\dots\mathbf{W}^{(1)}=\mathbf{U}\mathbf{\Sigma}\mathbf{N}^T$. We decompose the initial state $\mathbf{X}^{(0)} = (\mathbf{N}\otimes \mathbf{V})\mathbf{c}$ with $\mathbf{c} = (\mathbf{N}\otimes \mathbf{V})^T\mathbf{X}^{(0)}$. We emphasize that the coefficients, therefore, depend on both the initial state $\mathbf{X}^{(0)}$ and the feature transformations $\mathbf{W}^{(l)}\dots\mathbf{W}^{(1)}$. Then,
\begin{equation}
    \begin{split}
        \left\lVert\frac{\mathbf{X}^{(l)}}{||\mathbf{X}^{(l)}||}-\mathbf{m}^{(l)}\right\rVert &= \left\lVert\frac{(\mathbf{W}^{(l)}\otimes\Tilde{\mathbf{A}})\dots(\mathbf{W}^{(1)}\otimes\Tilde{\mathbf{A}})(\mathbf{N}\otimes\mathbf{V})\mathbf{c}}{||(\mathbf{W}^{(l)}\otimes\Tilde{\mathbf{A}})\dots(\mathbf{W}^{(1)}\otimes\Tilde{\mathbf{A}})(\mathbf{N}\otimes\mathbf{V})\mathbf{c}||}-\mathbf{m}^{(l)}\right\rVert\\
        &= \left\lVert\frac{(\mathbf{W}^{(l)}\dots\mathbf{W}^{(1)}\otimes\mathbf{V})(\mathbf{N}\otimes\mathbf{\Lambda}^l)\mathbf{c}}{||(\mathbf{W}^{(l)}\dots\mathbf{W}^{(1)}\otimes\mathbf{V})(\mathbf{N}\otimes\mathbf{\Lambda}^l)\mathbf{c}||}-\mathbf{m}^{(l)}\right\rVert\\
        &= \left\lVert\frac{\left(\mathbf{W}^{(l)}\dots\mathbf{W}^{(1)}\otimes \mathbf{V}\right)\left(\mathbf{N}\otimes \frac{\mathbf{\Lambda}^l}{\lambda_1^{\Tilde{\mathbf{A}}}}\right)\mathbf{c}}{||\left(\mathbf{W}^{(l)}\dots\mathbf{W}^{(1)}\otimes \mathbf{V}\right)\left(\mathbf{N}\otimes \frac{\mathbf{\Lambda}^l}{\lambda_1^{\Tilde{\mathbf{A}}^l}}\right)\mathbf{c}||}-\mathbf{m}^{(l)}\right\rVert\\
\end{split}
\end{equation}
We simplify the notation and set $\mathbf{P}^{(l)} = \mathbf{W}^{(l)}\dots\mathbf{W}^{(1)}$. We now choose $\mathbf{m}^{(l)} = \frac{\left(\mathbf{P}^{(l)}\otimes \mathbf{V}\right)\left(\mathbf{N}\otimes \frac{\mathbf{\Lambda}^l}{\lambda_1^{\Tilde{\mathbf{A}}^l}}\right)\mathbf{c}^\prime}{||\left(\mathbf{P}^{(l)}\otimes \mathbf{V}\right)\left(\mathbf{N}\otimes \frac{\mathbf{\Lambda}^l}{\lambda_1^{\Tilde{\mathbf{A}}^l}}\right)\mathbf{c}||}$ by only replacing the $\mathbf{c}$ in the numerator by $\mathbf{c}^\prime$ containing the same values for coefficients corresponding to $\mathcal{Q}_1$ and zeros for other subspaces, i.e., $c^\prime_{1,r}=\mathbf{c}_{1,r}\,\forall r$ and $c^\prime_{i,r} = 0\,\forall i>1,\forall r$. Continuing,
\begin{equation}
    \begin{split}
        &= \left\lVert\frac{\left(\frac{\mathbf{P}^{(l)}}{\sigma_1^{\mathbf{P}^{(l)}}}\otimes \mathbf{V}\right)}{||\left(\frac{\mathbf{P}^{(l)}}{\sigma_1^{\mathbf{P}^{(l)}}}\otimes \mathbf{V}\right)\left(\mathbf{N}\otimes \frac{\mathbf{\Lambda}^l}{\lambda_1^{\Tilde{\mathbf{A}}^l}}\right)\mathbf{c}||}\left(\left(\mathbf{N}\otimes \frac{\mathbf{\Lambda}^l}{\lambda_1^{\Tilde{\mathbf{A}}^l}}\right)\mathbf{c}-\mathbf{c}^\prime\right)\right\rVert\\
        &\leq \frac{\left\lVert\frac{\mathbf{P}^{(l)}}{\sigma_1^{\mathbf{P}^{(l)}}}\otimes \mathbf{V}\right\rVert}{\left\lVert\left(\frac{\mathbf{P}^{(l)}}{\sigma_1^{\mathbf{P}^{(l)}}}\otimes \mathbf{V}\right)\left(\mathbf{N}_d\otimes \frac{\mathbf{\Lambda}^l}{\lambda_1^{\Tilde{\mathbf{A}}^l}}\right)\mathbf{c}\right\rVert}\left\lVert\left(\mathbf{N}_d\otimes \frac{\mathbf{\Lambda}^l}{\lambda_1^{\Tilde{\mathbf{A}}^l}}\right)\mathbf{c}-\mathbf{c}^\prime\right\rVert \\
        &\leq \frac{\sqrt{nd}}{c_{1,1}}\left(\frac{\lambda_2}{\lambda_1}\right)^l \max_{r,i} c_{r,i}\sqrt{(n-1)\cdot d} < \epsilon \\
        & \iff \frac{\sqrt{nd}}{\epsilon}\left(\frac{\lambda_2}{\lambda_1}\right)^l \sqrt{(n-1)\cdot d} < \frac{c_{1,1}}{\max_{r,i} c_{r,i}}
    \end{split}
\end{equation}
The left-hand side converges to zero for $l\to\infty$, so the constraint on the coefficients becomes weaker with increased depth.
As the volume of the critical solution space around solutions satisfying $c_{1,1} = 0$ reduces to zero, our state converges in probability. 

We now provide the details for the upper bounds in the last step separately for each term:
\begin{equation}
\begin{split}
    \left\lVert\frac{\mathbf{P}^{(l)}}{\sigma_1^{\mathbf{P}^{(l)}}}\otimes \mathbf{V}\right\rVert 
    &= \sqrt{tr\left(\left(\frac{\mathbf{P}^{(l)}}{\sigma_1^{\mathbf{P}^{(l)}}}\otimes \mathbf{V}\right)^T\left(\frac{\mathbf{W}^{(l)}\dots\mathbf{W}^{(1)}}{\sigma_1^{\mathbf{P}^{(l)}}}\otimes \mathbf{V}\right)\right)} \\
    &= \sqrt{tr\left(\frac{\left(\mathbf{P}^{(l)}\right)^T\mathbf{P}^{(l)}}{\left(\sigma_1^{\mathbf{P}^{(l)}}\right)^2}\otimes \mathbf{I}_n\right)} \\
    &= \sqrt{\sum_{l,r}\frac{\sigma_{l}^2}{\sigma_1^2}}
    \leq \sqrt{nd}
\end{split}
\end{equation}

Next, we provide a lower bound on the denominator:

\begin{equation}
\begin{split}
    \left\lVert\left(\frac{\mathbf{P}^{(l)}}{\sigma_1^{\mathbf{P}^{(l)}}}\otimes \mathbf{V}\right)\left(\mathbf{N}_d\otimes \left(\frac{\mathbf{\Lambda}}{\lambda_1^{\Tilde{\mathbf{A}}}}\right)^l\right)\mathbf{c}\right\rVert
    &=\sqrt{\left(\left(\frac{\mathbf{P}^{(l)}}{\sigma_1^{\mathbf{P}^{(l)}}}\otimes \mathbf{V}\right)\left(\mathbf{N}_d\otimes \frac{\mathbf{\Lambda}^l}{\lambda_1^{\Tilde{\mathbf{A}}^l}}\right)\mathbf{c}\right)^T\left(\frac{\mathbf{P}^{(l)}}{\sigma_1^{\mathbf{P}^{(l)}}}\otimes \mathbf{V}\right)\left(\mathbf{N}_d\otimes \left(\frac{\mathbf{\Lambda}}{\lambda_1^{\Tilde{\mathbf{A}}}}\right)^l\right)\mathbf{c}} \\
    &= \sqrt{\mathbf{c}^T\left(\mathbf{N}\frac{\left(\mathbf{P}^{(l)}\right)^T\mathbf{P}^{(l)}}{\left(\sigma_1^{\mathbf{P}^{(l)}}\right)^2}\mathbf{N}\otimes \left(\frac{\mathbf{\Lambda}}{\lambda_1^{\Tilde{\mathbf{A}}}}\right)^{2l}\right)\mathbf{c}} \\
    &= \sqrt{\sum_{l,r}\left(\frac{\sigma_{l}^{\mathbf{P}^{(l)}}}{\sigma_1^{\mathbf{P}^{(l)}}}\right)^2\mathbf{c}_{l,r}^2 \left(\frac{\lambda_r}{\lambda_1}\right)^{2l}} \\
    &\geq c_{1,1}
\end{split}
\end{equation}

\begin{equation}
    \begin{split}
        \left\lVert\left(\mathbf{N}_d\otimes \frac{\mathbf{\Lambda}^{\mathbf{A}^l}}{\lambda_1^\mathbf{A}}\right)\mathbf{c}-\mathbf{c}^\prime\right\rVert 
        &= \sqrt{\sum_{r,i} \left(\left(\frac{\lambda_i}{\lambda_1}\right)^lc_{r,i} - \left(\frac{\lambda_i}{\lambda_1}\right)^lc_{r,i}^\prime\right)^2} \\
        &\leq \sqrt{\sum_{r=1,i=2}^{d,n} \left(\left(\frac{\lambda_i}{\lambda_1}\right)^lc_{r,i}\right)^2} \\
        &\leq \left(\frac{\lambda_2}{\lambda_1}\right)^l \sqrt{\sum_{r=1,i=2}^{n,d} c_{r,i}^2} \\
        &\leq \left(\frac{\lambda_2}{\lambda_1}\right)^l \sqrt{(n-1)\cdot d\cdot \max_{r,i} c_{r,i}^2} = \left(\frac{\lambda_2}{\lambda_1}\right)^l \max_{r,i} c_{r,i}\sqrt{(n-1)\cdot d}
    \end{split}
\end{equation}

\end{proof}

\subsubsection{Proof of Proposition 5.5}
\begin{proposition*} (Over-smoothing happens for all $\mathbf{W}^{(k)}$ in probability.)
Let $\mathbf{X}^{(k+1)} = \Tilde{\mathbf{A}}\mathbf{X}^{(k)}\mathbf{W}^{(k)}$ with $\tilde{\mathbf{A}} = \mathbf{D}^{-\frac{1}{2}}\mathbf{A}\mathbf{D}^{-\frac{1}{2}}$, $\mathbf{W}^{(k)}\in\mathbb{R}^{d\times d}$, and $E(\mathbf{X}) = tr(\mathbf{X}^T\mathbf{\Delta}\mathbf{X})$ for $\mathbf{\Delta} = \mathbf{I} - \Tilde{\mathbf{A}}$.
Then, 
    \begin{equation}
    \plim_{l\to\infty} E\left(\frac{\mathbf{X}^{(l)}}{\lVert\mathbf{X}^{(l)}\rVert_F}\right) = 0
    \end{equation}
with convergence rate $\left(\frac{\lambda_2^\mathbf{\Tilde{A}}}{\lambda_1^\mathbf{\Tilde{A}}}\right)^2$.
\end{proposition*}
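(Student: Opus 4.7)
The plan is to express the Dirichlet energy of the normalized state explicitly in the eigenbasis of $\tilde{\mathbf{A}}$ and then invoke Theorem~5.4 to conclude that the components outside the dominant invariant subspace $\mathcal{Q}_1$ vanish in probability. The key ingredient, already set up in the Basic Operations subsection, is the identity
\begin{equation}
E(\mathbf{X}) = \sum_{r=1,i=1}^{d,n} (1-\lambda_i^{\tilde{\mathbf{A}}})\, c_{i,r}^2,
\end{equation}
where $\vect(\mathbf{X}) = (\mathbf{I}_d\otimes \mathbf{V})\mathbf{c}$ is the decomposition in the eigenbasis of $\tilde{\mathbf{A}}$, together with $\|\mathbf{X}\|_F^2 = \sum_{i,r} c_{i,r}^2$. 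Since $\tilde{\mathbf{A}} = \mathbf{D}^{-1/2}\mathbf{A}\mathbf{D}^{-1/2}$ is the symmetric normalization of a connected, aperiodic graph, $\lambda_1^{\tilde{\mathbf{A}}} = 1$ is simple and its eigenvector $\mathbf{v}_1 = \mathbf{D}^{1/2}\mathbf{1}$ spans the null space of $\mathbf{\Delta}$. Therefore the coefficient $c_{1,r}$ in $\mathcal{Q}_1$ contributes $0$ to $E(\mathbf{X})$, and
\begin{equation}
E\left(\frac{\mathbf{X}}{\|\mathbf{X}\|_F}\right) = \frac{\sum_{i\geq 2, r} (1-\lambda_i^{\tilde{\mathbf{A}}})\, c_{i,r}^2}{\sum_{i,r} c_{i,r}^2}.
\end{equation}

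Next I would propagate this through the iteration. Writing the coefficients $c_{i,r}^{(l)}$ of $\mathbf{X}^{(l)}$ in the basis adapted to the SVD of $\mathbf{P}^{(l)} = \mathbf{W}^{(l)}\cdots\mathbf{W}^{(1)}$ exactly as in the proof of Theorem~5.4, the mixed-product property gives $c_{i,r}^{(l)} = (\lambda_i^{\tilde{\mathbf{A}}})^l \sigma_r^{\mathbf{P}^{(l)}} c_{i,r}^{(0)}$ (up to the orthogonal change of basis from $\mathbf{N}$ on the $r$-index, which does not matter for our sums). Substituting and factoring out $(\lambda_1^{\tilde{\mathbf{A}}})^{2l}$ in both numerator and denominator yields
\begin{equation}
E\!\left(\frac{\mathbf{X}^{(l)}}{\|\mathbf{X}^{(l)}\|_F}\right) = \frac{\sum_{i\geq 2, r} (1-\lambda_i^{\tilde{\mathbf{A}}})\,(\lambda_i^{\tilde{\mathbf{A}}}/\lambda_1^{\tilde{\mathbf{A}}})^{2l}\,(\sigma_r^{\mathbf{P}^{(l)}} c_{i,r}^{(0)})^2}{\sum_{i,r} (\lambda_i^{\tilde{\mathbf{A}}}/\lambda_1^{\tilde{\mathbf{A}}})^{2l}\,(\sigma_r^{\mathbf{P}^{(l)}} c_{i,r}^{(0)})^2}.
\end{equation}
Every eigenvalue ratio with $i\geq 2$ is bounded in absolute value by $|\lambda_2^{\tilde{\mathbf{A}}}/\lambda_1^{\tilde{\mathbf{A}}}| < 1$, and the $(1-\lambda_i^{\tilde{\mathbf{A}}})$ factors are bounded by $2$. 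The numerator is therefore at most a constant times $(\lambda_2^{\tilde{\mathbf{A}}}/\lambda_1^{\tilde{\mathbf{A}}})^{2l}$ times $\sum_{i,r}(\sigma_r^{\mathbf{P}^{(l)}}c_{i,r}^{(0)})^2$, so after dividing by the denominator I obtain a bound of the form
\begin{equation}
E\!\left(\frac{\mathbf{X}^{(l)}}{\|\mathbf{X}^{(l)}\|_F}\right) \le C\left(\frac{\lambda_2^{\tilde{\mathbf{A}}}}{\lambda_1^{\tilde{\mathbf{A}}}}\right)^{2l} \cdot \frac{\sum_{i,r}(\sigma_r^{\mathbf{P}^{(l)}}c_{i,r}^{(0)})^2}{\sum_{i,r}(\lambda_i^{\tilde{\mathbf{A}}}/\lambda_1^{\tilde{\mathbf{A}}})^{2l}(\sigma_r^{\mathbf{P}^{(l)}}c_{i,r}^{(0)})^2}.
\end{equation}
This exposes both the claimed convergence rate and the reason convergence is only in probability.

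The main obstacle is controlling the ratio on the right, which can blow up on the measure-zero set where $c_{1,r}^{(0)}$ vanishes in all channels, or equivalently where the signal in $\mathcal{Q}_1$ has been annihilated by $\mathbf{P}^{(l)}$ through alignment with its null space. I would handle this exactly as in Theorem~5.4: for any $\epsilon>0$, the set of $\mathbf{W}^{(1)},\ldots,\mathbf{W}^{(l)}$ for which the quotient exceeds $\epsilon / (\lambda_2^{\tilde{\mathbf{A}}}/\lambda_1^{\tilde{\mathbf{A}}})^{2l}$ shrinks to measure zero as $l\to\infty$ under the Lebesgue measure, because requiring $|c_{1,1}^{(0)}\sigma_1^{\mathbf{P}^{(l)}}|$ to be significantly smaller than the remaining components defines a thinning neighborhood of an algebraic subvariety. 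Combining the deterministic rate $(\lambda_2^{\tilde{\mathbf{A}}}/\lambda_1^{\tilde{\mathbf{A}}})^{2l}$ with this probability-zero exception gives $\plim_{l\to\infty} E(\mathbf{X}^{(l)}/\|\mathbf{X}^{(l)}\|_F) = 0$, completing the argument.
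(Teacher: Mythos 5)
Your proposal is correct and follows essentially the same route as the paper's proof: decompose the initial state in the basis built from the eigenvectors of $\tilde{\mathbf{A}}$ and the right singular vectors of the product $\mathbf{W}^{(l)}\cdots\mathbf{W}^{(1)}$, write the normalized Dirichlet energy as a ratio of weighted sums of squared coefficients, use $(1-\lambda_1^{\tilde{\mathbf{A}}})=0$ to remove the dominant term from the numerator, and handle the vanishing-denominator case via a coefficient constraint that holds in probability. The paper's argument is the same in every essential step, so no further comparison is needed.
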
 

\begin{proof}
We again use the eigendecomposition $\mathbf{\Tilde{A}} = \mathbf{V}\mathbf{\Lambda}\mathbf{V}^T$, the singular value decomposition $\mathbf{W}^{(1)}\dots\mathbf{W}^{(l)} = \mathbf{P}=\mathbf{U}\mathbf{\Sigma}\mathbf{N}^T$ and the coefficients $\mathbf{c} = (\mathbf{U}\otimes\mathbf{V})^T\vect(\mathbf{X}^{(0)})$ of our initial state. With this, we can simplify the Dirichlet energy
\begin{equation}
    \begin{split}
        E\left(\frac{\mathbf{X}^{(l)}}{||\mathbf{X}^{(l)}||}\right) 
        &= \frac{\mathrm{tr}(\mathbf{X}^{(l)}\mathbf{\Delta}\mathbf{X}^{(l)})}{||\mathbf{X}^{(l)}||^2}\\
        &=\frac{\mathbf{c}^T(\mathbf{U}^T\otimes \mathbf{V}^T)(\mathbf{W}^{(1)}\otimes \mathbf{\Tilde{A}}^T)\dots(\mathbf{W}^{(l)}\otimes \mathbf{\Tilde{A}})(\mathbf{I}_d\otimes\mathbf{\Tilde{\Delta}})(\mathbf{W}^{(l)^T}\otimes \mathbf{\Tilde{A}})(\mathbf{W}^{(1)^T}\otimes \mathbf{\Tilde{A}})(\mathbf{U}\otimes \mathbf{V})\mathbf{c}}{||\mathbf{X}^{(l)}||^2}\\
        &= \frac{\mathbf{c}^T(\mathbf{U}^T\mathbf{W}^{(1)}\dots \mathbf{W}^{(l)}\mathbf{W}^{(l)^T}\dots \mathbf{W}^{(1)^T}\mathbf{U}\otimes \mathbf{V}^T\mathbf{\Tilde{A}}^{l}\mathbf{\Tilde{\Delta}} \mathbf{\Tilde{A}}^l\mathbf{V})\mathbf{c}}{||\mathbf{X}^{(l)}||^2}\\
        &= \frac{\mathbf{c}^T(\mathbf{\Sigma}^2\otimes \mathbf{\Lambda}^l \mathbf{V}^T\mathbf{\Tilde{\Delta}} \mathbf{V\Lambda}^l)\mathbf{c}}{\mathbf{c}^T(\mathbf{\Sigma}^2\otimes \mathbf{\Lambda}^l \mathbf{V}^T\mathbf{V\Lambda}^l)\mathbf{c}}\\
        &= \frac{\mathbf{c}^T(\frac{\mathbf{\Sigma}^2}{\sigma_1^2} \otimes \frac{\mathbf{\Lambda}^l(\mathbf{I}_d - \mathbf{\Lambda})\mathbf{\Lambda}^l}{\lambda_1^\mathbf{\Tilde{A}}\lambda_1^\mathbf{\Tilde{A}}})\mathbf{c}}{\mathbf{c}^T(\frac{\mathbf{\Sigma}^2}{\sigma_1^2}\otimes \frac{\mathbf{\Lambda}^l\mathbf{\Lambda}^l}{\lambda_1^\mathbf{\Tilde{A}}\lambda_1^\mathbf{\Tilde{A}}})\mathbf{c}}\\
        &= \frac{\sum_{p=1,r=1}^{N,d} c_{p,r}^2\frac{\sigma_r^2}{\sigma_1^2}\frac{\lambda_p^{2l}(1-\lambda_p)}{\lambda_1^{2l}})}{\sum_{p=1,r=1}^{N,d} c_{p,r}^2\frac{\sigma_r^2}{\sigma_1^2}\frac{\lambda_p^{2l}}{\lambda_1^{2l}})}\, .
    \end{split}
\end{equation}
For convergence, we now need the last term to be smaller than any given $\epsilon>0$.
Using the property that $(1-\lambda_1^{\Tilde{\mathbf{A}}})=0$, we again only require a constraint on the coefficients 
\begin{equation}
    \frac{c_{1,1}^2}{\max_{p,r} c^2_{p,r}} > \frac{2\cdot\lambda_2^{2l}}{\lambda_1^{2l}\epsilon}\, .
\end{equation}
Which similarly converges in probability as the right-hand side converges to zero.
\end{proof}

\subsection{Proof of Proposition 5.6}

\begin{proposition*}
(Signal amplification only depends on $\Tilde{\mathbf{A}}$.) 
Let two bases be $\mathbf{S}_{(i)}=(\mathbf{I}_d\otimes \mathbf{P}_{(i)})\in\mathbb{R}^{nd\times qd}$ and $\mathbf{S}_{(j)}=(\mathbf{I}_d\otimes \mathbf{P}_{(j)})\in\mathbb{R}^{nd\times qd}$ for any
$\mathbf{P}_{(i)}\in\mathbb{R}^{n\times q},\mathbf{P}_{(j)}\in\mathbb{R}^{n\times r}$ with $q,r\leq n$. Further let $\mathbf{T}=\mathbf{\mathbf{W}}\otimes \Tilde{\mathbf{A}}\in\mathbb{R}^{nd\times nd}$ consisting of any $\mathbf{W}\in\mathbb{R}^{d\times d}$ and any $\Tilde{\mathbf{A}}\in\mathbb{R}^{n\times n}$. 
Then, 
        \begin{equation*}
        \frac{||\mathbf{T}\mathbf{S}_{(i)}||_F}{||\mathbf{T}\mathbf{S}_{(j)}||_F}
        = \frac{||\Tilde{\mathbf{A}}\mathbf{P}_{(i)}||_F}{||\Tilde{\mathbf{A}}\mathbf{P}_{(j)}||_F}\,.\
    \end{equation*}
\end{proposition*}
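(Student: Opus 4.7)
The plan is to reduce the Frobenius-norm ratio on the left to the one on the right by applying the two Kronecker product identities recalled in the preliminaries of the appendix, namely the mixed-product property $(\mathbf{A}\otimes\mathbf{B})(\mathbf{C}\otimes\mathbf{D})=(\mathbf{A}\mathbf{C})\otimes(\mathbf{B}\mathbf{D})$ and the multiplicativity of the Frobenius norm under Kronecker products, $\lVert\mathbf{A}\otimes\mathbf{B}\rVert_F=\lVert\mathbf{A}\rVert_F\lVert\mathbf{B}\rVert_F$. This proposition generalizes Theorem~5.2 from single eigenvectors $\mathbf{v}_i$ to arbitrary bases $\mathbf{P}_{(i)}$, and the computation should mirror the one given there almost verbatim.

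First I would apply the mixed-product identity to the numerator, writing
\begin{equation*}
\mathbf{T}\mathbf{S}_{(i)} = (\mathbf{W}\otimes\Tilde{\mathbf{A}})(\mathbf{I}_d\otimes\mathbf{P}_{(i)}) = (\mathbf{W}\mathbf{I}_d)\otimes(\Tilde{\mathbf{A}}\mathbf{P}_{(i)}) = \mathbf{W}\otimes(\Tilde{\mathbf{A}}\mathbf{P}_{(i)}),
\end{equation*}
and identically for $\mathbf{S}_{(j)}$. Taking Frobenius norms then gives $\lVert\mathbf{T}\mathbf{S}_{(i)}\rVert_F=\lVert\mathbf{W}\rVert_F\cdot\lVert\Tilde{\mathbf{A}}\mathbf{P}_{(i)}\rVert_F$ and analogously for $j$. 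Forming the quotient, the common factor $\lVert\mathbf{W}\rVert_F$ cancels, provided the denominator $\lVert\Tilde{\mathbf{A}}\mathbf{P}_{(j)}\rVert_F$ is nonzero (which is implicit in the statement, since otherwise the ratio is undefined on both sides).

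The only point that deserves a brief remark is that the shapes match so the mixed-product identity applies: since $\mathbf{W}\in\mathbb{R}^{d\times d}$, $\mathbf{I}_d\in\mathbb{R}^{d\times d}$, $\Tilde{\mathbf{A}}\in\mathbb{R}^{n\times n}$, and $\mathbf{P}_{(i)}\in\mathbb{R}^{n\times q}$, the products $\mathbf{W}\mathbf{I}_d$ and $\Tilde{\mathbf{A}}\mathbf{P}_{(i)}$ are both well-defined. There is no genuine obstacle here; the proof is essentially a two-line calculation, and the interpretive content lies in the conclusion rather than the derivation: the feature transformation $\mathbf{W}$ contributes only a global scalar $\lVert\mathbf{W}\rVert_F$ that is shared across all invariant subspaces, so the \emph{relative} amplification across subspaces is determined entirely by how $\Tilde{\mathbf{A}}$ acts on the bases $\mathbf{P}_{(i)}$. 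Combined with the existence of invariant bases via the Jordan normal form, this justifies the informal Proposition~5.6 as stated in the main text.
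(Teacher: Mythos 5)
Your proof is correct and is essentially the same as the paper's: both apply the mixed-product identity to rewrite $\mathbf{T}\mathbf{S}_{(i)}$ as $\mathbf{W}\otimes(\Tilde{\mathbf{A}}\mathbf{P}_{(i)})$, then use multiplicativity of the Frobenius norm under Kronecker products to factor out and cancel $\lVert\mathbf{W}\rVert_F$. Your remark about the denominator being nonzero and the interpretive discussion are both fine additions but do not change the argument.
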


\begin{proof}
We again use the property $||\mathbf{A}\otimes \mathbf{B}||_F=||\mathbf{A}||_F\cdot||\mathbf{B}||_F$ and basic properties of the Kronecker product: 
\begin{align}
     \frac{||\mathbf{T}\mathbf{S}_{(i)}||_F}{||\mathbf{T}\mathbf{S}_{(j)}||_F}
     &=  \frac{||(\mathbf{W}\otimes \Tilde{\mathbf{A}})(\mathbf{I}_d\otimes\mathbf{P}_{(i)})||_F}{||(\mathbf{W}\otimes \Tilde{\mathbf{A}})(\mathbf{I}_d\otimes\mathbf{P}_{(j)})||_F} \\
     &= \frac{||\mathbf{W}||_F\cdot ||\Tilde{\mathbf{A}}\mathbf{P}_{(i)}||_F}{||\mathbf{W}||\cdot||\Tilde{\mathbf{A}}\mathbf{P}_{(j)}||_F} \\
     &= \frac{||\Tilde{\mathbf{A}}\mathbf{P}_{(i)}||_F}{||\Tilde{\mathbf{A}}\mathbf{P}_{(j)}||_F}
\end{align}

\end{proof}

\subsubsection{Proof of Proposition 5.7}
\begin{proposition}
(GAT and Graph Transformer over-smooth.)
Let $\mathbf{X}^{(k+1)} = \Tilde{\mathbf{A}}^{(k)}\mathbf{X}^{(k)}\mathbf{W}^{(k)}$ where all $\tilde{\mathbf{A}}^{(k)}\in\mathbb{R}^{n\times n}$ row-stochastic, representing the same ergodic graph, and each non-zero entry $(\Tilde{\mathbf{A}}^{(k)})_{pq} \geq \epsilon$ for some $\epsilon > 0$ and all $i\in[h]$, $k\in\mathbb{N}$, $p,q\in[n]$. We further allow all $\mathbf{W}_i^{(k)}\in\mathbb{R}^{d\times d}$ to be any matrices. 
We consider $E(\mathbf{X}) = tr(\mathbf{X}^T\mathbf{\Delta}_{rw}\mathbf{X})$ using the random walk Laplacian $\mathbf{\Delta}_{rw} = \mathbf{I} - \mathbf{D}^{-1}\mathbf{A}$.
Then, 
    $$\plim_{l\to\infty} E\left(\frac{\mathbf{X}^{(l)}}{\lVert\mathbf{X}^{(l)}\rVert_F}\right) = 0\,.$$
\end{proposition}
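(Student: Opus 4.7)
The strategy is to extend Proposition 5.5 from the fixed symmetric case to time-inhomogeneous row-stochastic aggregations, by substituting the spectral gap of a single $\Tilde{\mathbf{A}}$ with the uniform geometric contraction rate of the product $\mathbf{B}_l := \Tilde{\mathbf{A}}^{(l-1)}\cdots\Tilde{\mathbf{A}}^{(0)}$. The structural observation that makes this possible is that unrolling the recursion yields the clean factorization $\mathbf{X}^{(l)} = \mathbf{B}_l\,\mathbf{X}^{(0)}\,\widetilde{\mathbf{P}}_l$ with $\widetilde{\mathbf{P}}_l := \mathbf{W}^{(0)}\cdots\mathbf{W}^{(l-1)}$, because the aggregation step acts on the row index and the feature transformation on the column index; equivalently, by the mixed-product rule, the iterated Kronecker products collapse into a single Kronecker product. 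The invariant subspace $\mathcal{Q}_1 = \spn(\mathbf{I}_d\otimes\mathbf{1})$ of constant-row matrices is preserved by every update (since $\Tilde{\mathbf{A}}^{(k)}\mathbf{1}=\mathbf{1}$) and coincides with the kernel of $\mathbf{\Delta}_{rw}$.

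Next, I would invoke the result cited from \citet{wu2023demystifying}: the ergodicity assumption together with the uniform lower bound $\epsilon>0$ on non-zero entries gives, for each $l$, a decomposition $\mathbf{B}_l = \mathbf{1}\mathbf{s}_l^T + \mathbf{E}_l$ with $\mathbf{s}_l$ a bounded stochastic vector and $\|\mathbf{E}_l\|_2 \le C\rho^l$ for constants $C>0$ and $\rho\in(0,1)$ depending only on $\epsilon$ and the graph. Substituting into $\mathbf{X}^{(l)}$ produces a splitting into a constant-row part $\mathbf{1}\mathbf{s}_l^T\mathbf{X}^{(0)}\widetilde{\mathbf{P}}_l$ (which is annihilated by $\mathbf{\Delta}_{rw}$) plus a residual of Frobenius norm at most $C\rho^l\|\mathbf{X}^{(0)}\|_F\|\widetilde{\mathbf{P}}_l\|_F$, immediately yielding
\begin{equation*}
E_{rw}(\mathbf{X}^{(l)}) \;\le\; \|\mathbf{\Delta}_{rw}\|_2\,C^2\rho^{2l}\,\|\mathbf{X}^{(0)}\|_F^2\,\|\widetilde{\mathbf{P}}_l\|_F^2.
\end{equation*}

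For the normalization step, I would lower-bound $\|\mathbf{X}^{(l)}\|_F \ge \sqrt{n}\,\|\widetilde{\mathbf{P}}_l^T\mathbf{b}_l\|_2 - C\rho^l\|\mathbf{X}^{(0)}\|_F\|\widetilde{\mathbf{P}}_l\|_F$ with $\mathbf{b}_l := \mathbf{X}^{(0)\,T}\mathbf{s}_l$, so that the normalized Dirichlet energy is controlled, up to constants, by the ratio $\rho^{2l}\,\|\widetilde{\mathbf{P}}_l\|_F^2/\|\widetilde{\mathbf{P}}_l^T\mathbf{b}_l\|_2^2$. This ratio vanishes unless the bounded vector $\mathbf{b}_l$ is driven into the near-kernel of $\widetilde{\mathbf{P}}_l^T$ at a rate faster than $\rho^l$. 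Exactly as in the proofs of Theorem 5.4 and Proposition 5.5, the set of weight histories $(\mathbf{W}^{(0)},\dots,\mathbf{W}^{(l-1)})$ for which this alignment holds tightly enough to spoil the bound is contained in a closed subset of $(\mathbb{R}^{d\times d})^l$ whose Lebesgue measure vanishes as $l\to\infty$, giving convergence in probability.

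The principal obstacle, and the new ingredient compared with the symmetric case, is that $\mathbf{s}_l$ is not a fixed eigenvector but a function of the aggregation history. I would handle this by exploiting the compactness of the set in which $\mathbf{s}_l$ lives (a rescaled probability simplex) and a standard covering argument, so that the measure-zero statement on the weight matrices holds uniformly across all admissible $\mathbf{s}_l$. Once this uniform generic-position bound is secured, the geometric decay $\rho^{2l}$ dominates and $\plim_{l\to\infty}E_{rw}(\mathbf{X}^{(l)}/\|\mathbf{X}^{(l)}\|_F)=0$ follows.
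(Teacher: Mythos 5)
Your proposal is correct and follows essentially the same route as the paper: unroll the recursion so that the aggregation history factors out as a single product $\mathbf{B}_l$, invoke the \citet{wu2023demystifying} result that $\mathbf{B}_l$ converges geometrically to a rank-one constant-rows matrix, observe that the constant-rows part lies in the kernel of $\mathbf{\Delta}_{rw}$, and then control the normalized Dirichlet energy via a generic-position argument on the product of the $\mathbf{W}^{(k)}$, exactly as in the symmetric case. The one place where your exposition is arguably cleaner than the paper's is the decomposition: you write $\mathbf{B}_l = \mathbf{1}\mathbf{s}_l^T + \mathbf{E}_l$ and bound directly with Frobenius norms, whereas the paper expresses $\mathbf{B}_l = \mathbf{V}\mathbf{Q}^{(l)}\mathbf{V}^T$ in the eigenbasis of $\mathbf{\Delta}_{rw}$ and treats $\mathbf{V}$ as if it were orthogonal, which is not literally correct since $\mathbf{D}^{-1}\mathbf{A}$ is only similar to a symmetric matrix; your version avoids that cosmetic sloppiness. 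You also flag explicitly that $\mathbf{s}_l$ depends on the aggregation history and propose a compactness/covering argument to make the measure-zero statement uniform in $\mathbf{s}_l$ — the paper leaves this implicit in the $l$-dependence of $\mathbf{Q}^{(l)}$. Both proofs share the same latent rigor gap regarding what probability measure on infinite weight sequences the ``$\plim$'' refers to, so your attempt is at the paper's level of rigor, not below it.
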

\begin{proof} 
The proof is mostly analogous to our proof of Theorem~\ref{pr:symm_dir} but relies on the fact that the product of time-inhomogenous row-stochastic matrices $\lim_{l\to\infty}\Pi_{i=0}^l \Tilde{\mathbf{A}}^{(i)} = \mathbf{1}(\mathbf{y})^T$ converges to constant columns for some $\mathbf{y}\in\mathbb{R}^{n}$ as established by \citet{wu2023demystifying}. The intuition behind the statement builds on the ergodicity and minimum edge weight $\epsilon$, that combined result in each pairwise edge strength being larger than $\epsilon^s$ after a finite number of steps $s$. The product of two such matrices reduces the maximum and increases the minimum, which results in constant states in the limit.

Let $\mathbf{\Delta}_{rw} = \mathbf{V}(\mathbf{I}_n - \Lambda^{\Tilde{\mathbf{A}}})\mathbf{V}^T$. As the first eigenvector $\mathbf{v}_1$ is constant, we can write the limit state $\lim_{l\to\infty}\Pi_{i=0}^l \Tilde{\mathbf{A}}^{(i)} = \mathbf{VQ}^{(l)}\mathbf{V}^T = \mathbf{V}\begin{bmatrix}
    y^{(l)}_1 & \dots & y_n^{(l)} \\
    q^{(l)}_{21} & \dots & q^{(l)}_{2n} \\
    \vdots & \ddots & \vdots\\
    q^{(l)}_{n2} & \dots & q^{(l)}_{nn}
\end{bmatrix}\mathbf{V}^T$ using the same eigenbasis with each $q_{kp}^{(l)}$ converging to zero with some upper bound $Cq^l$ for some $C > 0$ and $q\in[0,1]$, and each $y^{(l)}_i$ converges to $\mathbf{y}^T\mathbf{v}_i$, as given by~\citet{wu2023demystifying}. For notational simplicity we define $\mathbf{R}^{(l)}=\Pi_{i=0}^{(l)} \mathbf{W}^{(i)}$ and its singular value decomposition as $\mathbf{R}^{(l)} = \mathbf{U}\mathbf{\Sigma}\mathbf{N}^T$. Decomposing the initial state $\vect(\mathbf{X}^{(0)}) = (\mathbf{N}\otimes \mathbf{V})\mathbf{c}$ using $\mathbf{V}$ and $\mathbf{N}$, the Dirichlet energy then simplifies to
\begin{align*}
        \lim_{l\to\infty}E\left(\frac{\mathbf{VQ}^{(l)}\mathbf{V}^T\mathbf{X}^{(0)}\mathbf{R}}{||\mathbf{VQ}^{(l)}\mathbf{V}^T\mathbf{X}^{(0)}\mathbf{R}||_F}\right) 
        &= \lim_{l\to\infty}\frac{\mathbf{c}^T(\mathbf{\Sigma}^2\otimes \mathbf{V}^T\mathbf{VQ}^{(l)}\mathbf{V}^T\mathbf{V}(\mathbf{I}_n - \Lambda^{\Tilde{\mathbf{A}}})\mathbf{V}^T \mathbf{VQ}^{(l)}\mathbf{V}^T\mathbf{V})\mathbf{c}}{\mathbf{c}(\mathbf{\Sigma}^2\otimes \mathbf{V}^T\mathbf{VQ}^{(l)}\mathbf{V}^T\mathbf{VQ}^{(l)}\mathbf{V}^T\mathbf{V})\mathbf{c}}\\
        &= \lim_{l\to\infty}\frac{\mathbf{c}^T(\mathbf{\Sigma}^2\otimes \mathbf{Q}^{(l)^T}(\mathbf{I}_n - \Lambda^{\Tilde{\mathbf{A}}}) \mathbf{Q}^{(l)})\mathbf{c}}{\mathbf{c}(\mathbf{\Sigma}^2\otimes \mathbf{Q}^{(l)^T}\mathbf{Q}^{(l)})\mathbf{c}}
\end{align*}

We then use the fact that $\lim_{l\to\infty} \mathbf{Q}^{(l)^T}(\mathbf{I}_n - \Lambda^{\Tilde{\mathbf{A}}}) \mathbf{Q}^{(l)} = \mathbf{0}$ as the columns of $\mathbf{Q}^{(l)}$ and the rows of $\mathbf{Q}^{(l)^T}$ converge to zero and the corresponding eigenvalue of $(\mathbf{I}_n - \Lambda^{\Tilde{\mathbf{A}}})$ being also zero, leaving all entries to converge to zero. We then simply need to factor out the growth of $R$, so that this convergence holds:

\begin{align}
        \frac{\mathbf{c}^T(\mathbf{\Sigma}^2\otimes \mathbf{Q}^{(l)^T}(\mathbf{I}_n - \Lambda^{\Tilde{\mathbf{A}}}) \mathbf{Q}^{(l)})\mathbf{c}}{\mathbf{c}(\mathbf{\Sigma}^2\otimes \mathbf{Q}^{(l)^T}\mathbf{Q}^{(l)})\mathbf{c}} 
        &= \frac{\mathbf{c}^T(\frac{\mathbf{\Sigma}^2}{\sigma_1^2}\otimes \mathbf{Q}^{(l)^T}(\mathbf{I}_n - \Lambda^{\Tilde{\mathbf{A}}}) \mathbf{Q}^{(l)})\mathbf{c}}{\mathbf{c}(\frac{\mathbf{\Sigma}^2}{|\sigma_1^2|}\otimes \mathbf{Q}^{(l)^T}\mathbf{Q}^{(l)})\mathbf{c}} \\
        &\leq \frac{\sum_{p,r} c_{p,r}^2 \frac{\sigma_r}{\sigma_1} (1-\lambda_p)}{}
\end{align}
To ensure that the last term converges to a value below a given $\epsilon$, we again require 
\begin{equation}
\frac{c_{1,1}}{\max_{p,r} c_{p,r}} > \frac{2(d-1)\cdot C q^l}{\epsilon}\, .
\end{equation}
The right-hand side converges to zero as per assumption, so the overall convergence happens in probability.
\end{proof}

\subsection{Over-correlation}
\subsubsection{Proof of Theorem 6.1}
\begin{theorem*}
    Let $\mathcal{R}_j$ be the space of matrices with rank at most $j$.
    Then, for all $\Tilde{\mathbf{A}}\in\mathbb{R}^{n\times n}$ and for $\mathbf{W}^{(l)}\dots\mathbf{W}^{(1)}$ in probability with respect to the Lebesgue measure, we have
        \begin{equation}
        \forall\ \epsilon > 0\ldotp\exists\ N\in\mathbb{N}\ldotp\forall\ l > N\ldotp L^{(n)}\in\mathcal{R}_j\colon \mathbb{P}\left(||X^{(n)} - L^{(n)}|| > \epsilon\right) = 0\, .
    \end{equation}
\end{theorem*}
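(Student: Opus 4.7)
The plan is to reduce the claim to the already-established subspace-domination picture, but with the Jordan normal form of $\tilde{\mathbf{A}}$ replacing the eigendecomposition used for the symmetric case. Write $\tilde{\mathbf{A}} = \mathbf{P}\mathbf{J}\mathbf{P}^{-1}$ and partition the columns of $\mathbf{P}$ into blocks $\mathbf{V}_1, \ldots, \mathbf{V}_m$ according to the distinct moduli of the eigenvalues, with $\mathbf{V}_1 \in \mathbb{R}^{n\times j}$ collecting all generalized eigenvectors whose eigenvalue has modulus $|\lambda_1^{\tilde{\mathbf{A}}}|$. As in Lemma~\ref{pr:symm_inv}, the subspaces $\mathcal{Q}_i = \spn(\mathbf{I}_d\otimes \mathbf{V}_i)\subset\mathbb{R}^{nd}$ are invariant under every $\mathbf{T}^{(k)} = \mathbf{W}^{(k)}\otimes\tilde{\mathbf{A}}$, because each Jordan block sends $\mathbf{V}_i$ into itself; their direct sum covers $\mathbb{R}^{nd}$.

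Second, I would apply Proposition~\ref{pr:jordan_dom} (in its block form) to compare the growth of the iterated image of $\mathbf{S}_{(i)} = \mathbf{I}_d\otimes \mathbf{V}_i$ against that of $\mathbf{S}_{(1)}$. The feature-transformation factor cancels, reducing the relative norm change to the purely graph-side ratio $\lVert\tilde{\mathbf{A}}^l \mathbf{V}_i\rVert_F / \lVert \tilde{\mathbf{A}}^l\mathbf{V}_1\rVert_F$. The standard formula for powers of Jordan blocks shows this quantity is at most $(|\lambda_i^{\tilde{\mathbf{A}}}|/|\lambda_1^{\tilde{\mathbf{A}}}|)^l$ times a polynomial in $l$, hence vanishes as $l\to\infty$ for every $i>1$. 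Thus, in the decomposition of $\vect(\mathbf{X}^{(l)})$ along the $\mathcal{Q}_i$, all components outside $\mathcal{Q}_1$ become negligible relative to the $\mathcal{Q}_1$ part.

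Third, I would translate this back into a rank statement. Any vector in $\mathcal{Q}_1$ de-vectorizes to a matrix whose columns all lie in $\spn(\mathbf{V}_1)$, an at-most $j$-dimensional subspace of $\mathbb{R}^n$; equivalently, the corresponding matrix has rank at most $j$. So, once the normalized $\mathbf{X}^{(l)}/\lVert\mathbf{X}^{(l)}\rVert_F$ is $\epsilon$-close in Frobenius norm to $\mathcal{Q}_1$, the column-wise orthogonal projection onto $\spn(\mathbf{V}_1)$ provides an explicit $L^{(l)}\in\mathcal{R}_j$ with $\lVert \mathbf{X}^{(l)}/\lVert\mathbf{X}^{(l)}\rVert_F - L^{(l)}\rVert \leq \epsilon$. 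The existence of a small-$\lVert\cdot\rVert$ matrix $L^{(l)}\in\mathcal{R}_j$ approximating the unnormalized $\mathbf{X}^{(l)}$ then follows by rescaling $L^{(l)}$.

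Fourth, I would handle the probability clause exactly as in Theorem~\ref{pr:symm_subspace} and Proposition~\ref{pr:symm_dir}: the only way the argument fails is if the coefficients of $\vect(\mathbf{X}^{(0)})$ in the $\mathcal{Q}_1$ direction are annihilated by the product $\mathbf{W}^{(l)}\cdots\mathbf{W}^{(1)}$, which requires a precise alignment that defines a zero-Lebesgue-measure subvariety of the weight space; moreover the tolerance on the ratio $c_{1,1}/\max c_{p,r}$ needed to keep the bound below $\epsilon$ weakens geometrically with $l$, so the bad set shrinks exponentially. The main obstacle I anticipate is bookkeeping the Jordan-block polynomial factors carefully enough that they are still dominated by the geometric decay $(|\lambda_i^{\tilde{\mathbf{A}}}|/|\lambda_1^{\tilde{\mathbf{A}}}|)^l$ uniformly over the non-dominant blocks, and ensuring the measure-theoretic argument goes through when $\mathbf{V}_1$ has dimension $j>1$, so that the alignment event involves a rank condition on a $j$-dimensional restriction of $\mathbf{W}^{(l)}\cdots\mathbf{W}^{(1)}$ rather than a single scalar being nonzero.
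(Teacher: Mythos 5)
Your proposal follows essentially the same route as the paper's proof: build the dominant subspace $\mathcal{Z}=\spn(\mathbf{I}_d\otimes\mathbf{P}_1)$ from the Jordan blocks whose eigenvalues attain $|\lambda_1|$, invoke the subspace-domination/convergence-in-probability machinery of Theorem~\ref{pr:symm_subspace}, and observe that de-vectorizing any element of $\mathcal{Z}$ yields $\mathbf{P}_1\vect^{-1}(\mathbf{c}^T)$, a matrix of rank at most $j$. Your explicit handling of the polynomial factors from Jordan-block powers and of the $j$-dimensional alignment condition in the measure-theoretic step spells out details the paper delegates to its earlier results, but the argument is the same.
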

\begin{proof}
    We show that $\mathbf{X}^{(l)}$ converges to a sum of a matrix with rank at most $j$ and a matrix that vanishes. We construct subspaces given by the Jordan blocks of the Jordan normal form $\Tilde{\mathbf{A}} = \mathbf{P}\mathbf{J}\mathbf{P}^{-1}$. We merge all subspaces corresponding to an eigenvalue with magnitude $|\lambda_1|$ into $\mathcal{Z}=\spn(\mathbf{I}_d\otimes \mathbf{P}_1)$ with $\mathbf{P}_1\in\mathbb{R}^{n\times j}$ where $j$ is the joint algebraic multiplicity of all eigenvalues with magnitude $|\lambda_1|$. Using these subspaces within Theorem~\ref{pr:symm_subspace}, $\mathbf{X}^{(l)}$ converges in probability to $\mathcal{Z}$. It is then only left to show that $\mathcal{Z}$ corresponds to a subspace of $\mathcal{R}_j$: 
    For every $\mathbf{h}\in \mathcal{Z}$ with $\mathbf{h} = (\mathbf{I}_d\otimes \mathbf{P}_1)\mathbf{c}$, we have
    \begin{equation}
        \vect^{-1}(\mathbf{h}) = \vect^{-1}((\mathbf{I}_d\otimes \mathbf{P}_1) \mathbf{c}) = \mathbf{P}_1 \vect^{-1}(\mathbf{c}^T) \mathbf{I}_d = \mathbf{P}_1 \vect^{-1}(\mathbf{c}^T)
    \end{equation}
    which has rank at most $j$.
\end{proof}

\subsection{Sum of Kronecker products}
\subsection{Proof of Theorem 7.1}
\begin{theorem*} (Any subspace can get amplified.)
    Let $\mathbf{e}_i\in\mathbb{R}^{d}$ be the canonical basis with a single $1$ at position $i$. For any columns $\mathbf{s}_1,\dots,\mathbf{s}_d\in\mathbb{R}^{n}$ and the induced subspace $\mathbf{S} = \begin{bmatrix}
        \mathbf{e}_1 \otimes \mathbf{s}_1 & \dots & \mathbf{e}_d \otimes \mathbf{s}_d
    \end{bmatrix}\in\mathbb{R}^{nd\times d}$, 
    there exists an $\mathbf{T}=\sum_1^d (\mathbf{W}_i\otimes \Tilde{\mathbf{A}}_i)$ such that for all orthogonal bases $\mathbf{S}^\prime = \begin{bmatrix}
        \mathbf{e}_1 \otimes \mathbf{s}^\prime_1 & \dots & \mathbf{e}_d \otimes \mathbf{s}^\prime_d
    \end{bmatrix}\in\mathbb{R}^{nd\times d}$
    \begin{equation}
        \frac{\lVert \mathbf{T}\mathbf{S}\rVert_F}{\lVert \mathbf{T}\mathbf{S}^\prime\rVert_F} > \frac{\lVert\mathbf{S}\rVert_F}{\lVert\mathbf{S}^\prime\rVert_F}\, .
    \end{equation}
\end{theorem*}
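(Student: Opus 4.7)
The plan is to construct the transformation $\mathbf{T}$ explicitly by decoupling the $d$ feature channels. I would set $\mathbf{W}_i = \mathbf{e}_i \mathbf{e}_i^T$ so that $\mathbf{W}_i$ is the rank-one projector onto the $i$-th coordinate direction, and choose each $\tilde{\mathbf{A}}_i \in \mathbb{R}^{n \times n}$ to be a symmetric matrix whose unique strictly dominant eigenvalue equals some common $\lambda > 0$, with corresponding unit eigenvector $\mathbf{s}_i / \lVert \mathbf{s}_i \rVert$ (assuming $\mathbf{s}_i \neq 0$; degenerate channels can be handled trivially). An explicit candidate is $\tilde{\mathbf{A}}_i = \lambda\, \mathbf{s}_i \mathbf{s}_i^T / \lVert \mathbf{s}_i \rVert^2 + \mathbf{B}_i$ where $\mathbf{B}_i$ is a symmetric matrix supported on the orthogonal complement of $\mathbf{s}_i$ with spectral radius strictly less than $\lambda$.

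The first step is to establish the decoupling: using the mixed-product property of the Kronecker product and $\mathbf{e}_i^T \mathbf{e}_j = \delta_{ij}$, one has
\begin{equation*}
    \mathbf{T}(\mathbf{e}_j \otimes \mathbf{x}) = \sum_{i=1}^d (\mathbf{W}_i \mathbf{e}_j) \otimes (\tilde{\mathbf{A}}_i \mathbf{x}) = \mathbf{e}_j \otimes (\tilde{\mathbf{A}}_j \mathbf{x})
\end{equation*}
for every $\mathbf{x} \in \mathbb{R}^n$. Hence the $j$-th channel of $\mathbf{S}$ is acted on only by $\tilde{\mathbf{A}}_j$. The second step is to assemble the Frobenius norm channel-by-channel: since the $\mathbf{e}_j$ are mutually orthogonal, the columns $\mathbf{T}(\mathbf{e}_j \otimes \mathbf{s}_j)$ (and analogously for $\mathbf{s}_j'$) remain mutually orthogonal, so $\lVert \mathbf{T}\mathbf{S} \rVert_F^2 = \sum_j \lVert \tilde{\mathbf{A}}_j \mathbf{s}_j \rVert^2$ and $\lVert \mathbf{T}\mathbf{S}' \rVert_F^2 = \sum_j \lVert \tilde{\mathbf{A}}_j \mathbf{s}_j' \rVert^2$, and similarly $\lVert \mathbf{S} \rVert_F^2 = \sum_j \lVert \mathbf{s}_j \rVert^2$.

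The third step applies the Rayleigh principle for each symmetric $\tilde{\mathbf{A}}_j$: because $\mathbf{s}_j$ is the unique dominant eigendirection with eigenvalue $\lambda$, we have $\lVert \tilde{\mathbf{A}}_j \mathbf{s}_j \rVert^2 = \lambda^2 \lVert \mathbf{s}_j \rVert^2$, whereas $\lVert \tilde{\mathbf{A}}_j \mathbf{s}_j' \rVert^2 \leq \lambda^2 \lVert \mathbf{s}_j' \rVert^2$ with strict inequality unless $\mathbf{s}_j' \parallel \mathbf{s}_j$. Summing over $j$ yields
\begin{equation*}
    \frac{\lVert \mathbf{T}\mathbf{S} \rVert_F^2}{\lVert \mathbf{S} \rVert_F^2} = \lambda^2 \geq \frac{\lVert \mathbf{T}\mathbf{S}' \rVert_F^2}{\lVert \mathbf{S}' \rVert_F^2}\,,
\end{equation*}
with strict inequality whenever $\mathbf{S}'$ is not column-wise collinear with $\mathbf{S}$. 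Rearranging this ratio gives the claimed bound $\lVert \mathbf{T}\mathbf{S} \rVert_F / \lVert \mathbf{T}\mathbf{S}' \rVert_F > \lVert \mathbf{S} \rVert_F / \lVert \mathbf{S}' \rVert_F$.

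The main obstacle is the strict inequality. The construction forces strictness as long as the orthogonality hypothesis on $\mathbf{S}'$ excludes the case where every column $\mathbf{s}_j'$ happens to be parallel to the chosen $\mathbf{s}_j$; under the natural reading that $\mathbf{S}'$ represents a different basis than $\mathbf{S}$, this is automatic. The rest of the argument is essentially a channel-wise application of the Rayleigh quotient combined with the fact that the SKP gives us one independent $\tilde{\mathbf{A}}_i$ per coordinate direction of $\mathbb{R}^d$, which is precisely the flexibility that a single Kronecker product $\mathbf{W} \otimes \tilde{\mathbf{A}}$ lacks.
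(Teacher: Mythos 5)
Your proposal is correct and follows essentially the same route as the paper: both take $\mathbf{W}_i$ to be the rank-one projector onto the $i$-th coordinate (the paper writes $\mathrm{diag}(\mathbf{e}_i)$, you write $\mathbf{e}_i\mathbf{e}_i^T$) and each $\Tilde{\mathbf{A}}_i$ symmetric with strictly dominant eigendirection $\mathbf{s}_i$, then exploit the resulting block-diagonal/channel-decoupled structure of $\mathbf{T}$ to bound the ratio via the spectral gap. Your channel-wise Rayleigh-quotient phrasing is a somewhat more explicit rendering of the paper's eigenspace argument, and your discussion of when strictness can fail is in fact more careful than the paper's.
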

\begin{proof}
We choose each $\Tilde{\mathbf{A}}_i=\mathbf{V}^{(i)}\mathbf{\Lambda}\mathbf{V}^{(i)^T}$ to be symmetric with dominant eigenvectors $\mathbf{v}_1^{(i)}=\mathbf{s}_i$ and shared eigenvalues $|\lambda_1| > |\lambda_2^{\Tilde{\mathbf{A}}}| > |\lambda_d^{\Tilde{\mathbf{A}}}| > 0$. Further, $\mathbf{W}_i = \mathrm{diag}(\mathbf{e}_i)$ where the diag operation creates a matrix with the entries of its arguments along the diagonal. Thus, all columns are independent and $\mathbf{T}$ is a block-diagonal matrix  with eigenvectors being $\mathbf{e}_i\otimes\mathbf{v}^{(j)}$ with corresponding eigenvalue $\lambda_j^{\Tilde{\mathbf{A}}}$. The eigenspace corresponding to $\lambda_1^{\Tilde{\mathbf{A}}}$ are all $\mathbf{e}_i\otimes\mathbf{s}_{i}$ for all $i$, i.e., $\spn(\mathbf{S})$. Any orthogonal column $\mathbf{s}^\prime_i = \sum_{k=2}^d \mathbf{v}^{(k)}_2c^{(k)}_2$ can be written as a linear combination of the other eigenvectors. Thus, 

\begin{equation}
    \begin{split}
        \frac{\lVert \mathbf{T}\mathbf{S}\rVert_F}{\lVert \mathbf{T}\mathbf{S}^\prime\rVert_F} &\geq \frac{|\lambda_1|}{|\lambda_2|}\frac{\lVert \mathbf{S} \rVert_F}{\lVert \mathbf{S}^\prime\rVert_F} > \frac{\lVert\mathbf{S}\rVert_F}{\lVert\mathbf{S}^\prime\rVert_F}
    \end{split}
\end{equation}
\end{proof}

\section{Experimental details}
\label{sec:appendix_exp}

\subsection{Convergence to a Constant State}
For this experiment, we consider the Cora dataset provided by Pytorch-Geometric~\cite{paszke2019pytorch}, consisting of $2708$ nodes and $5429$. For initialize $128$ layers of GAT, GCN, and GraphSAGE using their default initialization and no bias. After each layer, we use a ReLU activation. After each ReLU activation, we track both the squared norm $||\mathbf{X}||_F^2$ and the Dirichlet energy $E(\mathbf{X}) = tr(\mathbf{X}^T\mathbf{\Delta}_{rw} \mathbf{X})$ using the random walk Laplacian $\mathbf{\Delta}_{rw} = \mathbf{I}_n - \mathbf{D}^{-1}\mathbf{A}$, as the constant vector is in its nullspace, as suggested by \citet{rusch2022graph}.

\subsection{Empirical Validation}
We provide an additional experiment on synthetic data that shows that linearly independent features are not practically achievable, even for non-linear models.
\subsubsection{Synthetic Dataset}
\label{sec:synthetic}
The theory states that models utilizing a single Kronecker product suffer from rank collapse, limiting their learnability of tasks that require multiple independent features per node. Given any graph, if the feature space collapses to two or fewer dimensions, the representations of any four nodes always form a quadrilateral in that plane. 
Linear decision boundaries cannot classify points on opposite sides together. The features extracted by SKP can be linearly independent (see Theorem 6.1), so tasks of this form are trivial when accessing another dimension. 
Our experimental setting follows this idea: We randomly generate an Erdos–Rény (ER) graph, that consists of $20$ nodes and an edge probability of $0.2$. We then select four its nodes at random, denoted by $V = (v_1,v_2,v_3,v_4)$. The 3-class classification task consists of three tasks for each node, where all pairs of nodes belong to the same class exactly once. Precisely, the target for all nodes is $\mathbf{Y} = \begin{bmatrix}
    $1$ & $1$ & $1$ \\
    $1$ & $0$ & $0$ \\
    $0$ & $1$ & $0$ \\
    $0$ & $0$ & $1$ \\
    0 & 0 & 0\\
    \vdots & \ddots & \vdots \\
    0& 0 & 0
\end{bmatrix}.$

For loss, optimization, and accuracy calculation, we only consider the labels of the four nodes and ignore all remaining outputs. 

If each feature is constant across nodes, all nodes are classified the same, resulting in a $50\%$ accuracy. If the representations have rank one and all feature vectors are on a line, representations need to be adjacent in order to be classified correctly, which cannot be fulfilled for all pairs simultaneously. Since our GNNs are finite-depth, representations are not normalized, and the maximum eigenvalue may have geometric multiplicity larger than one, the models can achieve higher accuracy.

\begin{figure}[tb]
     \centering
     \begin{subfigure}[b]{0.49\textwidth}
         \centering
        \def\svgwidth{\textwidth}
         \input{images/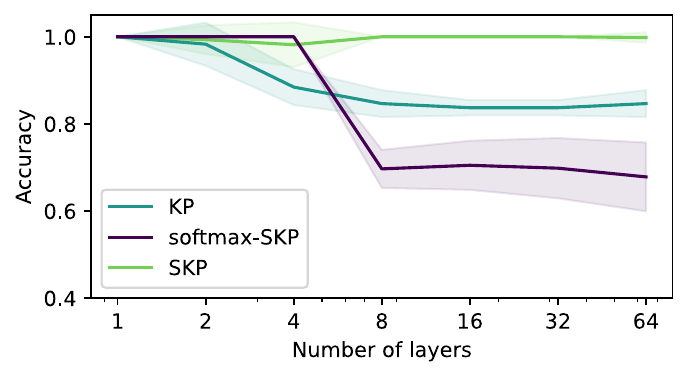_tex}
         \caption{Accuracy: mean and standard deviation.}
     \end{subfigure}
     \hfill
     \begin{subfigure}[b]{0.49\textwidth}
         \centering
        \def\svgwidth{\textwidth}
         \input{images/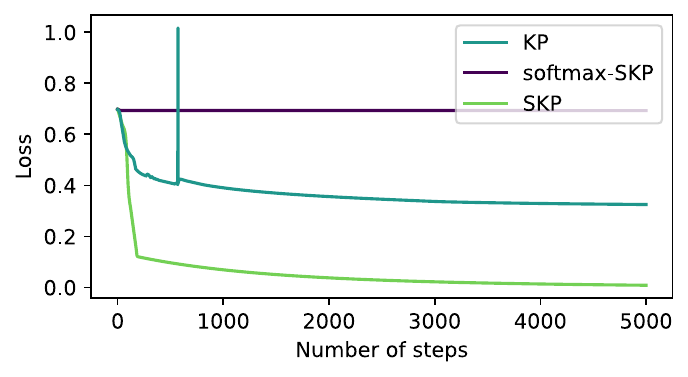_tex}
         \caption{Optimization loss for $8$ layers.}
     \end{subfigure}
        \caption{Accuracies and loss dynamics for the synthetic task.}
        \label{fig:synthetic}
\end{figure}

\paragraph{Training details}
Node representations $\mathbf{X}^{(0)}\in\mathbb{R}^{4\times 6}$ are randomly initialized from a normal distribution $x_{ij}\sim N(0,1)$ with $0$ mean and standard deviation of $1$. Aggregations and feature transformation are randomly initialized such that the norm of resulting node representations is typically not close to zero or to infinity so that the gradients do not vanish or explode. Precisely, for $\Tilde{\mathbf{A}}_1^{(k)}\in\mathbb{R}^{20\times 20}$, $\Tilde{\mathbf{A}}^{(k)}_2\in\mathbb{R}^{20\times 20}$ the weight of each generated edge $(i,j)$ is sampled from a normal distribution $\Tilde{a}_{ij} \sim \mathcal{N}(\frac{1}{|N_j|},0.05)$ with mean as one over the number of incoming nodes and standard deviation $0.05$. Similarly, the feature transformations $\mathbf{W}_1^{(k)}\in\mathbb{R}^{6\times 6}$, $\mathbf{W}^{(k)}_2\in\mathbb{R}^{6\times 6}$ are sampled from a normal distribution $\Tilde{w}_{ij} \sim \mathcal{N}(\frac{1}{3},0.05)$ with mean as one over the number of features and standard deviation $0.05$. Using $\mathbf{T}^{(k)}$ as one of the considered graph convolutions, the update function is $\vect(\mathbf{X}^{(k+1)})=\phi(\mathbf{T}^{(k)}\vect(\mathbf{X}^{(k)})$ with $\phi$ as the ReLU activation function. After $l$ iterations, we use these node representations for our three classification tasks using the affine transformation
\begin{equation}
    \hat{\mathbf{Y}} = \sigma(\mathbf{X}^{(l)}\mathbf{W}_{c} + \mathbf{b})
\end{equation}
with $\sigma$ as sigmoid activation, a feature transformation $\mathbf{W}_c\in\mathbb{R}^{6\times 3}$ and a feature-wise bias term $\mathbf{b}\in\mathbb{R}^{3}$.
We evaluated all three described update functions for $l\in[1,2,4,8,16,32,64,128]$ layers. For each method and each number of layers, variables are randomly initialized and optimized with binary cross-entropy using the Adam optimizer until the loss does not decrease for $500$ steps. 
Each experiment is executed three times for each graph, of which the best achieved accuracy is considered. We then repeat this process for $50$ random graphs and report the mean accuracy and its standard deviation.
The reproducible experiments are added as supplementary material.

We additionally run the same experimental setting for a linearized version ($\phi$ as identity function) and a version that also reuses the same aggregation across all layers $\Tilde{\mathbf{A}}_1^{(1)} = \dots = \Tilde{\mathbf{A}}_1^{(k)}$, $\Tilde{\mathbf{A}}_2^{(1)} = \dots = \Tilde{\mathbf{A}}_2^{(k)}$.
All configurations were run sequentially on a single CPU. The entire runtime was around 30 hours.

\begin{table}[]
\footnotesize
    \centering
    \begin{tabular}{c c c c c c c c c c}
        \toprule
        \# of layers & $1$ & $2$ & $4$ & $8$ & $16$ & $32$ & $64$ & $128$ \\
        \midrule
        $\mathrm{softmax-SKP}_{HL}$ & $100\pm0$ & $100\pm0$ & $82\pm19$ & $71\pm6$ & $72\pm6$ & $70\pm6$ & $67\pm7$ & $63\pm8$ \\
        $\mathrm{softmax-SKP}_L$ & $100\pm0$ & $100\pm0$ & $100\pm0$ & $70\pm4$ & $71\pm6$ & $70\pm7$ & $68\pm8$ & $65\pm8$ \\
        softmax-SKP & $100\pm0$ & $100\pm0$ & $100\pm0$ & $70\pm4$ & $71\pm6$ & $70\pm7$ & $68\pm8$ & $65\pm8$ \\
        $\mathrm{KP}_{HL}$ & $100\pm0$ & $100\pm0$ & $83\pm0$ & $83\pm0$ & $83\pm0$ & $83\pm0$ & $83\pm0$ & $82\pm4$ \\
        $\mathrm{KP}_L$ & $100\pm0$ & $100\pm0$ & $83\pm0$ & $83\pm0$ & $83\pm0$ & $83\pm0$ & $83\pm0$ & $82\pm4$ \\
        KP & $100\pm0$ & $98\pm5$ & $88\pm4$ & $85\pm3$ & $84\pm2$ & $84\pm2$ & $85\pm3$ & $82\pm4$ \\
        $\mathrm{SKP}_{HL}$ & $100\pm0$ & $100\pm0$ & $100\pm0$ & $100\pm0$ & $100\pm0$ & $100\pm0$ & $100\pm0$ & $94\pm12$\\
        $\mathrm{SKP}_L$ & $100\pm0$ & $100\pm0$ & $100\pm0$ & $100\pm0$ & $100\pm0$ & $100\pm0$ & $100\pm0$ & $96\pm10$\\
        SKP & $100\pm0$ & $99\pm3$ & $98\pm5$ & $100\pm0$ & $100\pm0$ & $100\pm0$ & $100\pm1$ & $95\pm10$\\
        \bottomrule
    \end{tabular}
    \caption{Maximum and mean$\pm$standard deviations accuracies for all considered scenarios. Subscript $L$ denotes the linearized version, subscript $HL$ denotes the version that additionally uses homogeneous aggregation matrices.}
    \label{tab:num_results}
\end{table}

\paragraph{Detailed results}
We provide additional numerical results in Figure~\ref{fig:synthetic} and Table~\ref{tab:num_results}, including the standard deviation of all settings. If the representations were constant, only a $50\%$ accuracy could be achieved, so there is slightly more relevant information in all features, even for the softmax-SKP. However, the representations are quickly converging to a low-rank state, so both softmax-SKP and KP do not solve this task a single time for eight or more layers. All models are slightly more unstable with increased depth, which is mainly a vanishing or exploding gradient issue. As this was historically an issue in other domains, similar methods can be used to solve this, e.g., normalization layers.

\end{document}